\def\eqref#1{equation~\ref{#1}}
\def\1{\bm{1}}
\DeclareMathAlphabet{\mathsfit}{\encodingdefault}{\sfdefault}{m}{sl}
\SetMathAlphabet{\mathsfit}{bold}{\encodingdefault}{\sfdefault}{bx}{n}
\newcommand{\E}{\mathbb{E}}
\newcommand{\R}{\mathbb{R}}
\newcommand{\KL}{D_{\mathrm{KL}}}
\newcommand{\JS}{D_{\mathrm{JS}}}
\DeclareMathOperator*{\argmax}{arg\,max}
\DeclareRobustCommand\onedot{\futurelet\@let@token\@onedot}
\def\@onedot{\ifx\@let@token.\else.\null\fi\xspace}
\def\eg{\emph{e.g}\onedot} 
\def\ie{\emph{i.e}\onedot}
\def\etc{\emph{etc}\onedot}
\DeclareMathOperator{\ELBO}{\mathrm{ELBO}}
\DeclareMathOperator{\LL}{\mathcal{L}}
\newcommand{\I}{\mathbf{I}}
\newcommand{\N}{\mathcal{N}}
\newcommand{\x}{\mathbf{x}}
\newcommand{\z}{\mathbf{z}}
\newcommand{\cD}{\mathcal{D}}
\newcommand{\cX}{\mathcal{X}}
\newcommand{\cJ}{\mathcal{J}}
\newcommand{\bA}{\boldsymbol{\alpha}}
\newcommand{\bB}{\boldsymbol{\beta}}
\newcommand{\bM}{\boldsymbol{\mu}}
\newcommand{\bP}{\boldsymbol{\phi}}
\newcommand{\bPs}{\boldsymbol{\psi}}
\newcommand{\bS}{\boldsymbol{\sigma}}
\newcommand{\bT}{\boldsymbol{\theta}}
\newcommand{\bTau}{\boldsymbol{\tau}}
\newtheorem{theorem}{Theorem}[section]
\newtheorem{assumption}[theorem]{Assumption}
\newtheorem{lemma}[theorem]{Lemma}
\newaliascnt{proposition}{theorem}
\newtheorem{proposition}[proposition]{Proposition}
\crefname{theorem}{Theorem}{Theorems}
\Crefname{theorem}{Theorem}{Theorems}
\crefname{proposition}{Proposition}{Propositions}
\Crefname{proposition}{Proposition}{Propositions}
\crefname{lemma}{Lemma}{Lemmas}
\Crefname{lemma}{Lemma}{Lemmas}
\crefname{assumption}{Assumption}{Assumptions}
\Crefname{assumption}{Assumption}{Assumptions}
\definecolor{lightgraybox}{rgb}{0.95,0.95,0.95}
\renewcommand{\paragraph}{%
  \@startsection{paragraph}{4}%
  {\z@}{0ex \@plus 0ex \@minus 0ex}{-1em}%
  {\hskip\parindent\normalfont\normalsize\bfseries}%
}
\crefname{algorithm}{Alg.}{Algs.}
\Crefname{algocf}{Algorithm}{Algorithms}
\crefname{section}{Sec.}{Secs.}
\Crefname{section}{Section}{Sections}
\crefname{table}{Tab.}{Tabs.}
\Crefname{table}{Table}{Tables}
\crefname{figure}{Fig.}{Fig.}
\Crefname{figure}{Figure}{Figure}
\crefname{appendix}{Appx.}{Appx.}
\Crefname{appendix}{Appendix}{Appendices}
\crefname{subfigure}{Fig.}{Figs.}
\Crefname{subfigure}{Figure}{Figures}
\definecolor{gblue}{HTML}{4285F4}
\definecolor{gred}{HTML}{DB4437}
\definecolor{ggreen}{HTML}{0F9D58}
\definecolor{gray}{gray}{0.9}
\definecolor{tgray}{gray}{0.5}
\tikzset{>=latex}
\tikzstyle{plate caption} = [
\tikzstyle{nbase} = [
\acrodef{bo}[BO]{Bayesian Optimization}
\acrodef{bbo}[BBO]{Black-Box Optimization}
\acrodef{ddpm}[DDPM]{Denoising Diffusion Probabilistic Model}
\acrodef{dlvm}[DLVM]{Deep Latent Variable Model}
\acrodef{ebm}[EBM]{Energy-Based Model}
\acrodef{ga}[GA]{Gradient Ascent}
\acrodef{gan}[GAN]{Generative Adversarial Network}
\acrodef{kde}[KDE]{Kernel Density Estimation}
\acrodef{kld}[KLD]{Kullbeck-Leibler Divergence}
\acrodef{lebm}[LEBM]{Latent-space Energy-Based Model}
\acrodef{leo}[LEO]{\textbf{\underline{L}}atent \textbf{\underline{E}}nergy-based \textbf{\underline{O}}ptimization}
\acrodef{ld}[LD]{Langevin Dynamics}
\acrodef{ldebm}[LDEBM]{Latent Diffusion Energy-Based Model}
\acrodef{mbo}[MBO]{Model-Based Optimization}
\acrodef{mcmc}[MCMC]{Markov Chain Monte Carlo}
\acrodef{mle}[MLE]{Maximum Likelihood Estimation}
\acrodef{nce}[NCE]{Noise Contrastive Estimation}
\acrodef{n2ce}[N$^2$CE]{``\textbf{\underline{N}}oisier'' \textbf{\underline{N}}oise \textbf{\underline{C}}ontrastive \textbf{\underline{E}}stimation}
\acrodef{nwj}[NWJ]{Nguyen-Wainwright-Jordan}
\acrodef{poe}[PoE]{Product-of-Expert}
\acrodef{svgd}[SVGD]{Stein Variational Gradient Descent}
\acrodef{tre}[TRE]{Telescoping Density Ratio Estimation}
\acrodef{vae}[VAE]{Variational Auto-Encoder}
\newcolumntype{C}{>{\columncolor{gray}}c}
\newcolumntype{L}{>{\columncolor{gray}}l}
\newcommand{\tabincell}[2]{\begin{tabular}{@{}#1@{}}#2\end{tabular}}
\newcommand{\iters}[1]{\textit{\,(#1)}}
\title{``Noisier'' Noise Contrastive Estimation is (Almost) Maximum Likelihood}
\author{%
    Peiyu Yu$^1$\thanks{Equal Contribution. Code and data available at \href{https://github.com/yuPeiyu98/Noisier-NCE}{https://github.com/yuPeiyu98/Noisier-NCE}.} \\
    \And Dinghuai Zhang$^{2*}$ \\
    \And Hengzhi He$^{1*}$ \\
    \And Xiaojian Ma$^{4}$ \\
    \And Sirui Xie$^1$ \\
    \And Ruiyao Miao$^1$ \\
    \And Yifan Lu$^1$ \\
    \And Yasi Zhang$^1$ \\
    \And Deqian Kong$^1$ \\
    \And Ruiqi Gao$^{3}$ \\
    \And Jianwen Xie$^{5}$ \\
    \AND
    \begin{tabular}[t]{c}
    \bfseries Guang Cheng$^1$ \qquad Ying Nian Wu$^1$
    \end{tabular}
    \AND
    \normalfont{$^1$University of California, Los Angeles\quad{}}
    \normalfont{$^2$Mila - Quebec AI Institute\quad{}}
    $^3$Google DeepMind \\
    $^4$Beijing Institute for General Artificial Intelligence (BIGAI) \quad{}
    $^5$Akool Research
}
\begin{document}

\maketitle

\begin{abstract}
\acf{nce} has fueled major breakthroughs in representation learning and generative modeling. Yet a long-standing challenge remains: accurately estimating ratios between distributions that differ substantially, which significantly limits the applicability of \ac{nce} on modern high-dimensional and multimodal datasets. We revisit this problem from a less explored perspective: the magnitude of the noise distribution. Specifically, we show that with a virtually scaled (\ie, artificially increased) noise magnitude, the gradient of the \ac{nce} objective can closely align with that of Maximum Likelihood, enabling a trajectory-wise approximation from \ac{nce} to \acs{mle}, and faster convergence both theoretically and empirically. Building on this insight, we introduce ``Noisier'' \ac{nce}, a simple drop-in modification to vanilla \ac{nce} that incurs little to no extra computational cost, while effectively handling density-ratio estimation in challenging regimes where traditional MLE and \ac{nce} struggle. Beyond improving classical density-ratio learning, ``Noisier'' \ac{nce} proves broadly applicable: it achieves strong results across image modeling, anomaly detection, and offline black-box optimization. On CIFAR-10 and ImageNet64×64 datasets, it yields 10-step and even 1-step samplers that match or surpass state-of-the-art methods, while cutting training iterations by up to half.
\end{abstract}

\section{Introduction}
\label{sec:intro}
Generative modeling of data distributions has made impressive progress in recent years, driven by the development of a rich family of deep generative models~\citep{ho2020denoising,song2020score,rombach2022high,karras2022elucidating,lipman2022flow,liu2022flow}. Among existing paradigms, \acf{nce} \citep{gutmann2010noise,gutmann2012noise}, also referred to as density–ratio estimation in some literature~\citep{sugiyama2012density}, is a powerful and fundamental framework that unifies generative modeling with discriminative representation learning~\citep{karras2019style,chen2020simple,radford2021learning,huang2025gan}. \ac{nce} reduces density estimation to a classification task: learning the ratio
\(
r(\x)=q_{*}(\x)/q_{0}(\x)
\)
between a target distribution \(q_{*}\) and a noise distribution \(q_{0}\) from samples of each, thereby enabling density estimation without explicitly modeling the target density itself~\citep{gutmann2010noise,sugiyama2012density}. 

\begin{figure}[t!]
    \centering
    \begin{subfigure}[t]{0.48\linewidth}
        \centering
        \includegraphics[width=\linewidth]{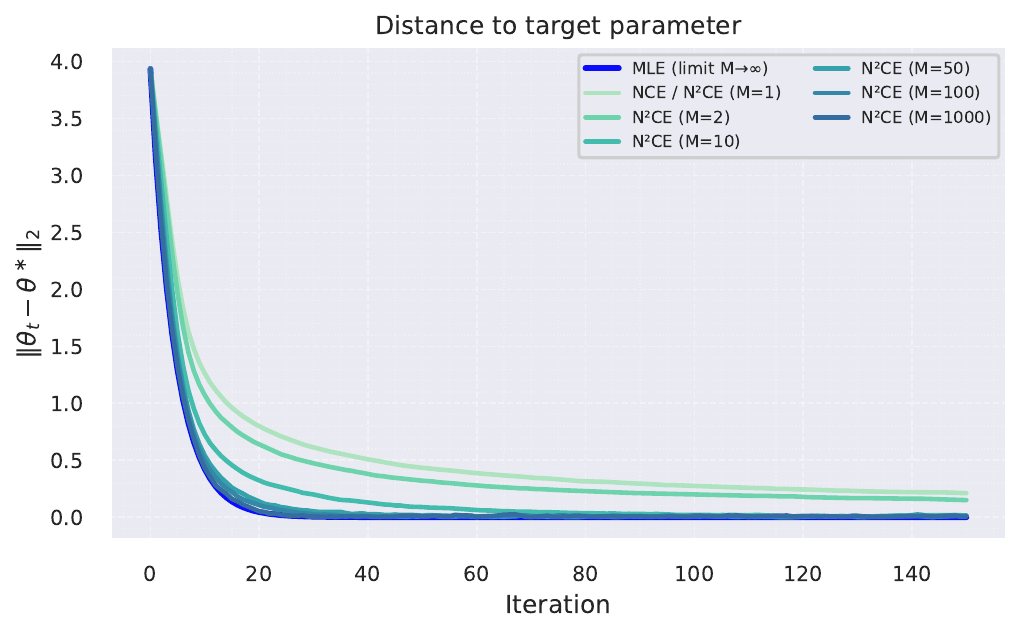}
        \caption{}
        \label{subfig:traj_equiv}
    \end{subfigure}
    \hfill
    \begin{subfigure}[t]{0.48\linewidth}
        \centering
        \includegraphics[width=\linewidth]{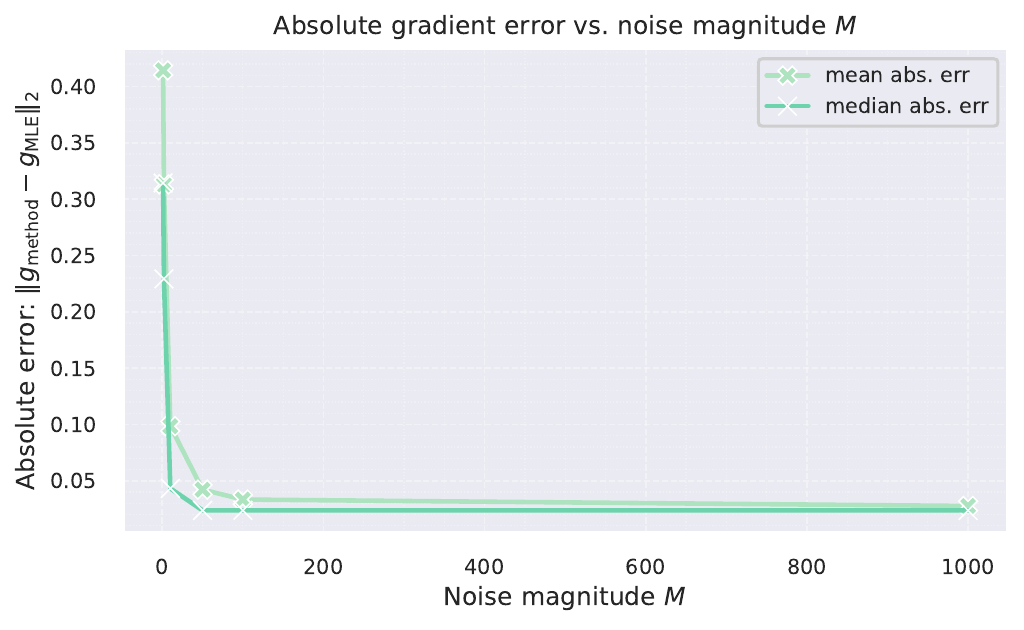}
        \caption{}
        \label{subfig:bias_decay}
    \end{subfigure}
    
    \caption{\textbf{``Noisier'' \acs{nce} gradients approach the \acs{mle} gradients.} 
    As a sanity check, we simulate the results using 2d Gaussian distributions; \textit{true} \acs{mle} gradients can be analytically computed. In \cref{subfig:traj_equiv}, we can see that $M \to \infty$ leads to a trajectory-wise convergence from \ac{nce} to \acs{mle}.
    In \cref{subfig:bias_decay}, as the noise magnitude $M$ increases, ``noisier'' \acs{nce} gradients $\nabla_{\bA}\LL_{{M}}^{\textcolor{cyan!50!green}{{\rm NCE}}}$ approach \acs{mle} gradients $\nabla_{\bA} \cJ^{\rm \color{blue} {MLE}}$; bias decaying in the order of $O(\tfrac{1}{M^2})$, which is consistent with \cref{prop:grad_finite_ver}. Further details can be found in \cref{appx:gauss_simu}.}
    \label{fig:gauss_simu}
\end{figure}

Despite its wide adoption, most existing \ac{nce}-based objectives suffer from a fundamental drawback: when the target and noise distributions differ substantially, the neural classifier can achieve near-perfect discrimination accuracy while still providing a poor estimate of the density ratio. This issue, often referred to as the \textit{density-chasm}~\citep{rhodes2020telescoping}, arises when the gap between the two distributions is large, \eg, when the KL divergence between $q_{*}$ and $q_{0}$ exceeds tens of nats. Such situations are common in modern high-dimensional or highly multimodal datasets. Although NCE estimators are asymptotically consistent, meaning they recover the true density ratio in the infinite-sample limit, the convergence rate is provably slow: even an exponential increase in sample size yields only a linear decrease in estimation error~\citep{poole2019variational,mcallester2020formal}, and the issue persists even with infinite data~\citep{liu2021analyzing}. 

In this paper, we revisit \ac{nce} from a less explored perspective: the magnitude of the noise distribution used for ratio estimation. We show that, under mild conditions, introducing a virtually scaled noise magnitude allows the gradient of the \ac{nce} objective to align with that of \acf{mle}. This observation has two key implications. First, it establishes a principled \emph{gradient-level} connection between \ac{nce} and \ac{mle}, where the \ac{nce} gradient serves as a controlled approximation to the \ac{mle} gradient. This perspective highlights that \ac{nce} can be understood as approximating the \ac{mle} in terms of the optimization trajectory rather than merely matching asymptotic error rates established by \cite{gutmann2012noise}. Second, it offers a new lens on the density-chasm problem, revealing that a ``noisier'' \ac{nce} naturally mitigates convergence issues. Building on this insight, we propose \ac{n2ce}, a simple drop-in modification to vanilla \ac{nce} that requires little to no additional computational cost. As we show in \cref{sec:exps}, this framework proves broadly applicable, demonstrating strong empirical performance in various downstream tasks 
(see \cref{fig:gauss_simu} for an illustrative example).

Our main \textbf{contributions} are as follows: i) \textit{Theoretically}, we show that, under mild conditions, a ``noisier'' \ac{nce} objective yields gradients that align with those of maximum likelihood, thus bridging the gap between \ac{nce} and \ac{mle} at the optimization trajectory level. Further, our analysis reveals that a virtually scaled noise magnitude naturally alleviates the convergence issues that arise when the target and noise distributions differ substantially. ii) \textit{Practically}, we propose \ac{n2ce}, a drop-in modification to vanilla \ac{nce} with negligible computational overhead. \ac{n2ce} achieves strong empirical performance across diverse tasks including image modeling, anomaly detection, and offline black-box optimization. It produces 10-step and even 1-step samplers that match or surpass state-of-the-art methods on the CIFAR-10 and ImageNet64×64 datasets, with up to half the training cost.

\section{Background}
\label{sec:background}
\paragraph{\texorpdfstring{\acf{nce}}{} and \texorpdfstring{\acf{nwj}}{}}
\acf{nce}~\citep{gutmann2010noise,gutmann2012noise} offers a discriminative route to training unnormalized statistical models, sidestepping the need for direct likelihood maximization. Let $q_{*}$ denote the target distribution and $q_{0}$ a known noise distribution. Rather than directly modeling $p_{\bA}$ to approximate $q_{*}$, \ac{nce} estimates the ratio
$
r_{\bA}(\x) = \frac{p_{\bA}(\x)}{q_{0}(\x)}
$
by training a classifier to distinguish between samples from $q_{*}$ and $q_{0}$. When the model is correctly specified, the optimal ratio satisfies $r_{\bA^{*}} \approx q_{*}/q_{0}$, which in turn implies $p_{\bA^{*}} \approx q_{*}$. In practice, $r_{\bA}$ is often parameterized as the exponential of a scalar-output neural network $\tilde f_{\bA}$. This parameterization naturally aligns with the energy-based view of $p_{\bA}$~\citep{lecun2006tutorial,xie2016theory,du2019implicit}, where $\tilde f_{\bA}$ incorporates the partition function. A common choice of classification loss is the logistic loss:
\begin{equation}
\label{equ:nce_obj}
\LL(\bA)
= \E_{\x\sim q_*}\!\left[ \log \frac{r_{\bA}(\x)}{1 + r_{\bA}(\x)} \right]
+ \E_{\x\sim q_0}\!\left[ \log \frac{1}{1 + r_{\bA}(\x)} \right].
\end{equation}
Numerous extensions of NCE have been developed, including generalized loss functions~\citep{pihlaja2010family,sugiyama2012density,menon2016linking,poole2019variational,liu2021analyzing} and multi-stage ratio estimation strategies~\citep{rhodes2020telescoping,xiao2022adaptive}, which greatly enrich the paradigm. In this work, we revisit NCE from the perspective of the noise distribution’s magnitude, showing that virtually scaling the noise magnitude offers a simple, effective remedy to long-standing convergence issues with little to no computational overhead.

Being equally fundamental and seminal, \acf{nwj}, addresses the ratio–estimation problem through a variational characterization of $f$-divergences~\citep{nguyen2010estimating}. This variational view reveals that likelihood–ratio estimation can be performed by maximizing a convex risk functional, yielding M-estimators that are statistically consistent and minimax-optimal under suitable regularity assumptions. Algorithmically, \ac{nwj} induces a tractable objective of the form
\begin{equation}
\label{equ:nwj_obj}
\LL^{\rm NWJ}(\bA)
= \E_{\x\sim q_*}\!\left[T_{\bA}(\x) \right]
+ \E_{\x\sim q_0}\!\left[\exp{T_{\bA}(\x)} \right],
\end{equation}
where $T_{\bA}=\log r_{\bA}$. Provably, the optimum critic obtained by maximizing \cref{equ:nwj_obj} retrieves the optimal ratio estimator. 
In contrast to \ac{nce}, which originates from a binary classification surrogate, \ac{nwj} is derived directly from convex duality and makes no explicit reference to a discriminative viewpoint.
Intriguingly, however, as we demonstrate in \cref{sec:info_theory_p}, the noise-scaled \ac{nce} objectives studied in this work establish an implicit connection between these ostensibly distinct paradigms. This perspective positions our approach as a unifying family of likelihood-ratio estimators.

\paragraph{\texorpdfstring{\acf{mle}}{}}
\acf{mle} remains the generative route: it fits probabilistic models by maximizing $\E_{q_{*}}[\log p_{\bA}(\x)]$ over data samples. A common parameterization of $p_{\bA}$~\citep{lecun2006tutorial,xie2016theory,du2019implicit} is 
\begin{equation}
\label{equ:ebm_def}
    p_{\bA}(\x) := \frac{1}{Z_{{\bA}}}\exp
                    \left(
                        f_{\bA}(\x)
                    \right)q_0(\x),
\end{equation}
where $f_{\bA}$ denotes the unnormalized log-density, $Z_{\bA}$ is the partition function, and $q_0(\x)$ is the base noise distribution. The model in \cref{equ:ebm_def} can be interpreted as an energy-based correction or exponential tilting of $q_0$~\citep{xie2016theory}.
The corresponding gradient takes the general form
\begin{equation}
\label{equ:mle_grad}
\nabla_{\bA} \cJ^{\rm MLE}({\bA})
    = \E_{q_*}[\nabla_{\bA} f_{\bA}(\x)]
    - \E_{p_{\bA}}[\nabla_{\bA} f_{\bA}(\x)].
\end{equation}
The first expectation over the data distribution is straightforward to approximate, but the second requires sampling from $p_{\bA}$. This is often intractable when the model involves an unknown partition function. In such cases, \ac{mcmc} methods such as Langevin dynamics~\citep{welling2011bayesian} are commonly employed. While theoretically valid, MCMC sampling can converge slowly in high-dimensional or multimodal settings, making MLE difficult to apply in practice~\citep{nijkamp2020anatomy}. This sampling bottleneck is a key motivation for alternatives such as \ac{nce}, which avoids direct sampling from $p_{\bA}$ by reframing density estimation as a classification problem — an idea we revisit from a less-explored perspective in the next section.

\section{``Noisier'' Noise Contrastive Estimation}
\label{sec:n2ce}
\subsection{Noise Magnitude}
\label{sec:noise_magnitude}
We now make precise what we mean by the \emph{magnitude} of the noise distribution in \ac{n2ce}. Recall the standard objective in \cref{equ:nce_obj}, where samples from the base distribution $q_{0}$ serve as negatives. More generally, we may scale the contribution of $q_{0}$ by a positive factor $M > 1$, which we refer to as the \emph{noise magnitude}. This leads to the following ``noisier'' \ac{nce} objective \citep{gutmann2012noise}:
\begin{equation}
\label{equ:n2ce_obj}
\LL_{M}({\bA}) =
    \E_{q_*(\x)} 
   \left[\log \frac{r_{{\bA}}(\x)}{M + r_{{\bA}}(\x)}\right]
    +
   M\E_{q_{0}(\x)} 
   \left[\log \frac{M}{M + r_{{\bA}}(\x)}\right].
\end{equation}
Here, $M=1$ recovers the standard NCE objective, while larger $M$ corresponds to amplifying the effective weight of the noise distribution. Intuitively, this adjustment can be seen as replacing $q_{0}$ with a \emph{virtually scaled} mixture of $M$ independent copies of $q_{0}$. As we show below, increasing $M$ has a striking effect: under mild conditions, the gradient of $\LL_{M}({\bA})$ approaches that of \ac{mle}, \ie, \cref{equ:mle_grad}. This not only establishes a principled bridge between NCE and MLE, but also provides a simple and effective remedy for the convergence issues of standard \ac{nce}.

\subsection{Scaling Effect of Noise Magnitude} 
\label{sec:scaling_M}
\begin{proposition}[Gradient approximation]
\label{prop:approx_grad}
Under mild regularity conditions,
\[
\lim_{M \to \infty} \nabla_{\bA}\LL_M({\bA})
= \E_{q_*}[\nabla_{\bA} f_{\bA}(\x)]
  - \E_{p_{\bA}}[\nabla_{\bA} f_{\bA}(\x)].
\]
\end{proposition}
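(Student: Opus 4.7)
The plan is to differentiate $\LL_M$ in closed form, pass $M\to\infty$ inside both expectations via dominated convergence, and then cancel the partition-function gradient using the relation $T_{\bA}=f_{\bA}-\log Z_{\bA}$. Writing $r_{\bA}(\x)=\exp(T_{\bA}(\x))$, so that $\nabla_{\bA}r_{\bA}=r_{\bA}\nabla_{\bA}T_{\bA}$, and differentiating the two logarithms in \cref{equ:n2ce_obj} term by term gives
$$\nabla_{\bA}\LL_M(\bA)=\E_{q_*}\!\left[\frac{M\,\nabla_{\bA}T_{\bA}(\x)}{M+r_{\bA}(\x)}\right]-\E_{q_0}\!\left[\frac{M\,r_{\bA}(\x)\,\nabla_{\bA}T_{\bA}(\x)}{M+r_{\bA}(\x)}\right].$$
The two scalar multipliers $\tfrac{M}{M+r_{\bA}(\x)}$ and $\tfrac{M r_{\bA}(\x)}{M+r_{\bA}(\x)}$ are bounded by $1$ and $r_{\bA}(\x)$ respectively and converge pointwise to those limits as $M\to\infty$, which already pins down the correct target.

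Next I would pass the limit inside each expectation using dominated convergence. For the first term the multiplier is uniformly at most $1$, so an $L^1(q_*)$ envelope on $\|\nabla_{\bA}T_{\bA}\|$ suffices, producing $\E_{q_*}[\nabla_{\bA}T_{\bA}(\x)]$. For the second, the defining identity $q_0(\x)r_{\bA}(\x)=p_{\bA}(\x)$ inherited from \cref{equ:ebm_def} lets me rewrite the dominating integrand as $p_{\bA}(\x)\|\nabla_{\bA}T_{\bA}(\x)\|$, which is integrable exactly when $\|\nabla_{\bA}T_{\bA}\|\in L^1(p_{\bA})$; the same change of measure then turns the limit of the second term into $\E_{p_{\bA}}[\nabla_{\bA}T_{\bA}(\x)]$. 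These two $L^1$ integrability requirements are what I take ``mild regularity conditions'' to encode.

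Finally, I would convert $T_{\bA}$ back to $f_{\bA}$. From \cref{equ:ebm_def} we have $r_{\bA}(\x)=p_{\bA}(\x)/q_0(\x)=\exp(f_{\bA}(\x))/Z_{\bA}$, hence $T_{\bA}(\x)=f_{\bA}(\x)-\log Z_{\bA}$ and $\nabla_{\bA}T_{\bA}(\x)=\nabla_{\bA}f_{\bA}(\x)-\nabla_{\bA}\log Z_{\bA}$, where $\nabla_{\bA}\log Z_{\bA}$ does not depend on $\x$. Since both $q_*$ and $p_{\bA}$ are probability measures, subtracting the two expectations cancels this constant shift, leaving
$$\lim_{M\to\infty}\nabla_{\bA}\LL_M(\bA)=\E_{q_*}[\nabla_{\bA}f_{\bA}(\x)]-\E_{p_{\bA}}[\nabla_{\bA}f_{\bA}(\x)],$$
which is exactly \cref{equ:mle_grad}.

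The main obstacle I anticipate is not algebraic but measure-theoretic: producing uniform-in-$M$ envelopes that justify the two applications of dominated convergence, together with the differentiation-under-the-integral step implicit in viewing $\log Z_{\bA}$ as a smooth function of $\bA$. For smooth parametric energy functions whose score $\nabla_{\bA}f_{\bA}$ admits a $p_{\bA}$-integrable (and $q_*$-integrable) bound, these checks are routine, and I expect this is precisely what the proposition's regularity hypothesis is designed to capture; once it is granted, the remaining argument is entirely algebraic and follows from the two ingredients above, the pointwise limits and the change of measure $q_0\,r_{\bA}=p_{\bA}$.
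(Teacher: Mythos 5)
Your proposal is correct and follows essentially the same route as the paper's proof: differentiate the two logarithmic terms to obtain the weights $\tfrac{M}{M+r_{\bA}}$ and $\tfrac{M r_{\bA}}{M+r_{\bA}}$, use the change of measure $q_0\,r_{\bA}=p_{\bA}$, and pass $M\to\infty$ by dominated convergence with the bound $\tfrac{M}{M+r_{\bA}}\le 1$ (the paper's integrability assumptions on the score match your $L^1$ envelopes). The only cosmetic difference is that you make explicit the final conversion from $\nabla_{\bA}\log r_{\bA}$ to $\nabla_{\bA}f_{\bA}$ via cancellation of $\nabla_{\bA}\log Z_{\bA}$, which the paper leaves implicit.
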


\textit{Sketch of Proof.}
The gradient of $\LL_M$ can be expressed as
\[
\nabla_{\bA}\LL_M({\bA})
= \int \frac{M}{M + r_{\bA}(\x)}\,
\big(q_{*}(\x)-p_{\bA}(\x)\big)\,
\nabla_{\bA} f_{\bA}(\x)\, d\x.
\]
This makes clear that, with smooth densities and ratio functions, as $M$ grows, the weight converges to $1$ and the gradient approaches the MLE form. Additional derivation details appear in the \cref{appx:proof_grad}.

\paragraph{Remark.}
Our result (\cref{prop:approx_grad}) establishes a gradient-level 
approximation to MLE. Prior works have not analyzed this: 
\citet{gutmann2012noise} focused only on asymptotic consistency 
without optimization dynamics, while \citet{mnih2012fast} treated 
discrete embeddings without convergence analysis. In contrast, 
our formulation applies to continuous settings and, crucially, 
enables the first explicit convergence guarantees for ``noisier'' NCE-type 
objectives. We demonstrate this in the exponential-family case summarized in \cref{thm:convergence_rate}.

\begin{theorem}[Convergence in exponential families (informal)]
\label{thm:convergence_rate}
Under standard regularity assumptions for exponential families (details in
\cref{appx:proof_of_conv_thm}), letting $\lambda_{\min}, \lambda_{\max}$ denote the extremal eigenvalues of the Fisher information matrix, then for sufficiently large $M$, normalized
gradient \emph{ascent} on \cref{equ:n2ce_obj} finds an iterate
within distance $\delta$ of the true parameter $\bA^{*}$ in at most
\[
T \;\le\; C \left(\frac{\lambda_{\max}}{\lambda_{\min}}\right)^{\!3}
\frac{\|\bA^{0}-\bA^{*}\|_2^{2}}{\delta^{2}}
\]
iterations for a universal constant $C>0$, i.e., there exists $t\le T$ with
$\|\bA^{t}-\bA^{*}\|_2 \le \delta$.
\end{theorem}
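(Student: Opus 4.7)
The plan is to combine the gradient approximation of \cref{prop:approx_grad} with a standard normalized-gradient-ascent (NGD) convergence analysis for strongly concave, smooth objectives. In the exponential family $p_{\bA}(\x) \propto \exp(\bA^{\top} T(\x))q_0(\x)$ with well-specified $q_{*}=p_{\bA^{*}}$, the MLE objective $\cJ^{\rm MLE}(\bA) = \E_{q_{*}}[\log p_{\bA}(\x)]$ is concave with Hessian $-I(\bA) = -\Cov_{p_{\bA}}[T(\x)]$; the regularity assumptions ensure that, uniformly in a neighborhood of $\bA^{*}$, the eigenvalues of $I(\bA)$ lie in $[\lambda_{\min},\lambda_{\max}]$, so $\cJ^{\rm MLE}$ is $\lambda_{\min}$-strongly concave and $\lambda_{\max}$-smooth there.

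The first step is to upgrade \cref{prop:approx_grad} to a quantitative uniform estimate. From the integral representation used in its proof sketch,
\[
\nabla_{\bA} \LL_M(\bA) - \nabla_{\bA} \cJ^{\rm MLE}(\bA) = -\int \frac{r_{\bA}(\x)}{M+r_{\bA}(\x)}\,\bigl(q_{*}(\x)-p_{\bA}(\x)\bigr)\,\nabla_{\bA} f_{\bA}(\x)\,d\x,
\]
together with boundedness of $r_{\bA}$ and $\nabla_{\bA} f_{\bA}$ on a compact neighborhood of $\bA^{*}$, I would derive $\|\nabla_{\bA} \LL_M(\bA) - \nabla_{\bA}\cJ^{\rm MLE}(\bA)\| \le C/M$ uniformly in $\bA$, consistent with (and in fact weaker than) the empirical $O(M^{-2})$ decay shown in \cref{subfig:bias_decay}.

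Next, I would carry out the standard NGD analysis. Let $g_t := \nabla_{\bA} \LL_M(\bA^{t})$ and $\tilde g_t := \nabla_{\bA}\cJ^{\rm MLE}(\bA^{t})$. Strong concavity and smoothness give $\langle \tilde g_t/\|\tilde g_t\|,\,\bA^{*}-\bA^{t}\rangle \ge (\lambda_{\min}/\lambda_{\max})\|\bA^{t}-\bA^{*}\|$, and the Step~1 estimate transports this to $g_t$ with a slightly degraded constant, provided $M$ is large enough that the bias is dominated by $\lambda_{\min}\delta$. With step size $\eta \asymp \lambda_{\min}\delta/\lambda_{\max}$, expanding $\|\bA^{t+1}-\bA^{*}\|^{2} = \|\bA^{t}-\bA^{*}\|^{2} - 2\eta\,\langle g_t/\|g_t\|,\bA^{*}-\bA^{t}\rangle + \eta^{2}$ and optimizing yields the per-step decrease
\[
\|\bA^{t+1}-\bA^{*}\|^{2} \le \|\bA^{t}-\bA^{*}\|^{2} - c\left(\frac{\lambda_{\min}}{\lambda_{\max}}\right)^{\!2}\delta^{2}
\]
as long as $\|\bA^{t}-\bA^{*}\|\ge \delta$. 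A telescoping sum gives the claimed $O\!\bigl((\lambda_{\max}/\lambda_{\min})^{2}\,\|\bA^{0}-\bA^{*}\|^{2}/\delta^{2}\bigr)$ iteration bound; the extra factor of $\lambda_{\max}/\lambda_{\min}$ in the theorem absorbs the degradation incurred by the NCE bias and the coupling of $M$ to the condition number required to dominate it.

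The main obstacle will be Step~1 \emph{uniformly along the entire trajectory}, with a bias bound strictly smaller than $\lambda_{\min}\delta$. This forces $M$ to grow with the condition number $\lambda_{\max}/\lambda_{\min}$ (and with $1/\delta$), and is what ultimately produces a cubic rather than quadratic dependence on $\lambda_{\max}/\lambda_{\min}$. A secondary subtlety is certifying that NGD iterates remain inside the neighborhood on which the spectral bounds $[\lambda_{\min},\lambda_{\max}]$ are valid; with unit-length NGD steps this follows from a monotonicity-of-distance argument once the bias has been controlled, but it must be checked carefully since the surrogate gradient $g_t$ is only an approximation of $\tilde g_t$.
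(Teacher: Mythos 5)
Your route is viable but genuinely different from the paper's. The paper does not pass through the MLE objective at all: in \cref{appx:proof_of_conv_thm} it works directly with the landscape of $\LL_M$ itself, computing its Hessian in the (extended) natural parameterization and showing (i) $-\LL_M$ is convex for \emph{every} $M$ in the exponential family (\cref{lem:n2ce_convex}), and (ii) as $M\to\infty$ the negative Hessian at the optimum converges to $\E[T(\x)T(\x)^{\top}]$, so its condition number is bounded by $\lambda_{\max}/\lambda_{\min}\,(1+O(1/M))$ (\cref{lem:n2ce_cond_hess}); the iteration bound with the cubic exponent is then imported wholesale from Theorem~5.1 of \citet{liu2021analyzing} for normalized gradient ascent on the \ac{n2ce} objective. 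You instead treat $\nabla\LL_M$ as a biased surrogate for the MLE gradient and run an inexact normalized-ascent analysis on the strongly concave, smooth MLE objective. That is self-contained, elementary, and in fact yields the sharper $\kappa^{2}$ rate (which implies the stated $\kappa^{3}$ bound since $\kappa\ge 1$), so your closing remark that the extra $\kappa$ ``absorbs'' the bias is unnecessary hand-waving—you can simply note the stated bound is weaker than what you prove. What your approach costs relative to the paper's: your ``sufficiently large $M$'' must scale like $1/(\lambda_{\min}\delta)$ so the uniform bias $O(C/M)$ is dominated by $\lambda_{\min}\delta$, whereas the paper's $M$-largeness enters only through the $1+O(1/M)$ conditioning correction and is decoupled from $\delta$; and you need an additional uniform integrability assumption (finiteness of $\sup_{\bA}\|\int r_{\bA}(q_*-p_{\bA})\nabla_{\bA}f_{\bA}\,d\x\|$ over the bounded parameter set, since $r_{\bA}(\x)$ is typically unbounded in $\x$, boundedness ``of $r_{\bA}$ on a compact neighborhood of $\bA^{*}$'' does not by itself give this) plus the trajectory-confinement argument you flag. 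Conversely, you miss the structural fact the paper leans on—that $-\LL_M$ is exactly convex in the natural parameter for all $M$—which is what lets its proof avoid any coupling between $M$, $\delta$, and the condition number.
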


\paragraph{Remark.}
\Cref{thm:convergence_rate} gives a \emph{polynomial} iteration complexity in the
condition number $\kappa=\lambda_{\max}/\lambda_{\min}$ for exponential families
when $M$ is sufficiently large. In contrast, for standard \ac{nce} the effective
Hessian can be ill-conditioned unless $q_{*}$ and $q_{0}$ are already close. This can lead to much worse, often effectively exponential, dependence on the gap.
Our result shows that virtually scaling the noise magnitude $M$ acts as a form of landscape regularization: the Hessian condition number of \cref{equ:n2ce_obj} remains uniformly bounded under standard assumptions \citep{liu2021analyzing}, \emph{without} requiring distributional closeness. 
We validate this intuition empirically: in 2D Gaussian simulations (\cref{fig:gauss_simu}) the trajectories indeed converge as predicted, and similar behavior persists even in high-dimensional, multimodal neural settings (\cref{sec:exps}). A full formal statement and proof are given in \cref{appx:proof_of_conv_thm}.

\subsection{Practical Error Analysis, \texorpdfstring{\acs{n2ce}}{} Family and Regularization}
\label{sec:error_n_reg}
The population results above assume exact expectations and arbitrarily large $M$.
In practice, both $M$ and the sample sizes are finite. We therefore ask:
how well does the empirical ``noisier'' \ac{nce} gradient approximate the
\ac{mle} gradient under these practical constraints? \Cref{prop:grad_finite_ver} provides an error decomposition that makes this trade-off explicit:
\begin{proposition}[Finite-$M$, finite-sample error (informal)]
\label{prop:grad_finite_ver}
Let $\widehat{\LL}_M$ be the empirical objective built from $n$ i.i.d. samples
from $q_{*}$ and $q_{0}$, and write $D_M(\x)=\frac{r_{\bA}(\x)}{M+r_{\bA}(\x)}$.
Under standard regularity assumptions (see \cref{appx:proof_finite_error}),
the mean-squared approximation error satisfies
\[
\mathbb{E}\,\big\|\nabla_{\bA}\cJ^{\rm MLE}(\bA)
-\nabla_{\bA}\widehat{\LL}_M(\bA)\big\|_2^2
\;\le\; V_u \;+\; B_u,~\mathrm{\textit{where}}
\]
\[
B_u = O\!\left(\frac{1}{M^{2}}\right),
V_u = \frac{C}{n}\!\left(
\E_{q_*}\!\|\nabla_{\bA}\log r_{\bA}\|_2^2 \;+\;
\min\!\Big\{\,M^{2}\,\E_{q_0}\!\|\nabla_{\bA}\log r_{\bA}\|_2^{2},\;
\E_{q_0}\!\|\nabla_{\bA} r_{\bA}\|_2^{2}\Big\}
\right)
\]
and $C>0$ hides benign constants.
\end{proposition}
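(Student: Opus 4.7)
The plan is to attack the finite-$M$, finite-sample bound by the usual orthogonal bias--variance split
\[
\E\|\nabla_{\bA}\cJ^{\rm MLE}(\bA) - \nabla_{\bA}\widehat{\LL}_M(\bA)\|_2^2
= \underbrace{\|\nabla_{\bA}\cJ^{\rm MLE}(\bA) - \nabla_{\bA}\LL_M(\bA)\|_2^2}_{B_u}
+ \underbrace{\E\|\nabla_{\bA}\widehat{\LL}_M(\bA) - \nabla_{\bA}\LL_M(\bA)\|_2^2}_{V_u},
\]
which cleanly separates the population-level bias arising from finite $M$ from the sampling variance arising from finite $n$. The strategy is to compute $\nabla_{\bA}\LL_M$ in closed form once, then treat each piece independently.

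\textbf{Bias step.} Differentiating \cref{equ:n2ce_obj} and using the change of measure $\E_{q_0}[r_{\bA} g] = \E_{p_{\bA}}[g]$ on the noise expectation yields the compact form $\nabla_{\bA}\LL_M(\bA) = (\E_{q_*} - \E_{p_{\bA}})[\tfrac{M}{M+r_{\bA}}\nabla_{\bA}\log r_{\bA}]$. Since $\nabla_{\bA}\cJ^{\rm MLE}(\bA) = (\E_{q_*} - \E_{p_{\bA}})[\nabla_{\bA}\log r_{\bA}]$ (the $\nabla_{\bA}\log Z_{\bA}$ contributions cancel in the difference) and $1 - \tfrac{M}{M+r_{\bA}} = D_M$, the bias is exactly $(\E_{q_*} - \E_{p_{\bA}})[D_M\,\nabla_{\bA}\log r_{\bA}]$. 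The pointwise bound $D_M \le r_{\bA}/M$, combined with Cauchy--Schwarz under the regularity assumptions (finite second moments of $r_{\bA}$ and $\|\nabla_{\bA}\log r_{\bA}\|_2$ under $q_*$ and $p_{\bA}$), then gives $\|\mathrm{bias}\|_2 = O(1/M)$ and hence $B_u = O(1/M^2)$.

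\textbf{Variance step.} Writing $\widehat{\LL}_M$ as an average of i.i.d.\ summands (one sum over $q_*$ samples, one over $q_0$ samples) and applying $\mathrm{Var}(\bar X) \le \E\|X\|_2^2/n$ componentwise leaves two integrals to control. The $q_*$ summand carries the factor $M/(M+r_{\bA}) \le 1$, immediately producing $(1/n)\,\E_{q_*}\|\nabla_{\bA}\log r_{\bA}\|_2^2$. The $q_0$ summand carries the factor $Mr_{\bA}/(M+r_{\bA})$, for which two complementary bounds are available: $\le M$ (producing $M^2\,\E_{q_0}\|\nabla_{\bA}\log r_{\bA}\|_2^2$) and $\le r_{\bA}$ (producing $\E_{q_0}\|r_{\bA}\nabla_{\bA}\log r_{\bA}\|_2^2 = \E_{q_0}\|\nabla_{\bA} r_{\bA}\|_2^2$, via $\nabla_{\bA} r_{\bA} = r_{\bA}\nabla_{\bA}\log r_{\bA}$). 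Taking the pointwise minimum of these two bounds and assembling the two sums recovers the $V_u$ stated in the proposition.

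\textbf{Main obstacle.} The delicate step is the bias: $D_M$ is only pointwise bounded by $\min(1, r_{\bA}/M)$, so the $O(1/M)$ rate is not uniform and genuinely requires moment hypotheses such as $\E_{q_*}[r_{\bA}^2\|\nabla_{\bA}\log r_{\bA}\|_2^2] < \infty$ and its $p_{\bA}$ counterpart. Cleanly folding these into the ``standard regularity assumptions'' referenced in \cref{appx:proof_finite_error}, and verifying they are consistent with the hypotheses already used in \cref{prop:approx_grad} and \cref{thm:convergence_rate}, is where I expect the real care is needed; the remaining Cauchy--Schwarz manipulations and variance bookkeeping are then routine.
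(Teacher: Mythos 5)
Your proposal is correct and follows essentially the same route as the paper's proof: the identical bias--variance decomposition via unbiasedness of $\nabla_{\bA}\widehat{\LL}_M$, the same closed-form gradient with the weight $\tfrac{M}{M+r_{\bA}}$, the bound $\tfrac{r_{\bA}}{M+r_{\bA}}\le \tfrac{r_{\bA}}{M}$ for the $O(1/M^2)$ bias, and the same pair of pointwise bounds $\tfrac{Mr_{\bA}}{M+r_{\bA}}\le\min\{M,\,r_{\bA}\}$ yielding the $\min$ in $V_u$. Your added remark that the bias step silently requires second-moment conditions on $r_{\bA}\nabla_{\bA}\log r_{\bA}$ under $q_*$ and $p_{\bA}$ is a fair point of care, but it does not change the argument, which matches \cref{appx:proof_finite_error}.
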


\paragraph{Remark.}
$B_u$ is the finite-$M$ \emph{bias};
$V_u$ is the sampling \emph{variance}. The finite-$M$ bias decays as $O(1/M^{2})$, while the variance
can grow as $O(M^{2}/n)$ unless the ratio (or its log) is sufficiently smooth
under $q_{0}$, in which case the variance term saturates at the $\E_{q_0}\|\nabla r_{\bA}\|_2^{2}$ level. 
This highlights a bias–variance trade-off shaped jointly by $M$ and the
behavior of $r_{\bA}$. Importantly, it shows that \ac{n2ce} naturally induces a continuum of empirical objectives parameterized by $M$, and that an optimal finite-$M$ estimator exists in this spectrum. Besides, controlling the \emph{roughness} of the ratio can further help stabilize the variance term and obtain robust
performance. We make the optimal-$M$ characterization precise below, and present two concrete regularizations that restrict $r_{\bA}$ accordingly. 
As shown in \cref{sec:exps}, coupling these practical choices with the \ac{n2ce} objective yields strong performance across a diverse set of tasks.
A formal statement for \cref{prop:grad_finite_ver} with more details appears in \cref{appx:proof_finite_error}.

\paragraph{The \ac{n2ce} family and U-shaped patterns}
Indeed, with a finite fixed sample size $n$, we consistently observe the U-shaped dependence on $M$ predicted by \cref{prop:grad_finite_ver}.
This emerges clearly in controlled 5-dimensional Gaussian experiments across different regimes (\cref{appx:gauss_simu}), and perhaps more impressively, reappears in high-dimensional neural settings (\cref{tab:abl_M_dxmi,tab:abl_M_sida,tab:abl_K_M}).
Furthermore, \cref{prop:grad_finite_ver} predicts that the optimal $M$ should scale no larger than $C\sqrt{n}$, for some $C$ expected to lie within $1-10$ determined by the actual behavior of $r_{\bA}$. This theoretical prediction matches our empirical findings with remarkable fidelity.
Thus, the finite-sample analysis in \cref{prop:grad_finite_ver} both explains and anticipates the observed U-shaped curves, offering a principled and practical guideline for selecting $M$ in real-world applications. We next introduce two regularization strategies that provide practical handles for controlling the behavior of $r_{\bA}$.

\paragraph{Multi-stage ratio estimation} 
One way to stabilize the gradient approximation is to adopt the multi-stage estimation strategy of \citet{rhodes2020telescoping}. This is especially useful when (i) a convenient base noise distribution is available, (ii) we prefer to use the density model directly as a discriminator or for decision making rather than as a smooth reward, and (iii) the problem scale is low- or moderately high-dimensional. The key idea is to decompose the ratio between $q_{*}$ and $q_{0}$ into a telescoping product, $\tfrac{q_{*}}{q_{0}} = \tfrac{q_{*}}{q_{K}} \tfrac{q_{K}}{q_{K-1}} \cdots \tfrac{q_{1}}{q_{0}}$, where $\{q_k\}_{k=1}^K$ are \emph{pre-specified} intermediate distributions. Each ratio now involves a pair $(q_{k+1},q_k)$ with greater overlap by design, yielding smaller values and more controllable variance at each stage. The trade-off is computational: in high-dimensional settings many stages may be needed, increasing overhead. For this reason, we mainly apply this technique in lower-dimensional tasks such as latent-space modeling.

\paragraph{Direct ratio regularization} 
A more general and convenient strategy is to add a penalty on the ratio itself, for example $\E\|\log r_{\bA}\|_2^2$, directly to the ``noisier'' \ac{nce} objective. This approach is broadly applicable to high-dimensional data and does not rely on auxiliary intermediate distributions. The trade-off is that the added penalty may bias the gradients. Nonetheless, as we show in \cref{sec:exps}, this regularization is highly effective in practice, particularly for training reward models or critics in high-dimensional settings such as on ImageNet64$\times$64 datasets.

\subsection{An Information-Theoretic Perspective}
\label{sec:info_theory_p}
Complementing the gradient-based and finite-sample analyses, an information-theoretic interpretation further reveals that the \ac{n2ce} objectives trace a continuous path between variational representations of JS and KL divergences (Sec. 7.13 in \cite{polyanskiy2025information}), thereby making explicit both the $M\!\to\!\infty$ limit of \ac{n2ce} being \ac{nwj} and its consistency with maximum-likelihood estimation.

\paragraph{Variational representations of 
$f$-divergences} Specifically, on one end of the spectrum, the KL divergence satisfies the Nguyen–Wainwright–Jordan (NWJ) representation (see \cref{sec:background}), recovering the NWJ objective in \cref{equ:nwj_obj} when $T=\log r$:
\begin{equation}
\label{equ:kld_nwj}
\KL(q_*\|q_0)
= 1 + \sup_{T}\!\left(\E_{q_*}[T(\x)]-\E_{q_0}[e^{T(\x)}]\right)
  = 1 + \sup_{r}\!\left(\E_{q_*}[\log r(\x)]-\E_{q_0}[r(\x)]\right).
\end{equation}
On the other end, the Jensen–Shannon divergence can be written as
\begin{equation}
\label{equ:jsd_nce}
\JS(q_*\|q_0)
= \log 2
+ \sup_{r}\!\left[
    \E_{q_*}\!\left(\log \frac{r}{1+r}\right)
  + \E_{q_0}\!\left(\log \frac{1}{1+r}\right)
  \right],
\end{equation}
which matches the standard NCE objective in \cref{equ:nce_obj} up to an
additive constant \citep{nowozin2016f}.  
Thus NWJ and NCE arise as variational lower bounds on KL and JS divergences,
respectively, each attaining its optimum at the true ratio $r^{*}=q_*/q_0$.

\paragraph{\ac{n2ce} objectives trace a continuous interpolation path}

The \ac{n2ce} objective in \cref{equ:n2ce_obj} interpolates
between these two extremes.  
Let $\alpha=M/(1+M)\in(0,1)$ and consider the divergence
\begin{equation}
D_{\alpha}(q_*\|q_0)
=(1-\alpha)\KL\!\left(q_*\middle\|\alpha q_0+(1-\alpha)q_*\right)
+\alpha\,\KL\!\left(q_0\middle\|\alpha q_0+(1-\alpha)p_*\right),
\end{equation}
which satisfies $D_{1/2}=\JS$ and $D_{\alpha}\to\KL$ as $\alpha\to1$.
One can now show that
\begin{equation}
\label{equ:d_alpha}
D_{\alpha}(q_*\|q_0)
= h(\alpha)
+ \sup_{r}\left[
    \E_{q_*}\!\left(\log\frac{r}{M+r}\right)
  + M\,\E_{q_0}\!\left(\log\frac{M}{M+r}\right)
  \right],
\end{equation}
which is identical (up to $h(\alpha)$, the binary entropy function) to the \ac{n2ce} objective: (i) at $M=1$ ($\alpha=1/2$), \ac{n2ce} reduces to the usual NCE/JS bound;  
(ii) as $M\to\infty$ ($\alpha\to1$),
$
\log\tfrac{Mr}{M+r}\to\log r,
M\log\tfrac{M}{M+r}\to -r,
$
so the \cref{equ:n2ce_obj} converges to $\E_{q_*}[\log r]-\E_{q_0}[r] + {\rm const}$, the NWJ form of KL; maximizing this variational bound yields the same population optimum as maximum likelihood. We further provide empirical evidence in \cref{appx:gauss_simu} that \ac{n2ce} indeed approaches \ac{nwj} when $M$ is sufficiently large ($M=1e9$). With \cref{sec:scaling_M}, these results jointly explain why letting $M\to\infty$
recovers maximum-likelihood learning, at both the divergence level and
the gradient-dynamics level.

\section{Experiments}
\label{sec:exps}
We organize our empirical study around three main questions:  
(i) How does the proposed objective compare with standard baselines such as pure \ac{mle} and vanilla \ac{nce}?  
(ii) Do the advantages of ``noisier'' \ac{nce} transfer to downstream tasks?  
(iii) How do key hyperparameters, in particular the noise magnitude $M$, affect performance? 

To answer these, we conduct experiments across a broad range of datasets and benchmarks, spanning diverse model families. The results consistently show that our method outperforms existing approaches, while providing new insights into the role of noise scaling. Full experimental settings, implementation details and baseline descriptions are provided in \cref{appx:tr_details_n_arch,appx:add_exp}.

\subsection{Learning \ac{lebm} with \ac{n2ce}}
\label{sec:exp_learn_lebm}

\begin{table*}[!htbp]
\centering
\begin{minipage}{0.50\linewidth}
\caption{\textbf{FID($\downarrow$) on different datasets}. We highlight 
{\color{tgray} our model}, the $\mathbf{1^{\rm st}}$ and \underline{$2^{\rm nd}$} performances; 
tables henceforth follow this format. Numbers from the first six rows are from \citet{yu2024learning}. 
\texttt{nz} denotes the latent dimension. \texttt{M} and \texttt{K} denote the noise magnitude and num. of stages for ratio estimation, respectively.}
\vspace{8pt}
\label{table:fid}
\resizebox{\linewidth}{!}{
\begin{tabular}{lcccc}
    \toprule
      \multirow{2}{*}{Model}
     & \multicolumn{1}{c}{SVHN} 
     & \multicolumn{1}{c}{CelebA}   
     & \multicolumn{1}{c}{CIFAR10}
     & \multicolumn{1}{c}{CelebAHQ}\\
     {}
     & \multicolumn{1}{c}{\texttt{nz=100}} 
     & \multicolumn{1}{c}{\texttt{nz=100}}   
     & \multicolumn{1}{c}{\texttt{nz=128}}
     & \multicolumn{1}{c}{\texttt{nz=512}}\\
\midrule
ABP
&49.71&51.50&90.30&160.21\\
ABP-LEBM$^*$
&29.44&37.87&\textbf{70.15}&133.07\\
SRI
&44.86&61.03&-&-\\
SRI (L=5)
&35.32&47.95&-&-\\
\midrule
2s-VAE &42.81&44.40&\underline{72.90}
&-\\
RAE&40.02&40.95&74.16
&-\\
\midrule
VAE &34.81 &47.84 &110.37 &154.60\\
\cmidrule(l){1-1}
w/ \ac{mle}-\ac{lebm} &32.74 &40.24 &90.54 &111.11\\
\cmidrule(l){1-1}
w/ \ac{nce}-\ac{lebm} &30.71 &39.61 &92.83 &118.84\\
\cmidrule(l){1-1}
w/ \ac{n2ce}-\ac{lebm} \\
\rowcolor{gray}
\texttt{M=100,K=1} &\underline{26.84} &33.05 &77.35 &\underline{101.71}\\
\rowcolor{gray}
\texttt{M=100,K=3}& \textbf{25.63}& \textbf{31.09}& 77.05& \textbf{95.66}\\
\bottomrule
\end{tabular}}
\end{minipage}\hfill
\begin{minipage}{0.47\linewidth}
\caption{\textbf{AUPRC($\uparrow$) scores for unsupervised anomaly detection on MNIST}. 
Baseline numbers are taken from \citet{yoon2023energy,yu2024learning}. 
Full results with variances in found in \cref{appx:anomaly_det}.}
\vspace{8pt}
\label{table:auprc}
\resizebox{\linewidth}{!}{
\begin{tabular}{lccccc}
\toprule
Heldout Digit & 1 & 4 & 5 & 7 & 9 \\
\midrule
AE
&0.062 &0.204 &0.259 &0.125 &0.113\\
VAE
&0.063 &0.337 &0.325 &0.148 &0.104\\
ABP
&$0.095$ & $0.138$
&$0.147$ & $0.138$ & $0.102$ \\
IGEBM
&$0.101$ & $0.106$
&$0.205$ & $0.100$ & $0.079$ \\
MEG
&$0.281$ & $0.401$ 
&$0.402$ & $0.290$ & $0.342$ \\
BiGAN-$\sigma$
&$0.287$ & $0.443$ 
&$0.514$ & $0.347$ & $0.307$ \\
ABP-LEBM 
&$0.336$ & $0.630$
&$0.619$ & $0.463$ & $0.413$ \\
JVAEBM 
&$0.297$ & $0.723$
&$0.676$ & $0.490$ & $0.383$ \\
Adaptive CE
&$0.531$ 
&$0.729$
&$0.742$ 
&$0.620$ 
&$0.499$\\
NAE
&$0.802$ 
&$0.648$
&$0.716$ 
&$0.789$ 
&$0.441$\\
MPDR-S
&$0.764$ 
&$0.823$
&$0.741$ 
&$\textbf{0.857}$ 
&$0.478$\\
MPDR-R
&$0.844$ 
&$0.711$
&$0.757$ 
&$\underline{0.850}$ 
&$0.569$\\
\midrule
DAMC
&$0.684$ 
&$0.911$ 
&$0.939$ 
&$0.801$ 
&$\underline{0.705}$ \\
DAMC-\ac{nce}
&$0.702$ 
&$0.829$ 
&$0.764$ 
&$0.605$ 
&$0.502$ \\
\cmidrule(l){1-1}
DAMC-\ac{n2ce} \\
\rowcolor{gray}
\texttt{M=100,K=1}
&$\underline{0.910}$ 
&$\underline{0.911}$ 
&$\underline{0.935}$ 
&$0.779$ 
&$0.699$ \\
\rowcolor{gray}
\texttt{M=100,K=3}
&$\textbf{0.959}$
&$\textbf{0.935}$
&$\textbf{0.959}$
&$0.845$  
&$\textbf{0.854}$  \\
\bottomrule
\end{tabular}}
\end{minipage}
\end{table*}

As a proof of concept, we first conduct a set of lightweight experiments on latent energy-based models (\acp{lebm}) across image datasets including CIFAR-10 \citep{krizhevsky2009learning}, MNIST \citep{lecun1998mnist}, SVHN \citep{netzer2011reading}, CelebA64 \citep{liu2015deep}, and CelebAMask-HQ \citep{CelebAMask-HQ}. Learning an \ac{lebm} \citep{pang2020learning,yu2022latent} can be viewed as fitting an unnormalized density model in the latent space of a \acs{vae} (\cref{appx:dlvm_alg_n_implem}). Prior approaches largely rely on \ac{mcmc}-based \ac{mle}-style training \citep{pang2020learning,du2020improved}, where convergence and stability are often problematic. This makes \acp{lebm} a natural and tractable testbed for prototyping our method. In these experiments, we apply ratio decomposition as a regularization (\cref{sec:error_n_reg}) and follow the training and evaluation protocols in \citet{pang2020learning} (see \cref{appx:img_exp_arch}).  

\paragraph{Generative image modeling}
We evaluate the quality of the learned \ac{lebm} with generated images measured by FID scores \citep{heusel2017gans} (see \cref{table:fid}). For image generation with \ac{lebm}, we perform 100-step short-run \ac{ld} to draw latent vectors from the learned models, and map the latent vectors to the image space with the decoder network.   

We observe that: first, \acp{lebm} trained with \ac{n2ce} (\ie, rows with \texttt{M=100}) consistently outperform those trained with vanilla \ac{nce}, with multi-stage estimation providing further gains that validate our analysis. Second, \acp{lebm} trained with \ac{n2ce} show substantial improvements over those trained with \ac{mle} and short-run \ac{ld} (denoted \ac{mle}-\ac{lebm}). These results confirm that \ac{n2ce} is a more reliable and effective objective that addresses the issues inherent to vanilla \ac{nce} and \ac{mcmc}-based \ac{mle}. Finally, the improvement becomes more pronounced as the latent dimension \texttt{nz} increases, especially on the CelebAMask-HQ dataset with \texttt{nz=512}. This suggests that our proposal scales more gracefully to high-dimensional, multimodal targets.

\paragraph{Anomaly Detection}
To further highlight the practical advantages of our approach in highly multimodal settings, we evaluate anomaly detection on MNIST following the setup in \citet{zenati2018efficient}. Specifically, we train models with one digit held out as anomalous, focusing on digits 1, 4, 5, 7, and 9—cases known to be especially challenging. We build on DAMC \citep{yu2024learning} for posterior inference, replacing its original prior with an \ac{lebm} learned via \ac{n2ce}. Since empirical posteriors from DAMC are sharp and highly multimodal, this serves as a demanding testbed. \Cref{table:auprc} reports AUPRC scores averaged over 10 trials. Our method yields consistent and often substantial improvements over baselines, achieving a strong overall performance across the hardest digits. Full experimental details appear in \cref{appx:img_exp_arch,appx:anomaly_det}.

\subsection{Reward and Critic Learning for Diffusion Distillation}
\label{sec:exp_diffusion}

Building on the success of latent-space modeling, we next consider
more challenging high-dimensional settings in the ambient image space. In
particular, we study two representative scenarios for distilling diffusion
samplers: (i) learning energy-based rewards for diffusion fine-tuning (DxMI) \citep{yoon2024maximum}, and (ii) learning critics for adversarial
distillation (SiD$^2$A) \citep{zhou2024adversarial}. In both cases, we simply replace the original objectives with our ``noisier'' \ac{nce} combined with direct ratio regularization (\cref{sec:error_n_reg}), while using the same architectures and critical hyperparameters as baseline methods. Full implementation details appear in \cref{appx:reward_n_critic_learn}.

In the DxMI setup \citep{yoon2024maximum}, \ac{n2ce} yields substantial improvements
over vanilla \ac{nce}-like variants, previous methods and even the teacher models with full NFE (\cref{tab:dxmi_n2ce}),
underscoring its effectiveness in training rewards for diffusion
distillation. In the SiD$^2$A setting \citep{zhou2024adversarial}, our method greatly outperforms vanilla NCE variants, and not only matches but can surpass strong baselines (\cref{tab:cifar10-merged-iters,tab:imagenet64-merged-iters}), notably with up to \emph{half the training iterations}. Together, these results confirm that \ac{n2ce} scales to high-dimensional tasks while improving training efficiency across distinct distillation regimes. We provide additional ablation studies on $M$ on CIFAR-10 in \cref{appx:reward_n_critic_abl}.

\begin{table*}[t]
\centering
\caption{\textbf{CIFAR-10 (DDPM backbone) and ImageNet64$\times$64 (EDM backbone) results} shown side-by-side. First six rows are from \cite{yoon2024maximum}. $^\dagger$ highlights the starting point of DxMI fine-tuning.}
\vspace{8pt}
\label{tab:dxmi_n2ce}
\scriptsize
\setlength{\tabcolsep}{5pt}
\renewcommand{\arraystretch}{1.12}

\begin{tabular}{lccc @{\hspace{14pt}} lcccc}
\toprule
\multicolumn{4}{l}{\textbf{-- Backbone: DDPM}} & \multicolumn{5}{l}{\textbf{-- Backbone: EDM}} \\
\cmidrule(lr){1-4}\cmidrule(lr){5-9}
Method & NFE & FID ($\downarrow$) & Rec. ($\uparrow$) &
Method & NFE & FID ($\downarrow$) & Prec. ($\uparrow$) & Rec. ($\uparrow$) \\
\midrule
DDPM            & 1000 & 3.21 & 0.57  &
EDM (Heun)            & 79 & \underline{2.44} & 0.71 & 0.67 \\
FastDPM$^\dagger$    &   10 & 35.85 & 0.29 &
EDM (Ancestral)$^\dagger$ & 10 & 50.27 & 0.37 & 0.35 \\
\cmidrule(lr){1-4}\cmidrule(lr){5-9}
DDIM             &   10 & 13.36 & --   &
Consistency Model   &  2 & 4.70 & 0.69 & 0.64 \\
SFT-PG           &   10 & 4.82  & 0.606 &
Consistency Model   &  1 & 6.20 & 0.68 & 0.63 \\
DxMI             &   10 & 3.19  & 0.625 &
DxMI              & 10 & 2.68 & \underline{0.777} & 0.574 \\
DxMI + Value Guidance &   10 & \underline{3.17}  & \underline{0.623} &
DxMI + Value Guidance & 10 & 2.67 & \textbf{0.780} & 0.574 \\
\cmidrule(lr){1-4}\cmidrule(lr){5-9}
DxMI + \ac{nce} (\texttt{M=1})       &   10 & 3.93 & 0.623 &
DxMI + \ac{nce} (\texttt{M=1})       & 10 & 2.69 & 0.756 & \underline{0.585} \\
\rowcolor{gray}
DxMI + \ac{n2ce} (\texttt{M=100})      &   10 & \textbf{2.99} & \textbf{0.638} &
DxMI + \ac{n2ce} (\texttt{M=100})      & 10 & \textbf{2.23} & 0.757 & \textbf{0.599} \\
\bottomrule
\end{tabular}
\end{table*}

\begin{table*}[t]
\centering
\setlength{\tabcolsep}{3.6pt}
\renewcommand{\arraystretch}{1.08}

\begin{minipage}[t]{0.49\textwidth}
\captionsetup{type=table,width=\linewidth}
\caption{\textbf{CIFAR-10 results}. ``FID-U/C (iters)'' shows uncond./cond. FID and, when provided, corresponding training iterations in parentheses. Baseline numbers from \citep{zhou2024adversarial,zheng2025revisiting}.}
\label{tab:cifar10-merged-iters}
\scriptsize
\begin{tabularx}{\linewidth}{>{\raggedright\arraybackslash}X c c c c}
\toprule
Method & NFE & FID-U $\downarrow$ & IS $\uparrow$ & FID-C $\downarrow$ \\
\midrule
\multicolumn{5}{l}{\textbf{Training From Scratch}} \\
\midrule
DDPM            & 1000 & 3.17              & --    & --    \\
DDIM            &  100 & 4.16              & --    & --    \\
Score SDE       & 2000 & --                & --    & 2.20  \\
DPM\mbox{-}Solve\mbox{-}3 & 48 & --        & --    & 2.65  \\
EDM             &   35 & 1.98              & --    & 1.79  \\
BigGAN          &    1 & --                & -- & 14.73    \\
StyleGAN2\mbox{-}ADA & 1 & 2.92            & 9.82  & 2.42  \\
SAN             &    1 & \textbf{1.36}     & --    & --    \\
iCT    &    1 & 2.83              & 9.54  & --    \\
iCT   &    2 & 2.46              & 9.80  & --    \\
iCT-deep    &    1 & 2.51              & 9.76  & --    \\
iCT-deep    &    2 & 2.24              & 9.89  & --    \\
\midrule
\multicolumn{5}{l}{\textbf{Post-training}} \\
\midrule
PD              & 1 & 9.12                 & --    & --    \\
DFNO            & 1 & 3.78                 & --    & --    \\
CD     & 1 & 3.55                 & --    & --    \\
CD    & 2 & 2.93                 & --    & --    \\
CTM    & 1 & 1.98                 & --    & 1.73  \\
CTM   & 2 & 1.87                 & --    & 1.63  \\
DMD             & 1 & 2.62                 & --    & --    \\
D2O\mbox{-}F    & 1 & 1.54        & 10.10 & 1.44 \\
\cmidrule(l){1-5}
SiD                 & 1 & 1.92                 & 9.98 & 1.71 \\
SiD$^2$A      & 1 & \underline{1.50\iters{30K}} & 10.19    & 1.40\iters{50K}   \\
SiD + \ac{nce} (\texttt{M=1})     & 1 & 1.53\iters{30K}          & \underline{10.20}    & 1.46\iters{30K}   \\
\rowcolor{gray}
SiD + \ac{n2ce} (\texttt{M=50})     & 1 & \textbf{1.45\iters{20K}}          & \textbf{10.23}    & \textbf{1.39 (20K)}   \\
\bottomrule
\end{tabularx}
\end{minipage}
\hfill
\begin{minipage}[t]{0.49\textwidth}
\captionsetup{type=table,width=\linewidth}
\caption{\textbf{Conditional ImageNet 64$\times$64 results.}}
\label{tab:imagenet64-merged-iters}
\scriptsize
\begin{tabularx}{\linewidth}{>{\raggedright\arraybackslash}X c c c c}
\toprule
Method & NFE & FID $\downarrow$ (iters) & Prec.\ $\uparrow$ & Rec.\ $\uparrow$ \\
\midrule
\multicolumn{5}{l}{\textbf{Training From Scratch}} \\
\midrule
RIN          & 1000 & 1.23               & --    & --    \\
DDPM         &  250 & 11.00              & 0.67  & 0.58  \\
ADM          &  250 &  2.07              & 0.74  & 0.63  \\
EDM          &   79 &  2.64              & --  & --  \\
iCT  &    1 &  4.02              & 0.70  & 0.63  \\
iCT  &    2 &  3.20              & 0.73  & 0.63  \\
iCT-deep &    1 &  3.25              & 0.72  & 0.63  \\
iCT-deep &    2 &  2.77              & 0.74  & 0.62  \\
BigGAN-deep  &    1 &  4.06              & \textbf{0.79}  & 0.48  \\
StyleGAN2-XL &    1 &  1.51              & --    & --    \\
\midrule
\multicolumn{5}{l}{\textbf{Post-training}} \\
\midrule
PD           & 1 & 15.39                 & --    & --    \\
BOOT         & 1 & 16.3                 & 0.68  & 0.36  \\
DFNO         & 1 &  7.83                 & --    & 0.61    \\
CD           & 1 &  6.20                 & 0.68  & 0.63  \\
CD           & 2 &  4.79                 & 0.69  & \textbf{0.64} \\
CTM      & 1 &  1.92                 & 0.70 & 0.57 \\
CTM      & 2 &  1.73                 & 0.64  & 0.57  \\
DMD          & 1 &  2.62                 & --    & --    \\
sCD-S    & 1 &  2.97                 & --  & --  \\
sCD-S  & 2 &  2.07                 & --  & --  \\
DMD2         & 1 &  1.51                 & --    & --    \\
MSD (DM)     & 1 &  2.37                 & --    & --    \\
MSD (ADM)    & 1 &  1.20                 & --    & --    \\
D2O\mbox{-}F & 1 &  \underline{1.16} & \underline{0.75} & 0.60 \\
\cmidrule(l){1-5}
SiD          & 1 & 1.52                 & 0.74 & \underline{0.63} \\
SiD$^2$A     & 1 & \textbf{1.11\iters{20K}}                 & \underline{0.75} & 0.62 \\
SiD + \ac{nce} (\texttt{M=1}) & 1 & 1.28\iters{15K}          & \underline{0.75}   & 0.62   \\
\rowcolor{gray}
SiD + \ac{n2ce} (\texttt{M=50}) & 1 & \textbf{1.11\iters{10K}}  & \underline{0.75}   & \underline{0.63}   \\
\bottomrule
\end{tabularx}
\end{minipage}

\end{table*}

\subsection{Offline Black-Box Optimization}
\label{sec:exp_bbo}

\begin{figure*}[!ht]
\centering
\begin{minipage}{.45\textwidth}
    \includegraphics[width=\linewidth]{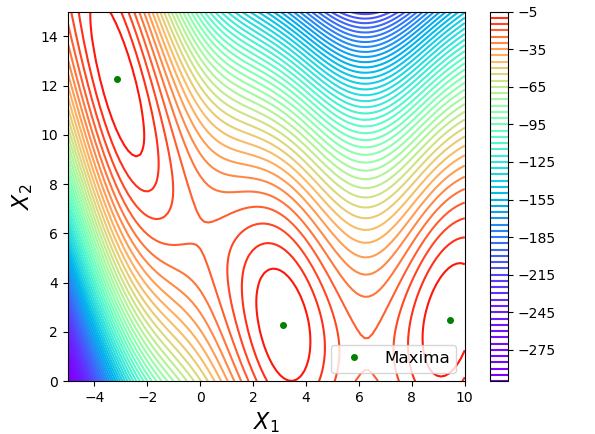}
    \vspace{1pt}
    \captionof{figure}{\textbf{Branin function level sets.}}
    \label{fig:branin}
\end{minipage}\hfill
\begin{minipage}{.52\textwidth}
    \refstepcounter{figure}
    \setcounter{subfigure}{0}
    \centering
    \begin{subfigure}{0.22\linewidth}
        \includegraphics[width=\linewidth]{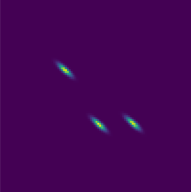}
        \caption{GT}
        \label{subfig:gt}
    \end{subfigure}
    \begin{subfigure}{0.22\linewidth}
        \includegraphics[width=\linewidth]{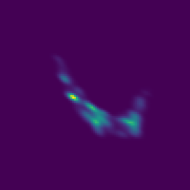}
        \caption{$\LL_{M=1}$}
        \label{subfig:m_1}
    \end{subfigure}
    \begin{subfigure}{0.22\linewidth}
        \includegraphics[width=\linewidth]{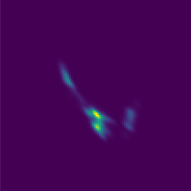}
        \caption{$\LL_{M=10}$}
        \label{subfig:m_10}
    \end{subfigure}
    \begin{subfigure}{0.22\linewidth}
        \includegraphics[width=\linewidth]{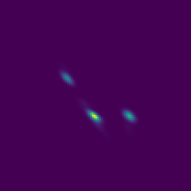}
        \caption{$\LL_{M=100}$}
        \label{subfig:m_100}
    \end{subfigure}

    \begin{subfigure}{0.22\linewidth}
        \includegraphics[width=\linewidth]{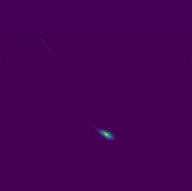}
        \caption{DDOM}
        \label{subfig:cdm}
    \end{subfigure}
    \begin{subfigure}{0.22\linewidth}
        \includegraphics[width=\linewidth]{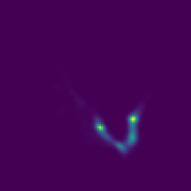}
        \caption{G-SV}
        \label{subfig:g_sv}
    \end{subfigure}
    \begin{subfigure}{0.22\linewidth}
        \includegraphics[width=\linewidth]{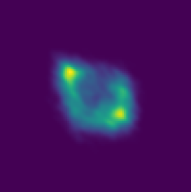}
        \caption{\acs{mle}-LD}
        \label{subfig:mle_ld}
    \end{subfigure}
    \begin{subfigure}{0.22\linewidth}
        \includegraphics[width=\linewidth]{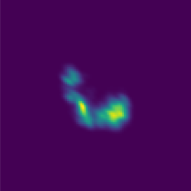}
        \caption{\acs{mle}-SV}
        \label{subfig:mle_svgd}
    \end{subfigure}
    \vspace{8pt}
    \addtocounter{figure}{-1}
    \captionof{figure}{\textbf{Viz. of Branin optimal samples.} (b--d) are results of our method. 
    G-SV denotes the Gaussian prior model sampled with \acs{svgd}. 
    MLE-LD and MLE-SV denote the model trained by \ac{mle} sampled with 
    \ac{ld} and \ac{svgd}, respectively.}
    \label{fig:M_viz}
\end{minipage}
\end{figure*}

Beyond image modeling, we also explore the broader impact of our proposed technique through the lens of offline \ac{bbo}. This task evaluates not only how well a model captures the training
distribution, but also its ability to internalize structural regularities and
generalize to unseen queries. In the \textit{offline} \ac{bbo} setting \citep{trabucco2022design}, one is given a
pre-collected dataset $\cD=\{(\x_i, y_i)\}_{i=1}^n$ of inputs and their
black-box function values $h(\x_i)=y_i$. At test time, the optimizer may access
a limited budget $Q$ of evaluations of the unknown function $h$ and must return
candidates with high observed values. This setup is particularly well-suited
for assessing generative approaches to optimization, where the function value
$y$ acts as a conditioning signal
\citep{brookes2019conditioning,kumar2020model,krishnamoorthy2023diffusion}.

We instantiate this view by parameterizing a conditional model
$p_{\bT}(\x \mid y)$ via latent variables $\z$, using
\[
p_{\bT}(\x \mid y) \;\propto\;
\E_{p(\z \mid y)}\!\left[p_{\bB,\x}(\x \mid \z)\right],\quad
p(\z \mid y) \propto p_{\bB,y}(y\mid \z)\,p_{\bA}(\z),
\]
and employing stochastic samplers such as \acf{ld} or \acf{svgd} \citep{liu2016stein} for conditional
sampling. To train this model, we use a VAE with an \ac{lebm} prior that jointly
models $(\x,y)$. Full problem statement, implementation details, and extended
results are deferred to \cref{appx:bbo_general}.

\subsubsection{2D Branin Function}
\label{sec:exp_branin}

\begin{wrapfigure}{r}{0.52\textwidth}   
\vspace{-12pt}
\centering

\begin{minipage}{\linewidth}
  \captionsetup{font=small}
  \captionof{table}{\textbf{Results on the top-10\%-tile-removed Branin task (avg. over 5 runs).}
  \textsc{Opt} denotes the global optimum.}
  \vspace{8pt}
  \label{tab:10p_removed_branin}
  \resizebox{\linewidth}{!}{
  \begin{tabular}{ccccC}
    \toprule
    $\cD_{\rm max}$/Opt. & GA & BONET & DDOM & Ours \\
    \midrule
    -6.1/-0.4
    & $-4.0_{\pm 4.3}$ 
    & $-1.8_{\pm 0.8}$ 
    & $\underline{-1.6_{\pm 0.1}}$ 
    & $\mathbf{-0.4}_{\pm 0.1}$ \\
    \bottomrule
  \end{tabular}}
\end{minipage}

\vspace{0.6em} 

\begin{minipage}{\linewidth}
  \captionsetup{font=small}
  \begin{subfigure}[t]{0.48\linewidth}
    \includegraphics[width=\linewidth]{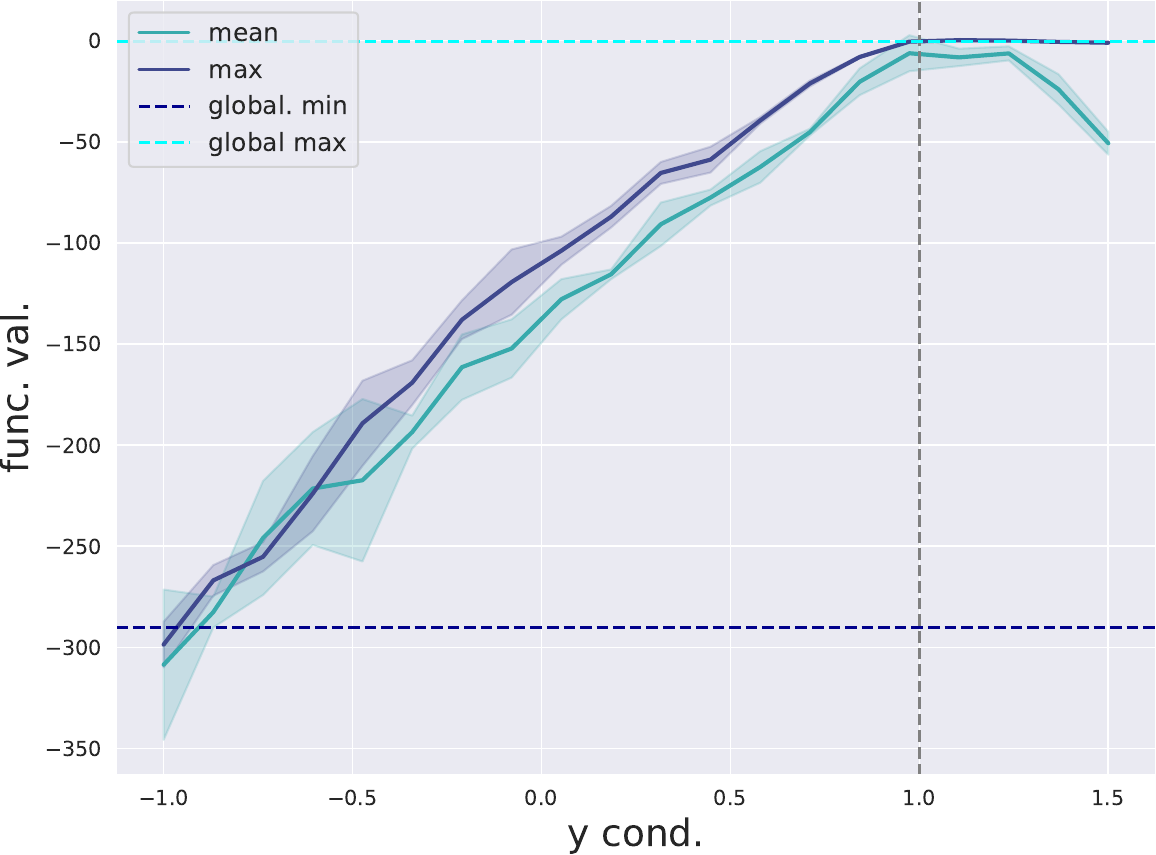}
    \caption{w/ top-10\% points}
    \label{subfig:y_all}
  \end{subfigure}\hfill
  \begin{subfigure}[t]{0.48\linewidth}
    \includegraphics[width=\linewidth]{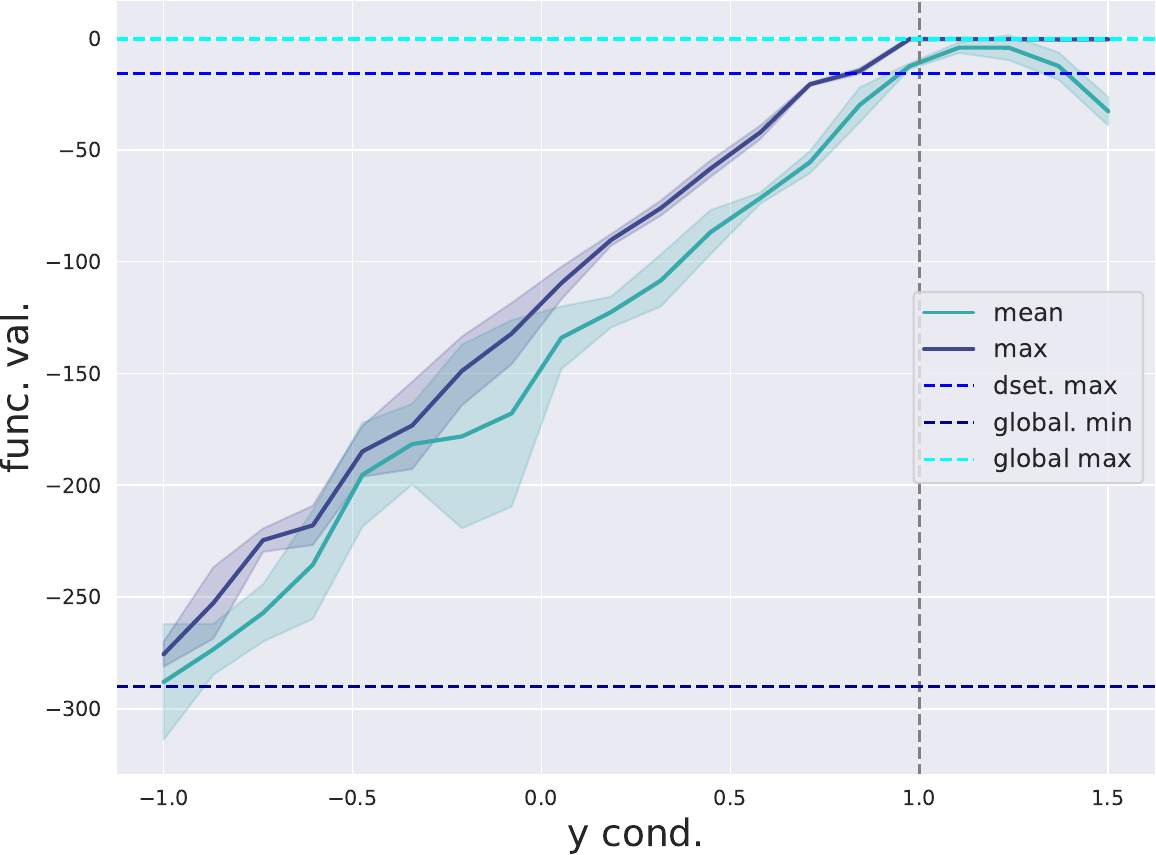}
    \caption{w/o top-10\% points}
    \label{subfig:y_remove10p}
  \end{subfigure}
  \vspace{8pt}
  \captionof{figure}{\textbf{Results on uniformly sampled Branin w/ and w/o top-10\% points.} Zoom in for details.}
  \label{fig:branin_y}
\end{minipage}

\vspace{-12pt}
\end{wrapfigure}

We begin with the 2D Branin function (\cref{fig:branin}) as a proof of concept.
This task serves as a sanity check to validate that our method can both
faithfully capture high-value modes and generalize beyond the best points
observed in the offline dataset (see \cref{appx:branin} for details). Following
DDOM \citep{krishnamoorthy2023diffusion}, we perform a held-out offline
optimization experiment. Specifically, we uniformly sample $N=5000$ points from
the domain $[-5,10]\times[0,15]$ to construct the offline dataset, then remove
the top $10\%$ by function value. This ensures that high-value contours are
retained, but the exact global optima are excluded during training. For evaluation, we adopt a query budget $Q=128$ and use \ac{svgd} for sampling.
\Cref{tab:10p_removed_branin} shows that our method generalizes well beyond the
maximum value in the dataset $\cD_{\rm max}$, approaching the true optimum and
outperforming gradient-based GA as well as modern generative baselines BONET
\citep{mashkaria2023generative} and DDOM \citep{krishnamoorthy2023diffusion}.

We further visualize conditional samples corresponding to high function values
(\cref{fig:M_viz}), and examine the correlation between target conditions (desired function values) and
achieved function values (\cref{fig:branin_y}). For visualization, we draw $2560$ samples from each model. In \cref{subfig:gt}, the distribution of true
optima is emulated with a 3-component GMM centered at the global minima. As
shown in the first row of \cref{fig:M_viz}, increasing $M$ in
\cref{equ:n2ce_obj} produces samples that better approximate the ground-truth
distribution, while models trained with MCMC-based \ac{mle} or vanilla \ac{nce}
(\texttt{M=1}) lag behind. This again supports our analysis that a sufficiently
large $M$ is critical for accurate gradient approximation. 

\begin{table*}[!htb]
\centering
\caption{\textbf{Normalized results on design-bench with $Q=256$ (avg. over 5 runs).}}
\vskip 0.1in
\resizebox{\textwidth}{!}{
\begin{tabular}{llllllllc}
\toprule
\multicolumn{1}{c}{\textbf{BASELINE}}  &\multicolumn{1}{c}{\textbf{TFBIND8}} 
&\multicolumn{1}{c}{\textbf{TFBIND10}}
&\multicolumn{1}{c}{\textbf{CHEMBL}} 
&\multicolumn{1}{c}{\textbf{SUPERCON.}}
&\multicolumn{1}{c}{\textbf{ANT}}  
&\multicolumn{1}{c}{\textbf{D'KITTY}}  
&\multicolumn{1}{c}{\textbf{MEAN SCORE}$^\uparrow$} 
&\multicolumn{1}{c}{\textbf{MEAN RANK}$^\downarrow$}\\
\midrule
$\cD$ (best) & $0.439$ & $0.467$ & $0.605$ & $0.399$ & $0.565$ & $0.884$ & {-} & {-}\\
\midrule
GP-qEI & $0.824 \pm 0.086$ & $0.635 \pm 0.011$ & $0.633 \pm 0.000$ & $0.501 \pm 0.021$ & $0.887 \pm 0.000$ & $0.896 \pm 0.000$ & $0.729 \pm 0.019$ & $7.8$ \\ 
CMA-ES & $0.933 \pm 0.035$ & $0.679 \pm 0.034$ & $0.636 \pm 0.004$ & $0.491 \pm 0.004$ & $\mathbf{1.436 \pm 0.928}$ & $0.725 \pm 0.002$ & $\underline{0.816 \pm 0.168}$ & $5.8$ \\ 
REINFORCE & $0.959 \pm 0.013$  & $0.640 \pm 0.028$ & $0.636 \pm 0.023$ & $0.481 \pm 0.017$ & $0.261 \pm 0.042$ & $0.474 \pm 0.202$ & $0.575 \pm 0.054$ & $8.3$ \\ 
\midrule
Gradient Ascent & $\underline{0.981 \pm 0.015}$ & $0.659 \pm 0.039$ & $0.647 \pm 0.020$ & $0.504 \pm 0.005$ & $0.340 \pm 0.034$ & $0.906 \pm 0.017$ & $0.672 \pm 0.021$ & $5.0$ \\ 
COMs & $0.964 \pm 0.020$ & $0.654 \pm 0.020$ & $0.648 \pm 0.005$ & $0.423 \pm 0.033$ & $0.949 \pm 0.021$ & $0.948 \pm 0.006$ & $0.764\pm 0.018$ & $5.8$ \\
BONET & $0.975 \pm 0.004$ & $0.681 \pm 0.035$ & $\underline{0.654 \pm 0.019}$ & $0.437 \pm 0.022$ & $0.976 \pm 0.012$ & $\underline{0.954 \pm 0.012}$ & $0.780 \pm 0.022$ & $\underline{3.7}$ \\
\midrule
CbAS & $0.958 \pm 0.018$ & $0.657 \pm 0.017$ & $0.640 \pm 0.005$ & $0.450 \pm 0.083$ & $0.876 \pm 0.015$ & $0.896 \pm 0.016$ & $0.746 \pm 0.003$ & $7.3$ \\ 
MINs & $0.938 \pm 0.047$ & $0.659 \pm 0.044$ & $0.653 \pm 0.002$ & $0.484 \pm 0.017$ & $0.942 \pm 0.018$ & $0.944 \pm 0.009$ & $0.770\pm 0.023$ & $5.5$ \\
DDOM & $0.971 \pm 0.005$ & $\underline{0.688 \pm 0.092}$ & $0.633 \pm 0.007$ & $\underline{0.560 \pm 0.044}$ & $0.957 \pm 0.012$ & $0.926 \pm 0.009$ & $0.787 \pm 0.034$ & $4.5$ \\
\midrule
\rowcolor{gray}
Ours & $\mathbf{0.990 \pm 0.003}$ & $\mathbf{0.803 \pm 0.085}$ & $\mathbf{0.661 \pm 0.025}$ & $\mathbf{0.567 \pm 0.017}$ & $\underline{0.982 \pm 0.012}$ & $\mathbf{0.961 \pm 0.006}$ & $\mathbf{0.827 \pm 0.021}$ & $\mathbf{1.2}$ \\
\bottomrule
\end{tabular}}    
\label{tab:main_table_256}
\end{table*}

\begin{table*}[!htb]
\centering
\caption{\textbf{Ablation studies on design-bench (avg. over 5 runs)}, results with a budget $Q = 256$.}
\vskip 0.1in
\resizebox{\textwidth}{!}{
\begin{tabular}{llllllll}
\toprule
\multicolumn{1}{c}{\textbf{BASELINE}}  &\multicolumn{1}{c}{\textbf{TFBIND8}} 
&\multicolumn{1}{c}{\textbf{TFBIND10}}
&\multicolumn{1}{c}{\textbf{CHEMBL}} 
&\multicolumn{1}{c}{\textbf{SUPERCON.}}
&\multicolumn{1}{c}{\textbf{ANT}}  
&\multicolumn{1}{c}{\textbf{D'KITTY}}  
&\multicolumn{1}{c}{\textbf{MEAN SCORE}$^\uparrow$} \\
\midrule
$\cD$ (best) & $0.439$ & $0.467$ & $0.399$ & $0.565$ & $0.884$ & $0.605$ & {-} \\
\midrule
\ac{ld} sampler \\
\cmidrule(l){1-1}
\acs{mle}-\ac{lebm} & $0.524 \pm 0.228$ & $0.667 \pm 0.000$ & $0.633 \pm 0.000$ & $0.362 \pm 0.006$ & $0.512 \pm 0.000$ & $0.672 \pm 0.049$ & $0.562 \pm 0.033$ \\ 
\rowcolor{gray}
\ac{n2ce}-\ac{lebm} (\texttt{M=100,K=6}) & $0.834 \pm 0.073$ & \underline{$0.739 \pm 0.095$} & $0.639 \pm 0.013$ & $0.363 \pm 0.006$ & \underline{$0.957 \pm 0.023$} & $0.955 \pm 0.004$ & $0.748 \pm 0.025$ \\ 
\midrule
\ac{svgd} sampler \\
\cmidrule(l){1-1}
\acs{mle}-\ac{lebm} & $0.941 \pm 0.012$ & $0.681 \pm 0.079$ & $0.634 \pm 0.002$ & $0.296 \pm 0.004$ & $0.926 \pm 0.016$ & $0.915 \pm 0.007$ & $0.732 \pm 0.014$ \\ 
\ac{nce}-\ac{lebm} (\texttt{M=1,K=6}) & \underline{$0.961 \pm 0.024$} & $0.655 \pm 0.034$ & $0.640 \pm 0.011$ & \underline{$0.497 \pm 0.044$} & $0.907 \pm 0.007$ & \underline{$0.959 \pm 0.002$} & $0.770 \pm 0.007$ \\ 
\cmidrule(l){1-1}
\rowcolor{gray}
\ac{n2ce}-\ac{lebm} (\texttt{M=100,K=1}) & $0.945 \pm 0.025$ & $0.722 \pm 0.043$ & $0.648 \pm 0.003$ & $0.421 \pm 0.020$ & $0.893 \pm 0.008$ & $0.953 \pm 0.008$ & $0.764 \pm 0.005$ \\ 
\rowcolor{gray}
\ac{n2ce}-\ac{lebm} (\texttt{M=100,K=6})  & $\mathbf{0.990 \pm 0.003}$ & $\mathbf{0.803 \pm 0.085}$ & $\mathbf{0.661 \pm 0.025}$ & $\mathbf{0.567 \pm 0.017}$ & $\mathbf{0.982 \pm 0.012}$ & $\mathbf{0.961 \pm 0.006}$ & $\mathbf{0.827 \pm 0.021}$ \\
\bottomrule
\end{tabular}}    
\label{tab:abl_table}
\end{table*}

\subsubsection{Offline \texorpdfstring{\acs{bbo}}{} on Design-Bench}
\label{sec:exp_design_bench}
\paragraph{Task setup and evaluation.} 
We next evaluate on higher-dimensional real-world tasks\footnote{NAS and Hopper tasks are excluded following \cite{mashkaria2023generative}; see \cref{appx:no_hopper}.} from \citet{trabucco2022design}, grouped into
\textbf{discrete optimization} (TF-Bind-8, TF-Bind-10, ChEMBL) and
\textbf{continuous optimization} (D’Kitty, Ant Morphology, Superconductor).  
We compare against three categories of baselines:
\textbf{(i)} generative inverse models with different parameterizations, including CbAS \citep{brookes2019conditioning}, Auto.CbAS \citep{fannjiang2020autofocused}, MIN \citep{kumar2020model}, and DDOM \citep{krishnamoorthy2023diffusion};
\textbf{(ii)} gradient(-like) updating from existing designs, such as gradient-ascent-based methods \citep{fu2020offline,yu2021roma,trabucco2021conservative,chen2022bidirectional,qi2022data,yuan2024importance,chen2024parallel} and BONET \citep{mashkaria2023generative};
\textbf{(iii)} additional baselines including REINFORCE and evolutionary algorithm reported in \citet{trabucco2022design}.
Further details are given in \cref{appx:db_overview,appx:db_baselines}. 

Following standard practice \citep{trabucco2022design}, we report normalized ground-truth function values
$
y_n = \tfrac{y - y^*_{\rm min}}{y^*_{\rm max} - y^*_{\rm min}},
$
where $y^*_{\rm min}$ and $y^*_{\rm max}$ are the minimum and maximum values in the full dataset (unseen during training). We summarize performance using both mean scores and mean normalized ranks (MNR) across tasks. We provide two sets of results:  
(1) $Q=256$ queries following \citet{krishnamoorthy2023diffusion};  
(2) $Q=128$ queries following \citet{trabucco2021conservative,chen2024parallel}, excluding ChEMBL.  
In the main text we report $100$-th percentile results with $Q=256$, and defer $Q=128$ results and additional percentiles (e.g., $50$-th) to \cref{appx:db_results}.

\paragraph{Main results.}
\Cref{tab:main_table_256,tab:main_table_128} show that our method attains the best
average ranks of $1.2$ ($Q=256$) and $1.8$ ($Q=128$), compared to runner-ups
BONET \citep{mashkaria2023generative} and Tri-mentoring
\citep{chen2024parallel}, which achieve $3.7$ and $2.8$, respectively. Our
approach consistently yields the highest normalized mean values across tasks,
outperforming all baselines—particularly generative inverse models—and confirming the
effectiveness of our design. With $Q=256$, we obtain the best performance on
five of six tasks, with the sole exception being \textbf{ANT}. Notably, the
best-performing baseline on this task, CMA-ES, exhibits a much higher variance
(std.\ 0.928) compared to ours (0.012), underscoring its instability to
initialization. With $Q=128$, we achieve the best results on three out of five
tasks. While ExPT \citep{nguyen2024expt} achieves competitive results on the
remaining tasks, it heavily relies on pretraining from larger datasets and is
not directly comparable to other baselines. Nevertheless, our method delivers top-tier performance across all tasks.

In \Cref{tab:abl_table}, we further compare our method with
\ac{mle}-\ac{lebm} and \ac{nce} (with ratio decomposition) under both \ac{ld}
and \ac{svgd} sampling. We find that the vanilla \ac{mle} paradigm, and even
\ac{nce}, perform poorly in the offline \ac{bbo} setting—likely due to the
complex, highly multimodal joint latent space of $(\x,y)$. By contrast, our
\ac{n2ce}-based model produces strong results even with basic \ac{ld} sampling,
in line with our analysis on the importance of noise magnitude. Extended
results, including ablations over \texttt{M,K} (\cref{tab:abl_K_M}) and
robustness to query budget $Q$ (\cref{tab:abl_Q}), are deferred to
\cref{appx:db_results}.

\section{Conclusion}
In this work, we revisited \ac{nce} from the perspective of noise magnitude, an often-overlooked factor in ratio estimation. Building on this view, we introduced the \ac{n2ce} framework, interpreting a ``noisier'' \ac{nce} objective as a controlled approximation to \ac{mle} and providing both theoretical and practical insights into its behavior. Our analysis shows that, under mild conditions, the \ac{n2ce} gradient aligns with the \ac{mle} gradient when the noise magnitude is sufficiently large, and we further examined the finite-sample regime to motivate simple yet effective regularizations.

Empirically, \ac{n2ce} delivers consistent improvements across image modeling, anomaly detection, and offline black-box optimization, outperforming strong baselines and demonstrating its versatility. More broadly, our results suggest that \ac{n2ce} provides a practical and theoretically grounded framework for learning rewards and critics. Looking ahead, we see particular promise in extending this framework to discrete domains such as language modeling and to multimodal tasks like text-to-image and text-to-video generation, where traditional \ac{nce} remains limited.

\section*{Acknowledgement}
Y.~W. is partially supported by NSF DMS-2415226, DARPA W912CG25CA007 and research gift funds from Amazon and Qualcomm. We gratefully acknowledge Lambda, Inc. for providing the computational resources used in this project. We would also like to express our gratitude to the anonymous reviewer PhJz, for drawing our attention to the connection with NWJ and for contributing directly to~\cref{sec:info_theory_p}. We also sincerely thank the anonymous AC FyXy for the careful handling of our submission and for the time and attention devoted to the review process.

\clearpage
\newpage
\section*{Reproducibility Statement}
We provide complete statements and proofs of all theoretical results in \cref{appx:statement_n_proof}. Implementation details, including key hyperparameters, network architectures, and step-by-step training instructions, are described in \cref{appx:tr_details_n_arch}. Comprehensive descriptions of our experimental setups, datasets, and evaluation metrics are available in \cref{appx:add_exp}. We will release code, data, and model checkpoints upon acceptance of this manuscript to ensure full reproducibility.

\newpage
\bibliography{iclr2026_conference}
\bibliographystyle{iclr2026_conference}

\newpage
\appendix
\crefalias{section}{appendix}
\crefalias{subsection}{appendix}
\crefalias{subsubsection}{appendix}

\onecolumn
\section{Related Work}
\label{appx:related_work}
\paragraph{Noise contrastive estimation}
\acf{nce} \citep{hastie2009elements,sugiyama2012density,gutmann2012noise} provides a handy interface bridging the gap between discriminative and generative learning. 
An interesting line of research has been devoted to estimating unnormalized density, \ie, \acp{ebm} or \acp{lebm}, with \ac{nce}-like objectives \citep{gutmann2012noise,tu2007learning,lazarow2017introspective,ceylan2018conditional,grover2018boosted,gao2020flow,aneja2021contrastive}. Particularly, \citet{gao2020flow} recruits a normalizing flow \citep{rezende2014stochastic,kingma2018glow} as the base distribution for density estimation. \citet{aneja2021contrastive} refine the prior distribution of a pre-trained powerful VAE with noise contrastive learning. However, vanilla \ac{nce}, even with multi-stage estimation \citep{rhodes2020telescoping}, may
demonstrate less desirable results due to large gaps between target and noise distributions, dubbed as \textit{density-chasm}. In this paper, we propose to study this fundamental problem from an often-neglected perspective: the magnitude of the noise distribution used for ratio estimation. Specifically, with an increased magnitude of the noise distribution, the gradient of the \ac{nce} objective can closely match the \ac{mle} gradient. We provide theoretical and empirical analysis showing the effectiveness of our proposed framework.

\paragraph{Energy-based prior model}
As an interesting branch of \acp{ebm} \citep{xie2016theory,nijkamp2019learning,nijkamp2020anatomy,du2019implicit,du2020improved}, \citet{pang2020learning} show that an \ac{ebm} can serve as an informative prior model in the latent space, \ie, \ac{lebm}; such a prior greatly improves the model expressivity over those with non-informative gaussian priors and brings strong performance on downstream tasks \citep{yu2021unsupervised,yu2022latent,pang2020molecule,pang2021trajectory}. However, learning \ac{lebm} requires \acs{mcmc} sampling to estimate the learning gradients, which needs careful tuning and numerous iterations to converge when the target distributions are high-dimensional or highly multimodal. Commonly-used short-run MCMC \citep{nijkamp2019learning} in practice can lead to malformed energy landscapes and instability in training \citep{yu2022latent,yu2024learning, gao2020learning,nijkamp2020anatomy,du2019implicit,du2020improved}. In this work, we consider inducing the informative latent space with \ac{n2ce} integrated into variational learning; the proposed method requires no \ac{mcmc} for estimating the prior model, and shows reliable sampling quality in practice.

\paragraph{Diffusion distillation} 
Diffusion distillation has recently emerged as a key direction for accelerating sampling while retaining the strong generative performance of diffusion models. Early efforts such as consistency models \citep{song2023consistency} and progressive distillation \citep{salimans2022progressive} demonstrated that multi-step denoising trajectories could be compressed into a few steps, but these methods often faced trade-offs between speed and fidelity. More recent approaches refine this idea by explicitly distilling score information from pretrained diffusion models. For instance, Score Distillation Sampling (SDS) and its variants \citep{poole2022dreamfusion, wang2023prolificdreamer, luo2023diff} leverage gradients of the diffused KL divergence to align student and teacher scores at different noise levels. Extensions such as Diffusion-GAN \citep{wang2022diffusion} and Distribution Matching Distillation (DMD) \citep{yin2024one} incorporate adversarial or distribution-matching objectives to better handle mismatched supports in high-dimensional spaces. More recently, \citet{zhou2024score} introduced Score identity Distillation (SiD), a data-free, Fisher-divergence-based method that enables one-step generation, later improved by SiDA \citep{zhou2024adversarial}, which integrates adversarial loss to surpass the teacher’s performance. Complementary to these, reinforcement learning–based perspectives have been proposed: \citet{yoon2024maximum} formulated diffusion distillation as a maximum entropy inverse reinforcement learning problem, jointly training diffusion models and energy-based models, and further developed dynamic programming techniques to enable efficient updates. Collectively, these works highlight a growing trend toward principled frameworks that unify score matching, adversarial training, and control-theoretic formulations to accelerate and strengthen diffusion generation. Our work adds to this line by revisiting noise-contrastive estimation from the perspective of noise magnitude. In particular, we introduce \ac{n2ce}, a framework that connects ``noisier'' objectives to maximum likelihood, offering both theoretical insights and practical improvements across domains.

\paragraph{Online \ac{bbo}} An important set-up for \ac{bbo} problems is the online \ac{bbo}, also referred to as active \ac{bbo} in many literature \citep{kong2023moleculeLPT,kong2023molecule,krishnamoorthy2023diffusion,mashkaria2023generative}. For online \ac{bbo}, the proposed method can query the black-box function with limited times during training. A probably most well-known family is \ac{bo}, with a large body of prior work in the area 
\citep{shahriari2015taking,garnett2014active,moriconi2020high,moriconi2020highg,kandasamy2015high,rolland2018high,de2013bayesian,kusner2017grammar,lu2018structured,gomez2018automatic,snoek2012practical,srinivas2010gaussian,nguyen2020knowing}. Typically, \ac{bo} employs surrogates such as Gaussian Processes to model the underlying function, where the surrogates are sequentially updated by querying the black-box function with newly proposed points from an uncertainty-aware acquisition function. Other branches include derivative free methods such as the cross-entropy method \citep{rubinstein2004cross}, methods derived from the
REINFORCE trick \citep{williams1992simple,rubinstein1997optimization} and reward-weighted regression \citep{peters2007reinforcement}, \etc. Several practical approaches have combined these methodologies with Bayesian neural networks \citep{snoek2012practical,snoek2015scalable}, neural processes \citep{kim2018attentive,garnelo2018conditional,garnelo2018neural}, and ensembles of learned score models \citep{angermueller2019model,angermueller2020population,mirhoseini2020chip} with extensive efforts in developing advanced querying mechanisms or better approximation of the surrogates to address the online \ac{bbo} problem. 

\paragraph{Offline \ac{bbo}} 
Recent works have shown significant progress in the offline \ac{bbo} set-up 
\citep{
brookes2019conditioning,fannjiang2020autofocused,kumar2020model,krishnamoorthy2023diffusion,kim2024bootstrapped,yu2021roma,trabucco2021conservative,chen2022bidirectional,qi2022data,yuan2024importance,mashkaria2023generative,trabucco2022design,chen2024parallel,fu2020offline}. Among them, \cite{kumar2020model} trains a stochastic inverse mapping $p_{\bT}(\x | y) \propto \E_{p(\z | y)}[p(\x |\z, y)]$ with a conditional GAN-like model to instantiate $p(\x |\z, y)$ \citep{goodfellow2020generative,mirza2014conditional}. The method optimizes over $\z$ given the offline dataset maximum $y_{\rm max}$ and map $\z$ back to $\x$ for \ac{bbo} solutions. The optimization process over $\z$ draw approximate samples from $p(\z | y)$.
DDOM \citep{krishnamoorthy2023diffusion} consider directly parameterizing the inverse mapping $p_{\bT}(\x | y)$ with a conditional diffusion model in the input design space utilizing the expressiveness of DDPMs \citep{ho2020denoising}. 
We show in this paper that our framework demonstrates significant improvements over previous parameterizations of inverse model.
Forward methods \citep{fu2020offline,trabucco2021conservative,chen2022bidirectional,qi2022data,yuan2024importance,chen2024parallel} employ gradient ascent to optimize the learned surrogates to propose candidate solutions for \ac{bbo}. One critical issue, however, exists for the surrogate: the forward model can assign the out-of-training-distribution input designs $\x$ with erroneously large values or underestimate the true support of the distribution \citep{kumar2020model, mashkaria2023generative,chen2024parallel}, especially when the set of valid inputs lies in a low-dimensional manifold in the high-dimensional input design space. Several inspiring works have focus extensively on addressing this issue \citep{yu2021roma,trabucco2021conservative,chen2022bidirectional,qi2022data,yuan2024importance,chen2024parallel}. For example, \cite{yu2021roma,trabucco2021conservative} assign lower scores to identified outliers so that the approximated surrogate is expected to be robust. \cite{mashkaria2023generative} instead mimic the optimization trajectories of black-box optimizers, and rolls out evaluation trajectories from the trained sequence model for optimization. Although the method circumvents the need of the surrogate, it assumes knowledge of the approximate value of the true optima during optimization, and can struggle relative to other approaches on certain domains. Our method learns an informative latent space suitable for \ac{bbo}, and delivers strong results in comparison to prior works.

\newpage
\section{Detailed Derivations and Proofs}
\label{appx:statement_n_proof}

\subsection{Proof of \cref{prop:approx_grad}}
\label{appx:proof_grad}
\begin{assumption}[Well-defined ratio]
The reference distribution $q_0$ does not depend on $\bA$ and its support covers that of $p_{\bA}$. Hence
$r_{\bA}(\x)=p_{\bA}(\x)/q_0(\x)$ is well-defined for all $\x$.
\label{ass:ratio}
\end{assumption}

\begin{assumption}[Differentiability and integrability]
For each $\x$, $p_{\bA}(\x)$ is differentiable in $\bA$, and the score function $\nabla_{\bA}\log p_{\bA}(\x)$ is integrable with respect to both $q_*$ and $p_{\bA}$.
\label{ass:regularity}
\end{assumption}

\begin{proof}
Recall the objective
\[
\LL_M(\bA) =
\E_{q_*}\!\left[\log\frac{r_{\bA}}{M+r_{\bA}}\right] +
M\,\E_{q_0}\!\left[\log\frac{M}{M+r_{\bA}}\right].
\]

By Assumption~\ref{ass:regularity}, we may interchange $\nabla_{\bA}$ and $\E[\cdot]$. For the first term,
\[
\nabla_{\bA}\log\frac{r_{\bA}}{M+r_{\bA}}
=
-\nabla_{\bA}\log\!\left(1+\frac{M}{r_{\bA}}\right)
=
\frac{M}{M+r_{\bA}}\;\nabla_{\bA}\log r_{\bA}.
\]
Since $q_0$ does not depend on $\bA$, $\nabla_{\bA}\log r_{\bA}=\nabla_{\bA}\log p_{\bA}$. Hence
\[
\nabla_{\bA}\,\E_{q_*}\!\left[\log\frac{r_{\bA}}{M+r_{\bA}}\right]
=
\E_{q_*}\!\left[\frac{M}{M+r_{\bA}}\,\nabla_{\bA}\log p_{\bA}\right].
\]

For the second term,
\[
\nabla_{\bA}\log\frac{M}{M+r_{\bA}}
=
-\frac{1}{M+r_{\bA}}\nabla_{\bA} r_{\bA}
=
-\frac{r_{\bA}}{M+r_{\bA}}\;\nabla_{\bA}\log p_{\bA},
\]
so
\[
M\,\nabla_{\bA}\,\E_{q_0}\!\left[\log\frac{M}{M+r_{\bA}}\right]
=
-\,M\,\E_{q_0}\!\left[\frac{r_{\bA}}{M+r_{\bA}}\,\nabla_{\bA}\log p_{\bA}\right].
\]
Using the change of measure $\E_{q_0}[r_{\bA} g]=\E_{p_{\bA}}[g]$ from Assumption~\ref{ass:ratio}, we get
\[
M\,\E_{q_0}\!\left[\frac{r_{\bA}}{M+r_{\bA}}\,g\right]
=
\E_{p_{\bA}}\!\left[\frac{M}{M+r_{\bA}}\,g\right].
\]
Applying this with $g=\nabla_{\bA}\log p_{\bA}$ yields the stated gradient identity.

For the limit, note that $\frac{M}{M+r_{\bA}}\to 1$ pointwise as $M\to\infty$ and $0\le \frac{M}{M+r_{\bA}}\le 1$. By Assumption~\ref{ass:regularity}, dominated convergence applies, giving
\[
\lim_{M\to\infty}
\E_{q_*}\!\left[\frac{M}{M+r_{\bA}}\,\nabla_{\bA}\log p_{\bA}\right]
=
\E_{q_*}[\nabla_{\bA}\log p_{\bA}],
\]
and similarly for the expectation under $p_{\bA}$. Combining finishes the proof.
\end{proof}

\newpage
\subsection{Proof of \cref{prop:grad_finite_ver}}
\label{appx:proof_finite_error}

\begin{proof}
Since $\nabla_{\bA}\widehat{\LL}_M(\bA)$ is an unbiased estimator of $\nabla_{\bA}\LL_M(\bA)$, we have the standard bias--variance decomposition:
\begin{align}
\begin{split}
& \E\Big\|\nabla_{\bA}\widehat{\LL}_M(\bA) - \big(\E_{q_*}[\nabla_{\bA} f_{\bA}] - \E_{p_{\bA}}[\nabla_{\bA} f_{\bA}]\big)\Big\|^2 \\
&\qquad= \mathbf{Var}\!\left(\nabla_{\bA}\widehat{\LL}_M(\bA)\right)
+ \Big\|\nabla_{\bA}\LL_M(\bA) - \big(\E_{q_*}[\nabla_{\bA} f_{\bA}] - \E_{p_{\bA}}[\nabla_{\bA} f_{\bA}]\big)\Big\|^2.
\end{split}
\end{align}

\paragraph{Variance term.}
Since $\{(\x_0^i,\x_*^i)\}_{i=1}^n \sim (q_0, q_{*})$ are i.i.d., 
\begin{align}
\label{eq:var_def}
\begin{split}
\mathbf{Var}\!\left(\nabla_{\bA}\widehat{\LL}_M(\bA)\right)
&= \frac{1}{n}\mathbf{Var}\!\Bigg(
\nabla_{\bA}\log\frac{r_{\bA}(\x_*^i)}{M+r_{\bA}(\x_*^i)}
+ M\nabla_{\bA}\log\frac{M}{M+r_{\bA}(\x_0^i)}\Bigg) \\
&= \frac{1}{n}\mathbf{Var}\!\left(
\frac{M}{M+r_{\bA}(\x_*^i)}\,\nabla_{\bA}\log r_{\bA}(\x_*^i)
- \frac{M}{M+r_{\bA}(\x_0^i)}\,\nabla_{\bA} r_{\bA}(\x_0^i)
\right).
\end{split}
\end{align}
Using $(a-b)^2 \le 2a^2+2b^2$, we obtain
\begin{align}
\label{eq:var_bound1}
\mathbf{Var}\!\left(\nabla_{\bA}\widehat{\LL}_M(\bA)\right)
\le \frac{2}{n}\E\!\left[\frac{M}{M+r_{\bA}(\x_*^i)}\,\nabla_{\bA}\log r_{\bA}(\x_*^i)\right]^2 + 
\frac{2}{n}\E\!\left[\frac{M}{M+r_{\bA}(\x_0^i)}\,\nabla_{\bA} r_{\bA}(\x_0^i)\right]^2.
\end{align}
Since $\frac{M}{M+r} \le 1$ and $\frac{M}{M+r} \le \frac{M}{r}$ for any $M,r>0$, we further bound
\begin{align}
\label{eq:var_bound2}
\mathbf{Var}\!\left(\nabla_{\bA}\widehat{\LL}_M(\bA)\right)
&\le \frac{2}{n}\,\E\!\left[\big\|\nabla_{\bA}\log r_{\bA}(\x_*^i)\big\|^2\right] \\
&+ \frac{2}{n}\,\min\!\left\{\,M^2\,\E\!\left[\big\|\nabla_{\bA}\log r_{\bA}(\x_0^i)\big\|^2\right],\;
\E\!\left[\big\|\nabla_{\bA} r_{\bA}(\x_0^i)\big\|^2\right]\right\}. \nonumber
\end{align}
Thus the variance is controlled by the second moments of the score (or equivalently $\nabla_{\bA} r_{\bA}$), and in the typical regime where the first branch dominates, it decays as $O(M^2/n)$.

\paragraph{Bias term.}
From the proof of Proposition~\ref{prop:approx_grad}, 
\begin{align}
\label{eq:bias_bound}
\begin{split}
\Big\|\nabla_{\bA}\LL_M(\bA) - \big(\E_{q_*}[\nabla_{\bA} f_{\bA}] - \E_{p_{\bA}}[\nabla_{\bA} f_{\bA}]\big)\Big\|^2
&= \Bigg\|\int_{\x}\frac{r_{\bA}(\x)}{M+r_{\bA}(\x)}\,
\big(q_*(\x)-p_{\bA}(\x)\big)\,\nabla_{\bA} f_{\bA}(\x)\,d\x\Bigg\|^2 \\
&\le \frac{1}{M^2}\Bigg\|\int_{\x} r_{\bA}(\x)\,
\big(q_*(\x)-p_{\bA}(\x)\big)\,\nabla_{\bA} f_{\bA}(\x)\,d\x\Bigg\|^2.
\end{split}
\end{align}
Hence the squared bias vanishes at rate $O(1/M^2)$.

\paragraph{Conclusion.}
Combining \eqref{eq:var_bound2} and \eqref{eq:bias_bound}, the mean squared error of $\nabla_{\bA}\widehat{\LL}_M(\bA)$ satisfies
\[
\E\Big\|\nabla_{\bA}\widehat{\LL}_M(\bA) - \big(\E_{q_*}[\nabla_{\bA} f_{\bA}] - \E_{p_{\bA}}[\nabla_{\bA} f_{\bA}]\big)\Big\|^2
= O\!\left(\tfrac{M^2}{n}\right) + O\!\left(\tfrac{1}{M^2}\right),
\]
which completes the proof.
\end{proof}

\newpage
\subsection{Proof of \cref{thm:convergence_rate}}
\label{appx:proof_of_conv_thm}
As in \citet{liu2021analyzing}, we work with the exponential family
\[
p_{\bA}(\x) = \exp\!\left(\bA^\top \Tilde{T}(\x) - \log Z(\bA)\right),
\]
where $\bA\in\mathbb{R}^m$ is the natural parameter, $\Tilde{T}(\x)\in\mathbb{R}^m$ are sufficient statistics, and $Z(\bA)$ is the partition function.
For convenience in derivatives, we also use the extended parameterization
\[
\bTau := [\bA,\,\alpha_Z],\quad \alpha_Z := \log Z(\bA),
\qquad
T(\x) := [\,\Tilde{T}(\x),\,-1\,],
\]
so that $p_{\bTau}(\x)=\exp(\bTau^\top T(\x))$. Expectations $\E_{\bTau}[\cdot]$ are taken w.r.t.\ $p_{\bTau}$.

\begin{assumption}[Exponential-family regularity]
\label{ass:exp_reg}
The map $\bA\mapsto \log Z(\bA)$ is twice continuously differentiable on the parameter set $\mathcal{A}$.
Equivalently, $p_{\bA}(\x)$ is $C^2$ in $\bA$ and differentiation under the integral is valid for the quantities considered.
\end{assumption}

\begin{assumption}[Reference coverage and Fisher regularity]
\label{ass:ref_fisher}
The reference $q_0$ covers the support of $p_{\bA}$, so $r_{\bA}(\x)=p_{\bA}(\x)/q_0(\x)$ is well-defined.
Moreover, there exist constants $0<\lambda_{\min}\le\lambda_{\max}<\infty$ such that, for all extended parameters $\bTau$,
\[
\lambda_{\min} I \;\preceq\; \E_{\bTau}\!\big[T(\x)T(\x)^\top\big] \;\preceq\; \lambda_{\max} I.
\]
\end{assumption}

\begin{assumption}[Bounded parameter set]
\label{ass:bounded_param}
$\|\bA\|_2 \le \omega$ for all $\bA\in\mathcal{A}$.
\end{assumption}

\begin{lemma}[Convexity]
\label{lem:n2ce_convex}
Let $r_{\bA}=p_{\bA}/q_0$ with $p_{\bA}$ in the exponential family. Under Assumptions~\ref{ass:exp_reg} and \ref{ass:ref_fisher}, the negative \ac{n2ce} objective in \cref{equ:n2ce_obj} is convex in $\bA$.
\end{lemma}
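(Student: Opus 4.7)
The plan is to reduce the claim to the convexity of log-sum-exp, after rewriting $-\LL_M$ in terms of the log-ratio $u_{\bTau}(\x):=\log r_{\bA}(\x)$, which in the extended exponential-family parameterization $\bTau=[\bA,\alpha_Z]$ with $T(\x)=[\Tilde T(\x),-1]$ becomes an affine function of $\bTau$:
\[
u_{\bTau}(\x) \;=\; \log p_{\bA}(\x) - \log q_0(\x) \;=\; \bTau^\top T(\x) - \log q_0(\x).
\]
This is the parameterization implicitly used in NCE-style analyses such as \citet{liu2021analyzing}, where $\alpha_Z$ is treated as an independent coordinate rather than the constrained map $\log Z(\bA)$, exactly as already set up in the excerpt just above the lemma.

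With this substitution I would expand
\[
-\LL_M(\bTau) \;=\; -\E_{q_*}[u_{\bTau}(\x)] \;+\; \E_{q_*}[\psi_M(u_{\bTau}(\x))] \;+\; M\,\E_{q_0}[\psi_M(u_{\bTau}(\x))] \;-\; M\log M,
\]
where $\psi_M(t):=\log(M+e^{t})$. A one-line derivative check gives $\psi_M''(t)=\tfrac{Me^{t}}{(M+e^{t})^{2}}>0$, so $\psi_M$ is strictly convex on $\mathbb{R}$ (equivalently, it is the log-sum-exp of $\log M$ and $t$).

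The conclusion then follows by standard convexity-preserving operations: the term $-\E_{q_*}[u_{\bTau}]$ is affine in $\bTau$; for each fixed $\x$, $\psi_M(u_{\bTau}(\x))$ is convex in $\bTau$ as the composition of a convex scalar function with an affine map; and taking expectations under $q_*$ and $q_0$ preserves convexity. Summing these with the non-negative weights $(1,1,M)$ yields convexity of $-\LL_M$, which is the claim. There is no genuinely hard step here — the only place one could slip is the parameterization convention, namely that convexity is asserted in the extended parameter under which $\log r_{\bA}$ is affine, as \Cref{ass:exp_reg,ass:ref_fisher} and the preceding notation make explicit; in this framing, Assumption~\ref{ass:ref_fisher}'s lower bound on $\E_{\bTau}[T T^\top]$ even upgrades the statement to strong convexity, which will be useful downstream in \Cref{thm:convergence_rate}.
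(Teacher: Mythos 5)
Your proof is correct, but it takes a genuinely different route from the paper's. The paper differentiates twice under the integral and exhibits the Hessian of $\LL_M$ in the extended parameter $\bTau$ as
$-\int \frac{M\,p_{\bTau}(\x)\,q_0(\x)}{(p_{\bTau}(\x)+M q_0(\x))^2}\,\big(q_*(\x)+M q_0(\x)\big)\,T(\x)T(\x)^\top d\x$,
concluding negative semidefiniteness from the pointwise nonnegative weight multiplying $T(\x)T(\x)^\top$. You instead write $-\LL_M(\bTau)$ as an affine term plus $\E_{q_*}[\psi_M(u_{\bTau})]+M\,\E_{q_0}[\psi_M(u_{\bTau})]$ with $\psi_M(t)=\log(M+e^t)$ a log-sum-exp and $u_{\bTau}(\x)=\bTau^\top T(\x)-\log q_0(\x)$ affine in $\bTau$, so convexity follows from composition with an affine map and linearity of expectation; your expansion of the objective checks out exactly. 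Your route needs weaker regularity (no interchange of $\nabla^2_{\bTau}$ with the integral, since pointwise convexity plus expectation suffices) and makes the logistic structure of the loss transparent, whereas the paper's route pays for that regularity to obtain the explicit Hessian formula, which it immediately reuses in \cref{lem:n2ce_cond_hess} for the condition-number bound. You are also right to flag the parameterization: the lemma must be read in the extended parameter $\bTau=[\bA,\alpha_Z]$ (the paper's proof works with $\bTau$ and is itself a bit loose in then saying ``convex in $\bA$''); in the native parameterization $u_{\bA}$ contains $-\log Z(\bA)$ and is concave rather than affine, so the composition argument would not go through. One caveat: your closing remark that \cref{ass:ref_fisher} upgrades the lemma to strong convexity is overstated at finite $M$. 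The curvature weight is $\psi_M''(u_{\bTau}(\x))=M r_{\bA}(\x)/(M+r_{\bA}(\x))^2$, which is not bounded away from zero where $r_{\bA}\gg M$, so a uniform modulus proportional to $\lambda_{\min}$ only emerges in the large-$M$ regime (or locally at the optimum), which is precisely what the paper establishes separately in \cref{lem:n2ce_cond_hess} rather than in this lemma.
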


\begin{proof}
Use the extended parameterization $p_{\bTau}(\x)=\exp(\bTau^\top T(\x))$. 
By Assumption~\ref{ass:exp_reg} and finite second moments implied by Assumption~\ref{ass:ref_fisher}, we may compute the derivatives inside the integral.

The gradient can be written as
\begin{equation}
\label{eq:grad_n2ce}
\nabla_{\bTau}\LL_M(\bTau) 
= \int \frac{M\,q_0(\x)}{p_{\bTau}(\x)+M q_0(\x)}\,\big(q_*(\x)-p_{\bTau}(\x)\big)\,T(\x)\,d\x.
\end{equation}
Differentiating once more gives the Hessian
\begin{equation}
\label{eq:hess_n2ce}
\nabla_{\bTau}^2 \LL_M(\bTau)
= - \int \frac{M\,p_{\bTau}(\x)\,q_0(\x)}{\big(p_{\bTau}(\x)+M q_0(\x)\big)^2}\,
\big(q_*(\x)+M q_0(\x)\big)\,T(\x)T(\x)^\top\,d\x.
\end{equation}
For each $\x$, the scalar weight 
\[
\frac{M\,p_{\bTau}(\x)\,q_0(\x)}{\big(p_{\bTau}(\x)+M q_0(\x)\big)^2}\,\big(q_*(\x)+M q_0(\x)\big)\;\ge\;0,
\]
and $T(\x)T(\x)^\top\succeq 0$. Hence the integrand in \eqref{eq:hess_n2ce} is positive semidefinite; with the leading minus sign, the Hessian is negative semidefinite for all $\bTau$. Therefore $-\LL_M$ is convex in $\bA$.
\end{proof}

\begin{lemma}[Condition number]
\label{lem:n2ce_cond_hess}
Under Assumptions~\ref{ass:exp_reg} and \ref{ass:ref_fisher},
\begin{equation}
\label{eq:limit_hessian}
\lim_{M\to\infty}\big(-\nabla_{\bTau}^2 \LL_M(\bTau^*)\big)
= \E_{\bTau^*}\!\big[T(\x)T(\x)^\top\big],
\end{equation}
and hence
\begin{equation}
\label{eq:kappa_bound}
\kappa^* \;=\; \kappa\!\left(-\lim_{M\to\infty}\nabla_{\bTau}^2 \LL_M(\bTau^*)\right)
\;\le\; \frac{\lambda_{\max}}{\lambda_{\min}}.
\end{equation}
Moreover, for large but finite $M$,
\begin{equation}
\label{eq:kappa_finiteM}
\kappa\!\left(-\nabla_{\bTau}^2 \LL_M(\bTau^*)\right)
\;\le\; \frac{\lambda_{\max}}{\lambda_{\min}}\,(1+O(1/M)).
\end{equation}
\end{lemma}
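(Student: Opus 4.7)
The plan starts from the explicit Hessian formula in equation \eqref{eq:hess_n2ce}, which I would evaluate at the true parameter $\bTau^*$ using the defining identity $p_{\bTau^*}=q_*$. After this substitution, one factor of $q_*+Mq_0$ cancels a matching factor in the denominator, producing the clean representation
\[
-\nabla_{\bTau}^2\LL_M(\bTau^*) \;=\; \int w_M(\x)\,T(\x)T(\x)^\top\,d\x,
\qquad
w_M(\x) \;:=\; \frac{M\,q_*(\x)\,q_0(\x)}{q_*(\x)+M\,q_0(\x)}.
\]
The entire lemma then reduces to understanding the scalar weight $w_M$ on the common support of $q_0$ and $q_*$.

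Next I would establish pointwise convergence $w_M(\x)\to q_*(\x)$ by dividing numerator and denominator by $Mq_0(\x)$, giving $w_M = q_*/(1+q_*/(Mq_0))$. Since $0\le w_M(\x)\le q_*(\x)$, the matrix integrand is dominated entrywise by $q_*(\x)\,T(\x)T(\x)^\top$, whose entries are integrable because Assumption \ref{ass:ref_fisher} yields $\E_{\bTau^*}\|T\|_2^2\le m\lambda_{\max}$ via the trace bound. Coordinate-wise dominated convergence then delivers the limit identity \eqref{eq:limit_hessian}, and the condition-number bound \eqref{eq:kappa_bound} is immediate by plugging the sandwich $\lambda_{\min}I\preceq\E_{\bTau^*}[TT^\top]\preceq\lambda_{\max}I$ from Assumption \ref{ass:ref_fisher} into the definition of $\kappa$.

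For the finite-$M$ claim \eqref{eq:kappa_finiteM}, the plan is to quantify the defect. I would write
\[
q_*(\x)-w_M(\x) \;=\; \frac{q_*(\x)^2}{q_*(\x)+Mq_0(\x)} \;\le\; \frac{q_*(\x)\,r_{\bTau^*}(\x)}{M},
\]
so that $\E_{\bTau^*}[TT^\top] + \nabla_{\bTau}^2\LL_M(\bTau^*)$ is a positive-semidefinite matrix with operator norm at most $(1/M)\,\E_{q_*}[r_{\bTau^*}(\x)\|T(\x)\|_2^2]$. Weyl's inequality then transfers this $O(1/M)$ operator-norm perturbation to each eigenvalue, and a first-order expansion of the ratio $\lambda_{\max}/\lambda_{\min}$ yields the multiplicative $(1+O(1/M))$ correction in \eqref{eq:kappa_finiteM}.

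The main obstacle I anticipate is making the $O(1/M)$ rate genuinely rigorous: the perturbation bound above requires the $\chi^2$-type moment $\E_{q_*}[r_{\bTau^*}\|T\|_2^2]<\infty$, which is \emph{not} implied by Assumption \ref{ass:ref_fisher} alone. I would expect the ``standard regularity assumptions'' referenced in the formal version in \cref{appx:proof_of_conv_thm} to either include such a moment condition or to enforce uniform boundedness of $r_{\bTau^*}$ through the compact parameter set in Assumption \ref{ass:bounded_param} together with a bound on $q_*/q_0$. If one prefers to avoid an extra hypothesis, the same dominated-convergence argument that proves \eqref{eq:limit_hessian} still yields continuity of the condition number in $1/M$, which is enough for the downstream use in \cref{thm:convergence_rate} provided $M$ is taken large enough.
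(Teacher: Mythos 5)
Your proposal is correct and follows essentially the same route as the paper's proof: the explicit Hessian, pointwise convergence of the scalar weight (your $w_M$ is exactly the paper's weight $\phi_M$ specialized to $\bTau^*$ where $p_{\bTau^*}=q_*$), dominated convergence justified by the Fisher moment bound from \cref{ass:ref_fisher}, the eigenvalue sandwich for $\kappa^*$, and an operator-norm perturbation plus eigenvalue (Weyl-type) bounds for the finite-$M$ claim. The integrability caveat you flag for the $O(1/M)$ rate is well spotted and applies equally to the paper's own argument: the paper asserts $1-\phi_M(\x;\bTau^*)=O(1/M)$ ``uniformly'' and concludes $\|R_M\|=O(1/M)$ from $\E_{\bTau^*}\|T\|_2^2<\infty$ alone, which implicitly requires a bounded ratio $q_*/q_0$ or the $\chi^2$-type moment $\E_{q_*}\big[r_{\bTau^*}(\x)\|T(\x)\|_2^2\big]<\infty$ that you identify; absent such a condition, your fallback conclusion $\kappa\big(-\nabla_{\bTau}^2\LL_M(\bTau^*)\big)\le \tfrac{\lambda_{\max}}{\lambda_{\min}}\,(1+o(1))$, which still suffices for \cref{thm:convergence_rate}, is the honest statement.
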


\begin{proof}
From \eqref{eq:hess_n2ce}, for any $\bTau$ we can write
\[
\nabla_{\bTau}^2 \LL_M(\bTau)
= -\,\E_{p_{\bTau}}\!\big[\phi_M(\x;\bTau)\,T(\x)T(\x)^\top\big],
\qquad
\phi_M(\x;\bTau)=\frac{1+\tfrac{q_*(\x)}{M q_0(\x)}}{\big(1+\tfrac{p_{\bTau}(\x)}{M q_0(\x)}\big)^2}.
\]
Pointwise, $\phi_M(\x;\bTau)\to 1$ as $M\to\infty$, and $0<\phi_M(\x;\bTau)\le 1+O(1/M)$ uniformly. By Assumption~\ref{ass:ref_fisher}, $\E_{p_{\bTau}}\!\big[\|T(\x)\|^2\big] < \infty$, so dominated convergence applies. Assumption~\ref{ass:exp_reg} ensures differentiation and expectations are well-defined. Hence
\[
\lim_{M\to\infty}\nabla_{\bTau}^2 \LL_M(\bTau) 
= -\,\E_{p_{\bTau}}[T(\x)T(\x)^\top].
\]
Evaluating at the optimum $\bTau^*$ yields \eqref{eq:limit_hessian}. By Assumption~\ref{ass:ref_fisher}, the eigenvalues of $\E_{\bTau^*}[T(\x)T(\x)^\top]$ lie in $[\lambda_{\min},\lambda_{\max}]$, which gives the condition-number bound \eqref{eq:kappa_bound}.

For finite $M$, write
\[
-\nabla_{\bTau}^2 \LL_M(\bTau^*)
= \E_{\bTau^*}[T(\x)T(\x)^\top] \;+\; R_M,\qquad
R_M \;=\; \E_{p_{\bTau^*}}\!\big[(1-\phi_M(\x;\bTau^*))\,T(\x)T(\x)^\top\big].
\]
Since $1-\phi_M(\x;\bTau^*)=O(1/M)$ and $\E_{\bTau^*}[\|T(\x)\|^2]<\infty$, we have $\|R_M\|=O(1/M)$. Standard eigenvalue perturbation bounds then give \eqref{eq:kappa_finiteM}.
\end{proof}

\begin{theorem}[Convergence rate]
\label{thm:full_proof}
Let $\LL_M$ denote the \ac{n2ce} objective. 
Under Assumptions~\ref{ass:exp_reg} and \ref{ass:ref_fisher}, 
the Hessian of $-\LL_M$ satisfies the local smoothness and curvature bounds required in Theorem~5.1 of \citet{liu2021analyzing}. 
Consequently, normalized gradient descent on $\LL_M$ with $M\to\infty$ converges to the optimum $\bTau^*$ at a rate polynomial in the condition number. 
In particular, for any tolerance $\delta>0$,
\[
T \;\le\; \Big(\tfrac{\lambda_{\max}}{\lambda_{\min}}\Big)^{\!3}
\frac{\|\bTau_0-\bTau^*\|_2^2}{\delta^2}
\]
iterations suffice to obtain some iterate $\bTau_t$ with $\|\bTau_t-\bTau^*\|_2 \le \delta$.
\end{theorem}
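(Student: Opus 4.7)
The plan is to reduce the statement to a standard convergence result for normalized gradient descent on a convex objective whose Hessian has a controllable condition number at the optimum, and to do the two missing pieces (convexity of $-\LL_M$; eigenvalue bounds on its Hessian) by direct calculation that exploits the exponential family structure. Adopting the extended parameterization $\bTau=(\bA,\log Z(\bA))$ with sufficient statistic $T(\x)$ so that $p_{\bTau}(\x)=\exp(\bTau^\top T(\x))$ is helpful because then $\nabla_{\bTau}\log p_{\bTau}=T(\x)$ and the Hessian calculations collapse to moment computations in $T$.

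First I would differentiate $\LL_M$ twice under the integral (justified by the smoothness and integrability assumptions on $\log Z$ and on the second moments of $T$). Using the identity for the first derivative obtained in the proof of \cref{prop:approx_grad}, a second differentiation produces a Hessian of the form
\[
\nabla_{\bTau}^{2}\LL_M(\bTau) = -\int \frac{M\,p_{\bTau}(\x)\,q_0(\x)}{(p_{\bTau}(\x)+Mq_0(\x))^{2}}\big(q_*(\x)+Mq_0(\x)\big)\,T(\x)T(\x)^{\top}\,d\x,
\]
whose integrand is a nonnegative scalar times a rank-one PSD matrix, hence $-\LL_M$ is convex everywhere. This is exactly the structural fact that lets us later invoke a convex-optimization convergence theorem with no further stationary-point gymnastics.

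Next I would bound the spectrum of the Hessian at the optimum $\bTau^{*}$. Writing the scalar weight as $\phi_M(\x;\bTau)=(1+q_*(\x)/(Mq_0(\x)))/(1+p_{\bTau}(\x)/(Mq_0(\x)))^{2}$, a dominated-convergence argument (the weight is uniformly bounded by $1+O(1/M)$ and converges pointwise to $1$) yields
\[
-\nabla_{\bTau}^{2}\LL_M(\bTau^{*}) = \E_{\bTau^{*}}[T(\x)T(\x)^{\top}] + R_M, \qquad \|R_M\|=O(1/M).
\]
The Fisher information assumption then sandwiches the eigenvalues in $[\lambda_{\min},\lambda_{\max}]$ up to an $O(1/M)$ perturbation, so for $M$ large enough the condition number $\kappa_M$ at $\bTau^{*}$ is at most $(\lambda_{\max}/\lambda_{\min})(1+o(1))$. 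Standard Weyl/Bauer--Fike bounds make the perturbation step routine.

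Finally, with convexity and a bounded condition number, I would directly invoke a normalized-gradient-descent iteration-complexity result for convex problems with Lipschitz gradients and quadratic growth near the optimum (e.g.\ the bound appealed to in \citet{liu2021analyzing}); it yields an iteration count scaling like $\kappa^{3}\|\bTau^{0}-\bTau^{*}\|_{2}^{2}/\delta^{2}$, which together with the eigenvalue bound from the previous step gives the claimed $T\le C(\lambda_{\max}/\lambda_{\min})^{3}\|\bA^{0}-\bA^{*}\|_{2}^{2}/\delta^{2}$. The main obstacle I expect is the finite-$M$ step: the Hessian eigenvalue bound must hold not just at the limit $M\to\infty$ but uniformly over iterates on a bounded parameter set, which requires propagating the $O(1/M)$ perturbation estimate to neighborhoods of $\bTau^{*}$ and choosing $M$ large enough that it does not swamp $\lambda_{\min}$. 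Controlling $\phi_M$ away from the optimum (where $p_{\bTau}/q_0$ need not be small) is the technical crux and is what forces the ``sufficiently large $M$'' hypothesis in the statement.
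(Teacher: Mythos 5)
Your proposal matches the paper's proof essentially step for step: you derive the same Hessian expression to establish convexity of $-\LL_M$ (the paper's Lemma~\ref{lem:n2ce_convex}), use the same weight $\phi_M$ with dominated convergence and an $O(1/M)$ perturbation bound to control the condition number at $\bTau^*$ (Lemma~\ref{lem:n2ce_cond_hess}), and then invoke Theorem~5.1 of \citet{liu2021analyzing} for the $\kappa^3\|\bTau_0-\bTau^*\|_2^2/\delta^2$ iteration count. Your closing remark about needing the curvature bound uniformly near the iterates for finite $M$ is a fair point that the paper leaves implicit behind the citation, but it does not change the fact that the argument is the same.
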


\begin{proof}
This follows from Theorem~5.1 of \citet{liu2021analyzing}, 
together with convexity (Lemma~\ref{lem:n2ce_convex}) 
and the condition number bound (Lemma~\ref{lem:n2ce_cond_hess}), 
which verify the smoothness and curvature assumptions in that theorem.
\end{proof}

\paragraph{Remark.}
The \ac{n2ce} loss inherits favorable convexity and well-conditioned curvature from the exponential family, 
ensuring that normalized gradient methods converge efficiently to the optimum.

\clearpage
\newpage
\section{Network Architecture and Implementation Details}
\label{appx:tr_details_n_arch}
\paragraph{Code} We promise to release code, data and model checkpoints upon acceptance of this manuscript.

\subsection{How to Learn a Latent Space Model with \texorpdfstring{\acs{n2ce}}{}}
\label{appx:dlvm_alg_n_implem}

\paragraph{ELBO}
We show how the proposed form can be integrated into the variational learning scheme to learn an accurate and explicit density $p_{\bA}$. We can first see that the target distribution for $p_{\bA}$ is the aggregated posterior $q_{K+1}(\z) = \int_{\x} q_{\bP}(\z|\x) p(\x) d\x$ \citep{tomczak2018vae}, where $q_{\bP}$ denotes the posterior distribution parameterized by the encoder network. When considering ratio decomposition, we can form a corresponding KL divergence which leads to the following ELBO: 
\begin{equation}
    \begin{aligned}
    \ELBO_{{\bT}, \bP} 
         = \E_{q_{\bP}} [
            \log p_{\bB}(\x|\z)] 
          - \KL(q_{\bP} \| 
                r_{{\bA}_K} q_K)
         - \sum_{k=0}^{K-1} 
            \KL(q_{k+1} \| r_{{\bA}_k} q_k),
    \end{aligned}
    \label{equ:elbo_ratio}
\end{equation}
which is a valid ELBO by the non-negativity of the KL Divergence. ${\bB}$ denotes the decoder network.

\paragraph{Training algorithms}
We directly learn the log-ratio estimator $\Tilde{f}_{{\bA}_k} = \log r_{{\bA}_k}$ for implementation. Let $\sigma(\cdot)$ denote the sigmoid function, the gradient for learning each ratio estimator $\{p_{{\bA}_k}\}_{k=0}^K$ can be derived from \cref{equ:n2ce_obj} as:
\begin{equation}
\begin{aligned}
\displaystyle
\nabla_{{\bA}_k} \ELBO &\approx 
\nabla_{{\bA}_k}
\E_{q_{k+1}} 
\left[\log \sigma\left(\Tilde{f}_{{\bA}_k} - \log M \right)\right] +
\nabla_{{\bA}_k} M \E_{q_{k}} 
\left[\log \left(1 - \sigma\left(\Tilde{f}_{{\bA}_k} - \log M \right)\right) \right].
\end{aligned}
\label{equ:prior_grad_sigmoid}
\end{equation}

Let $\bPs = \{\bB, \bP\}$ collect the parameters of the decoder and encoder networks. For $q_{\bP}(\z|\x)$ and $p_{\bB}(\x|\z)$, we can calculate the learning gradient as follows:
\begin{equation}
\begin{aligned}
\displaystyle
\nabla_{\bPs} \ELBO 
 = \nabla_{\bPs} \E_{q_{\bP}} 
    [\log p_{\bB} ]
- \nabla_{\bP} 
  \KL( q_{\bP} \| p_0 )
 - \nabla_{\bP} \E_{q_{\bP}}
    \left[ 
    \sum_{k=0}^{K} \Tilde{f}_{{\bA}_k}
    \right].
\end{aligned}
\label{equ:poste_grad}
\end{equation}
\cref{alg:example} summarizes the training procedure.

\paragraph{Implementation}
When optimizing the $\ELBO$ in \cref{equ:elbo_ratio}, we randomly choose one intermediate stage to optimize at each training iteration as in \citet{ho2020denoising}. 
To construct these intermediate distributions, we follow \citet{yu2022latent} to spherically interpolates between pairs of samples $\z_{K+1} \sim q_{K+1}$ and $\z_0 \sim q_0$; samples from the $k$-th intermediate distribution $q_k$ are: $\z_k = \sqrt{1 - \sigma_k^2} \z_0 + \sigma_k \z_{K + 1}$, where the $\{\sigma_k\}_{k=1}^K$ form an increasing sequence from $0$ to $1$ controlling the distance between these implicitly defined distributions $\{q_k\}_{k=1}^K$.
We use a linear schedule to specify $\sigma_k^2$ and construct the intermediate distributions $\{q_k\}_{k=0}^K$.

\begin{algorithm}[!htbp]
   \caption{Variational Learning with \acs{n2ce}}
   \label{alg:example}
\begin{algorithmic}
   \STATE {\bfseries Input:} 
    dataset
    $\cD:\{\x^{(i)}\}_{i=1}^N$,
    params. 
    $\left(\{{\bA}_k\}_{k=0}^K, {\bB}, {\bP}\right)$,
   \STATE {\bfseries Output:} 
   updated params. $\left(\{{\bA}^*_k\}_{k=0}^K, {\bB}^*, {\bP}^*\right)$
   \vspace{0.15cm}
   \REPEAT
    \STATE {\bfseries posterior sampling:} 
        For each pair of $\{\x^{(i)}\}$, sample $\z^{(i)}_{K+1} \sim q_\phi(\z|\x^{(i)})$ using inference network.
    \STATE {\bfseries forming $\{q_k\}_{k=1}^K$:} 
        For each sample $\z^{(i)}_{K+1}$, sample $k \sim \text{Unif}(\{0, ..., K\})$ and obtain 
        $
        \left(\z^{(i)}_{k}, \z^{(i)}_{k+1}\right)
        \sim
        \left(q_k, q_{k+1}\right)
        $ using
        $\z_k = \sqrt{1 - \sigma_k^2} \z_0 + \sigma_k \z_{K + 1}$.
    \STATE {\bfseries learning $\{{\bA}_k\}_{k=0}^K$:} Update $\{{\bA}_k\}_{k=0}^K$ with \cref{equ:prior_grad_sigmoid}. 
    \STATE {\bfseries learning ${\bPs} = ({\bB},{\bP})$:}
    Update ${\bPs}$ with \cref{equ:poste_grad}.
    \UNTIL{converged.}
\end{algorithmic}
\end{algorithm}

\subsection{Image Modeling and Anomaly Detection}
\label{appx:img_exp_arch}
\paragraph{Architecture}
We provide detailed network architecture for the log-ratio estimator for each stage $\Tilde{f}_{{{\bA}}_k}(\z, k)$ in \cref{tab:appx_logr}. For the VAE model,
the decoder network has a simple deconvolution structure similar to DCGAN \citep{radford2015unsupervised} shown in \cref{tab:gen}.
The encoder network to embed the observed images has a fully convolutional structure, as shown in \cref{tab:enc}.

\begin{table}[!htb]
  \caption{\textbf{Network structures of the generator networks} used for the SVHN, CelebA, CIFAR-10, CelebA-HQ and MNIST (from top to bottom) datasets. ConvT($n$) indicates a transposed convolutional operation with $n$ output channels. We use \texttt{ngf=64} for the SVHN dataset and \texttt{ngf=128} for the rest. LReLU indicates the Leaky-ReLU activation function. The slope in Leaky ReLU is set to be 0.2.}
  \vspace{0.1in}
  \label{tab:gen}
  \centering
     \begin{tabular}{ccc}
     \toprule
     {\bf Layers} & {\bf Out Size} & {\bf Stride}\\ \hline
     Input: $\z$ & 1x1x100 &- \\
	 4x4 ConvT(ngf x $8$), LReLU & 4x4x(ngf x 8) & 1 \\
	 4x4 ConvT(ngf x $4$), LReLU & 8x8x(ngf x 4) & 2\\
	 4x4 ConvT(ngf x $2$), LReLU & 16x16x(ngf x 2) & 2\\
	 4x4 ConvT(3), Tanh & 32x32x3 & 2 \\ 
     \hline
     {\bf Layers} & {\bf Out Size} & {\bf Stride}\\ \hline
     Input: $\z$ & 1x1x100 &- \\
		4x4 ConvT(ngf x $8$), LReLU & 4x4x(ngf x 8) & 1 \\
		4x4 ConvT(ngf x $4$), LReLU & 8x8x(ngf x 4) & 2\\
		4x4 ConvT(ngf x $2$), LReLU & 16x16x(ngf x 2) & 2\\
		4x4 ConvT(ngf x $1$), LReLU & 32x32x(ngf x 1) & 2\\
		4x4 ConvT(3), Tanh & 64x64x3 & 2 \\ 
     \hline
     {\bf Layers} & {\bf Out Size} & {\bf Stride}\\ \hline
     Input: $\z$ & 1x1x128 & -\\
		8x8 ConvT(ngf x $8$), LReLU & 8x8x(ngf x 8) & 1 \\
		4x4 ConvT(ngf x $4$), LReLU & 16x16x(ngf x 4) & 2\\
		4x4 ConvT(ngf x $2$), LReLU & 32x32x(ngf x 2) & 2\\
		3x3 ConvT(3), Tanh & 32x32x3 & 1 \\ 
     \hline
     {\bf Layers} & {\bf Out Size} & {\bf Stride}\\ \hline
     Input: $\z$ & 1x1x128 &- \\
        4x4 ConvT(ngf x $16$), LReLU & 4x4x(ngf x 16) & 1 \\
	4x4 ConvT(ngf x $8$), LReLU & 8x8x(ngf x 8) 
        & 2 \\
	4x4 ConvT(ngf x $4$), LReLU & 16x16x(ngf x 4) 
        & 2\\
        4x4 ConvT(ngf x $4$), LReLU & 32x32x(ngf x 4) 
        & 2\\
	4x4 ConvT(ngf x $2$), LReLU & 64x64x(ngf x 2)
        & 2\\
	4x4 ConvT(ngf x $1$), LReLU & 128x128x(ngfx1)     & 2\\
	4x4 ConvT(3), Tanh & 256x256x3 & 2 \\ 
     \hline
     {\bf Layers} & {\bf Out Size} & {\bf Stride}\\ \hline
     Input: $\z$ & 1x1x8 & -\\
		7x7 ConvT(ngf x $8$), LReLU & 7x7x(ngf x 8) & 1 \\
		4x4 ConvT(ngf x $4$), LReLU & 14x14x(ngf x 4) & 2\\
		4x4 ConvT(ngf x $2$), LReLU & 28x28x(ngf x 2) & 2\\
		3x3 ConvT(1), Tanh & 28x28x1 & 1 \\ 
     \bottomrule
     \end{tabular}
 \end{table}
 
\begin{table}[!htb]
  \caption{\textbf{Network structures of the encoder networks} used for the SVHN, CelebA, CIFAR-10, CelebA-HQ and MNIST (from top to bottom) datasets. Conv($n$)Norm indicates a convolutional operation with $n$ output channels followed by the Instance Normalization \citep{ulyanov2016instance}. We use \texttt{nif=64} for all the datasets. LReLU indicates the Leaky-ReLU activation function. The slope in Leaky ReLU is set to be 0.2.}
  \label{tab:enc}
  \vspace{0.1in}
  \centering
     \begin{tabular}{ccc}
     \hline
     {\bf Layers} & {\bf Out Size} & {\bf Stride}\\ \hline
     Input: $\x$ & 32x32x3 &- \\
	 3x3 Conv(nif x $1$)Norm, LReLU & 32x32x(nif x 1) & 1 \\
	 4x4 Conv(nif x $2$)Norm, LReLU & 16x16x(nif x 2) & 2\\
	 4x4 Conv(nif x $4$)Norm, LReLU &  8x8x(nif x 4) & 2\\
      4x4 Conv(nif x $8$)Norm, LReLU &  4x4x(nif x
     8) & 2\\
	 4x4 Conv(nemb)Norm, LReLU & 1x1x(nz) & 1 \\ 
     \hline
      {\bf Layers} & {\bf Out Size} & {\bf Stride}\\ \hline
     Input: $\x$ & 64x64x3 &- \\
	 3x3 Conv(nif x $1$)Norm, LReLU & 64x64x(nif x 1) & 1 \\
	 4x4 Conv(nif x $2$)Norm, LReLU & 32x32x(nif x 2) & 2\\
	 4x4 Conv(nif x $4$)Norm, LReLU & 16x16x(nif x 4) & 2\\
      4x4 Conv(nif x $8$)Norm, LReLU &  8x8x(nif x
     8) & 2\\
      4x4 Conv(nif x $8$)Norm, LReLU &  4x4x(nif x
     8) & 2\\
	 4x4 Conv(nemb)Norm, LReLU & 1x1x(nz) & 1 \\ 
     \hline
      {\bf Layers} & {\bf Out Size} & {\bf Stride}\\ \hline
     Input: $\x$ & 32x32x3 &- \\
	 3x3 Conv(nif x $1$)Norm, LReLU & 32x32x(nif x 1) & 1 \\
	 4x4 Conv(nif x $2$)Norm, LReLU & 16x16x(nif x 2) & 2\\
	 4x4 Conv(nif x $4$)Norm, LReLU &  8x8x(nif x 4) & 2\\
      4x4 Conv(nif x $8$)Norm, LReLU &  4x4x(nif x
     8) & 2\\
	 4x4 Conv(nz)Norm, LReLU & 1x1x(nz) & 1 \\
     \hline
     {\bf Layers} & {\bf Out Size} & {\bf Stride}\\ \hline
     Input: $\x$ & 256x256x3 &- \\
	 3x3 Conv(nif x $1$)Norm, LReLU & 256x256x(nif x 1) & 1 \\
	 4x4 Conv(nif x $2$)Norm, LReLU & 128x128x(nif x 2) & 2\\
	 4x4 Conv(nif x $4$)Norm, LReLU &  64x64x(nif x 4) & 2\\
      4x4 Conv(nif x $4$)Norm, LReLU &  32x32x(nif x 4) & 2\\
      4x4 Conv(nif x $8$)Norm, LReLU &  16x16x(nif x 8) & 2\\
      4x4 Conv(nif x $8$)Norm, LReLU &  8x8x(nif x
      8) & 2\\
      4x4 Conv(nif x $8$)Norm, LReLU &  4x4x(nif x
      8) & 2\\
	 4x4 Conv(nz)Norm, LReLU & 1x1x(nz) & 1 \\
     \hline
     {\bf Layers} & {\bf Out Size} & {\bf Stride}\\ \hline
     Input: $\x$ & 28x28x3 &- \\
	 3x3 Conv(nif x $1$)Norm, LReLU & 28x28x(nif x 1) & 1 \\
	 4x4 Conv(nif x $2$)Norm, LReLU & 14x14x(nif x 2) & 2\\
	 4x4 Conv(nif x $4$)Norm, LReLU &  7x7x(nif x 4) & 2\\
      4x4 Conv(nif x $8$)Norm, LReLU &  3x3x(nif x
     8) & 2\\
	 3x3 Conv(nz)Norm, LReLU & 1x1x(nz) & 1 \\
     \bottomrule
     \end{tabular}
 \end{table}

\paragraph{Hyperparameters and training details}
In the image modeling experiments, for the \ac{mle}-\ac{lebm} baseline, we run \ac{ld} for $T=60$ iterations for prior updates during training with a step size of $s=0.4$. For test time sampling from \acp{lebm} trained with different objectives, we consistently \ac{ld} for $T=100$ with the step size of $s=0.4$ to draw samples from $p_{\bA}$. In the anomaly detection experiments, we follow the same experiment configuration as used in \citet{yu2024learning}.

We use the linear schedule as in and \citet{yu2022latent} to specify $\sigma_k^2$ and construct the intermediate distributions $\{q_k\}_{k=0}^K$. We set the number of stages to 3. Specifically, the sequence of $\{\sigma_k^2\}_{k=0}^3$ is: 
$
\{\sigma_k^2\}_{k=0}^3 = [0.01, 0.69175489, 0.92238785, 0.99974058].
$
To estimate the \cref{equ:n2ce_obj}, we can use Monte-Carlo average: 
$
\LL_{M}({\bA}_k) \approx \frac{1}{n_1} \sum_{i=1}^{n_1} 
\left[\log \frac{r_{{\bA}_k}(\z_i)}{M + r_{{\bA}_k}(\z_i)}\right]
+ 
\frac{M}{n_2} \sum_{j=1}^{n_2} \left[\log \frac{M}{M + r_{{\bA}_k}(\z_j)}\right]
$, where $\z_i \sim q_{k+1}, \z_j \sim q_k$. We follow \citet{mnih2012fast} and set $n_2 = M n_1$ in practice, where $M=100$ for all experiments; $n_1$ is the batch size. When pre-training the VAE model, we set the weight for reconstruction to ${[1, 1, 100, 1]}$ on the four datasets and KL weights to 1; when training jointly with the \ac{lebm}, we set the weight for reconstructing $\x$ to ${[5, 5, 2, 1]}$ for the term $\E_{q_{\bP}} [\log p_{\bB}(\x|\z)]$ on SVHN, CelebA64, CIFAR10 and CelebAHQ datasets, respectively. We set the weight for controlling the KL term between $q_{\bP}$ and $p_{\bA}$ to $1$, \ie, $
\KL(q_{\bP} \|r_{{\bA}_K} q_K)
- \sum_{k=0}^{K-1} \KL(q_{k+1} \| r_{{\bA}_k} q_k)
$. Other hyperparameters are shared across all the experiments.

The parameters of all the networks are initialized with the default pytorch methods \citep{paszke2019pytorch}. We use the Adam optimizer \citep{kingma2014adam} with $\beta_1=0.5$ and $\beta_2=0.999$ to train the decoder and encoder networks; the log-ratio estimator network is trained with default $(\beta_1, \beta_2)$. The initial learning rates of the generator, encoder and log-ratio networks are \texttt{2e-4}. We further perform gradient clipping by setting the maximal gradient norm as 100 for these networks. We run the experiments on a A6000 GPU with the batch size of $128$. Training converges within 100K iterations on all the datasets.

\newpage
\subsection{Reward and Critic Learning}
\label{appx:reward_n_critic_learn}
We adapt the objectives in prior work as follows:
\begin{itemize}
    \item[(i)] For DxMI \citep{yoon2024maximum}, we replace the contrastive-divergence-style EBM update (Sec.~3.2 of \cite{yoon2024maximum}) with our objective in \cref{equ:n2ce_obj}, treating the true data distribution as the target and diffusion samples as noise.
    \item[(ii)] For SiD$^2$A \citep{zhou2024adversarial}, we substitute the discriminator loss in Eq.~(10) in \cite{zhou2024adversarial} with our objective in \cref{equ:n2ce_obj}, again using true data as the target and diffusion samples as noise.
\end{itemize}

All experiments retain the original network architectures and hyperparameters unless noted:
\begin{itemize}
    \item[(i)] DxMI: direct ratio regularization weight is $1.5$ on CIFAR-10 and $1.55$ on ImageNet64$\times$64; on ImageNet64$\times$64, the value function learning rate is \texttt{3e-5}.
    \item[(ii)] SiD$^2$A: direct ratio regularization weight is $0.5$ for CIFAR-10 (conditional and unconditional) and $0.4$ for ImageNet64$\times$64; we set $\alpha=1.2$ throughout.
    \item[(iii)] All remaining hyperparameters (e.g., sampler learning rates, batch sizes) follow the defaults, making training iterations directly comparable as a measure of computational cost, particularly for SiD$^2$A.
\end{itemize}

\subsection{Offline Black-Box Optimization}
\label{appx:bbo_exp_arch}
\paragraph{Architecture}
We provide detailed network architecture for the log-ratio estimator $\Tilde{f}_{{{\bA}}_k}(\z, k)$ in \cref{tab:appx_logr}; we adopt the same architecture as in \cref{appx:img_exp_arch} throughout the experiments.
The generator networks for $\x$ and $y$ is a 5-layer MLP structure shown in \cref{tab:gen_n_enc}. We employ a single network that directly outputs the $(\x, y)$ pair to implement $g_{{\bB},\x}$, $g_{{\bB},y}$. For discrete tasks, the $g_{{\bB},\x}$ head outputs logits for reconstruction. The encoder network to embed the input is a 3-layer MLP, as shown in \cref{tab:gen_n_enc}. For discrete tasks, we add an additional embedding layer to map discrete inputs to continuous vectors. The network architectures are shared across different task domains; we only adjust the input size for inputs from different tasks. For function values $y$ and continuous $\x$, we normalize these values for training using the offline dataset maxium ${\x}_{\rm max}, y_{\rm max}$ and minimum ${\x}_{\rm min}, y_{\rm min}$: ${\x}_{\rm train} = \frac{\x - {\x}_{\rm min}}{{\x}_{\rm max} - {\x}_{\rm min}}$. $y_{\rm train} = \frac{y - y_{\rm min}}{y_{\rm max} - y_{\rm min}}$.

\begin{table}[ht!]
\begin{minipage}{.47\textwidth}
    \caption{\textbf{Network architecture for the log-ratio estimator $\Tilde{f}_{{{\bA}}_k}(\z, k)$.} $N$ is set to $3$ for image modeling and anomaly detection experiments, and $16$ for the offline \ac{bbo} experiments. ReLU denotes the Leaky ReLU activation function. The slope in Leaky ReLU is set to 0.2. For SVHN and CelebA datasets, we use \texttt{nz=100}. For the CIFAR-10 and CelebA-HQ datasets, we use \texttt{nz=128} and \texttt{nz=512}, respectively. We use \texttt{nz=8} for anomaly detection on the MNIST dataset; \texttt{nz=20} is the latent space dimension used for offline \ac{bbo} experiments.}
    \label{tab:appx_logr} 
    \vskip 0.1in
    \centering
    \small
    \begin{tabular}{ccc}
        \toprule
        {\bf Layers} & {\bf Output size} & {\bf Note} \\
        \cmidrule(r){1-1} \cmidrule(r){2-2} \cmidrule(r){3-3}
        
        \multicolumn{3}{c}{\bf Indx. Emb.} \\
        \hline
        Input: $k$ & $1$ & \tabincell{c}{stage indx. \\ of \ac{n2ce}} \\
        Sin. emb. & $128$ & {} \\
        Linear, LReLU & $128$ & \tabincell{c}{ neg\_slope \\ $0.2$} \\
        Linear & $128$ & {} \\
        
        \hline
        \multicolumn{3}{c}{\bf Input Emb.} \\
        \hline
        Input: $\z$ & $nz = 20$ & {} \\
        Linear, LReLU & $128$ & \tabincell{c}{neg\_slope \\ $0.2$} \\
        Linear & $128$ & {} \\
        
        \hline
        \multicolumn{3}{c}{$\mathbf{\Tilde{f}_{{{\bA}}_k}(\z, k)}$} \\
        \hline
        Input: \tabincell{c}{$\z, t$} & \tabincell{c}{$1$, $nz = 20$} & {} \\
        \tabincell{c}{Input \& Indx. \\ Emb.} & $128 \times 2$ & \tabincell{c}{Emb. of \\ each input} \\
        Concat. & \tabincell{c}{$256$} & {} \\
        LReLU, Linear & $128$ & \tabincell{c}{neg\_slope \\ $0.2$} \\
        N ResBlocks & $128$ & \tabincell{c}{LReLU, Linear \\ $+$ Input Skip} \\
        LReLU, Linear & $1$ & energy score \\
        \bottomrule
    \end{tabular}
\end{minipage}\quad\quad\quad
\begin{minipage}{.47\textwidth}
\caption{\textbf{Generator and encoder network architectures.} LReLU indicates the Leaky-ReLU activation function. $d_{\x}$ is the dimension of input $\x$ mentioned in \cref{tab:appx_db_stats}. For discrete tasks, $d_{\x}$ specifies the length of input, and $L$ denotes number of possible discrete tokens.}
  \label{tab:gen_n_enc}
  \vskip 0.1in
  \centering
  \small
     \begin{tabular}{ccc}
     \toprule
     \multicolumn{3}{c}{\bf Generator} \\
     \hline
     {\bf Layers} & {\bf Out Size} & {\bf Note} \\ 
     \hline
     Input: $\z$ & $nz = 20$ {} \\
     \hline
	 Linear, LReLU & $128$ & 
      \multirow{4.5}{*}{
      \tabincell{c}{neg\_slope \\ $0.2$}
      } \\
      Linear, LReLU & $64$ & {}\\
      Linear, LReLU & $32$ & {}\\
      Linear, LReLU & $32$ & {}\\
      \hline
      \multicolumn{3}{c}{\bf Continuous Output} \\
	 Linear & $d_{\x} + 1$ & output $(\x, y)$\\ 
      \multicolumn{3}{c}{\bf Discrete Output} \\
	 Linear & $d_{\x} \times L + 1$ & {num. tokens. $L$}\\ 
     \bottomrule
     \toprule
     \multicolumn{3}{c}{\bf Encoder} \\
     \hline
     {\bf Layers} & {\bf Out Size} & {\bf Note}\\ 
     \hline
     \multicolumn{3}{c}{\tabincell{c}{{\bf Embedding} \\ (for discrete $\x$ only)}} \\
     Input: $\x$ & $32 \times d_{\x}$ & emb. $\x$ \\
     \hline
     concat $(\x, y)$ & $(32\times) d_{\x} + 1$ & input $(\x, y)$\\
     \hline
      Linear, LReLU & $128$ & 
      \multirow{2.2}{*}{
      \tabincell{c}{neg\_slope \\ $0.2$}
      } \\
      Linear, LReLU & $128$ & {}\\
      \hline
      \multicolumn{3}{c}{\bf Mean head} \\
      Linear & $20$ & {}\\
      \hline
      \multicolumn{3}{c}{\bf Var. head} \\
      Linear & $20$ & {}\\
     \bottomrule
     \end{tabular}
\end{minipage}
\end{table}

\paragraph{Hyperparameters and implementation details}
We use the linear schedule as in \citet{ho2020denoising} and \citet{yu2022latent} to specify $\sigma_k^2$ and construct the intermediate distributions $\{q_k\}_{k=0}^K$. We set the number of stages to 6. Specifically, the sequence of $\{\sigma_k^2\}_{k=0}^5$ is: 
$
\{\sigma_k^2\}_{k=0}^5 = [0.01, 0.3237, 0.5165, 0.6322, 0.7132, 0.7734, 0.9997].
$
We re-weight the objective for each ${\bA}_k$ with $w_k = \sqrt{\sigma_K / \prod_{i=k}^K \sigma_i}$ to further emphasize stages closer to the final target distribution $q_{K+1}$, as in \citet{yu2022latent}. To estimate the \cref{equ:n2ce_obj}, we can use Monte-Carlo average: 
$
\LL_{M}({\bA}_k) \approx \frac{1}{n_1} \sum_{i=1}^{n_1} 
\left[\log \frac{r_{{\bA}_k}(\z_i)}{M + r_{{\bA}_k}(\z_i)}\right]
+ 
\frac{M}{n_2} \sum_{j=1}^{n_2} \left[\log \frac{M}{M + r_{{\bA}_k}(\z_j)}\right]
$, where $\z_i \sim q_{k+1}, \z_j \sim q_k$. We follow \citet{mnih2012fast} and set $n_2 = M n_1$ in practice, where $M=100$ for all experiments; $n_1$ is the batch size.
When training with \ac{lebm}, we set the weight for reconstructing $\x$ and predicting $y$ to 50 for the term $\E_{q_{\bP}} [\log p_{\bB}(\x, y|\z)]$. We set the weight for KL term between $q_{\bP}$ and $p_{\bA}$ to $0.75$. 
We apply orthogonal regularization \citep{brock2016neural} to the parameters of decoder networks, with a weight of $0.001$ to facilitate training. These hyperparameters are shared across all the experiments.

The parameters of all the networks are initialized with the default pytorch methods \citep{paszke2019pytorch}. We use the Adam optimizer \citep{kingma2014adam} with $\beta_1=0.5$ and $\beta_2=0.999$ to train the generator and encoder networks; the log-ratio estimator network is trained with default $(\beta_1, \beta_2)$. The initial learning rates of the generator and encoder networks are \texttt{1e-4}, and \texttt{5e-5} for the log-ratio network. We further perform gradient clipping by setting the maximal gradient norm as 100 for these networks. We run the experiments on a A6000 GPU with the batch size of $128$. Training typically converges within 30K iterations on all the datasets.

When sampling with \ac{svgd} for optimization, the number of \ac{svgd} iterations $T$ is set to 500. For all experiments, we use RBF kernel 
$
k(\z, \z_0) = \exp\left(
-\frac{1}{2 h^2} \|\z - \z_0 \|_2^2
\right)
$ for \ac{svgd}, and set the bandwidth to be 
$
h^2 = \frac{{\rm med}^2}{2 \log(n + 1)}
$.
${\rm med}$ is the median of the pairwise distance between the current samples $\{\z_i^t\}_{i=1}^Q$; this follows the same heuristic as in \citet{liu2016stein} that 
$\sum_j k(\z_i, \z_j) \approx 
n \exp(-\frac{1}{2h^2} {\rm med}^2) = 1$.
We can balance the contribution from its own gradient and the influence from the other points
for each sample $\z_i$. Of note, the bandwidth changes adaptively across iterations with this heuristic. For stability, we use Adam \citep{kingma2014adam} with default hyperparemters instead of AdaGrad \citep{duchi2011adaptive} used in \citet{liu2016stein} to allow for adaptive step size $\epsilon_t$. The initial step size is within the range of $[0.3, 0.5]$ for different tasks.

\newpage
\clearpage
\section{Additional Experiment Details \& Supplementary Results}
\label{appx:add_exp}

\subsection{Gaussian Simulation}
\label{appx:gauss_simu}

We validate our analysis in \cref{prop:grad_finite_ver} and \cref{thm:convergence_rate} on a Gaussian location family. 
Specifically, we consider $p_{\bA}(\x) = \mathcal N(\x;\bA,I_d)$, where $\bA \in \mathbb R^d$ denotes the mean parameter and $I_d$ is the identity covariance. The target distribution is set to $q_* = \mathcal N(\bA^*,I_d)$, and the noise distribution to $q_0 = \mathcal N(0,I_d)$. Unless otherwise noted, we use $d=2$ for visualization. 

The log-ratio and its gradient admit closed forms:
\[
f_{\bA}(\x) = \bA^\top \x - \tfrac12\|\bA\|^2,
\qquad 
\nabla_{\bA} f_{\bA}(\x) = \x - \bA.
\]
Hence, the noisier NCE gradient is
\[
\nabla_{\bA} \LL_M(\bA)
= \E_{q_*}\!\left[\tfrac{M}{M+r_{\bA}(\x)}\,\left(\x - \bA\right)\right]
- \E_{p_{\bA}}\!\left[\tfrac{M}{M+r_{\bA}(\x)}\,\left(\x - \bA\right)\right],
\]
where $r_{\bA}(\x) = \tfrac{p_{\bA}(\x)}{q_0(\x)}$. As $M\to\infty$, this reduces to the exact MLE gradient 
\[
\nabla_{\bA} \LL_{\mathrm{MLE}}(\bA) = \E_{q_*}[\x]-\E_{p_{\bA}}[\x] = \bA^*-\bA.
\]

\paragraph{Setup.}
We set $\bA^* = (1.5,-0.8)$ and initialize at $\bA_0 = (-2.0,1.0)$. Gradients are approximated by Monte Carlo with $n=4000$ samples per iteration. Optimization is performed with step size $\eta=0.2$ for $T=150$ iterations. We track several metrics during training, including the distance to the optimum $\|\bA_t-\bA^*\|_2$ and mean and median of absolute gradient errors. We report results across different noise magnitudes $M \in \{1,2,10,50,100,1000\}$, with $M=1$ corresponding to vanilla NCE and large $M$ approximating MLE.

\paragraph{Further comparison with Nguyen-Wainwright-Jordan (NWJ) and simple reweighting.} We also compare our method with the NWJ objective \citep{nguyen2010estimating}:
\[
\LL_{\rm NWJ}(\bA) = \E_{q_*}[\log r_{\bA}]-\E_{q_{0}}[r_{\bA}] \implies 
\nabla_{\bA} \LL_{\rm NWJ}(\bA) = \E_{q_*}[\x - \bA]-\E_{q_{0}}[r_{\bA}(\x)\left(\x - \bA \right)],
\]
as well as a variant of our \ac{n2ce} objective, where we only use \ac{n2ce}-style weighting for the negative part of the objective and use the positive part of the vanilla \ac{n2ce} objective:
\[
\LL_{\rm neg-reweight}(\bA) =
    \E_{q_*(\x)} 
   \left[\log \frac{r_{{\bA}}(\x)}{1 + r_{{\bA}}(\x)}\right]
    +
   M\E_{q_{0}(\x)} 
   \left[\log \frac{M}{M + r_{{\bA}}(\x)}\right].
\]
Similar to our previous setting, we consider in this case 5-d gaussians. We set $\bA^* = ([-1.5 , -0.75,  0.  ,  0.75,  1.5 ])$ and initialize at $\bA_0 = -\bA^*$, \ie, on the opposite side. We use different number of Monte Carlo samples to approximate different regimes and for better visual comparison: (i) Gradients are approximated by Monte Carlo with $n=500$ samples per iteration, corresponding with low gradient estimation variance in \cref{prop:grad_finite_ver}, and (ii) with $n=2$ samples per iteration, corresponding with high estimation variance. Optimization is performed with step size $\eta=0.2$ for $T=150$ iterations. We plot again the distance to the optimum $\|\bA_t-\bA^*\|_2$ to illustrate the training trajectories. We plot a single-run visualization of the optimization trajectory shown in \cref{fig:gauss_simu_more}. Further, for more systematic and scientific comparisons, we use the trajectory-wise average squared norm $\tfrac{1}{T}\sum_{t=0}^{T-1} \| \theta_t - \theta^*\|_2^2$ as a summary of one run. We repeat 100 independent experiments and report the means and stds in \cref{tab:appx_gaussian_simu_100r_n2,tab:appx_gaussian_simu_100r_n500}. We can see that (i) with $n=2/500$, the optimal-$M$ \acp{n2ce} are indeed much better than those of NWJs, demonstrated by both lower mean values and lower stds. (ii) When $M$ is sufficiently large, \ac{n2ce} empirically approaches the \ac{nwj} objective with $T=\log r$.

\begin{figure}[t!]
    \centering
    \begin{subfigure}[t]{0.48\linewidth}
        \centering
        \includegraphics[width=\linewidth]{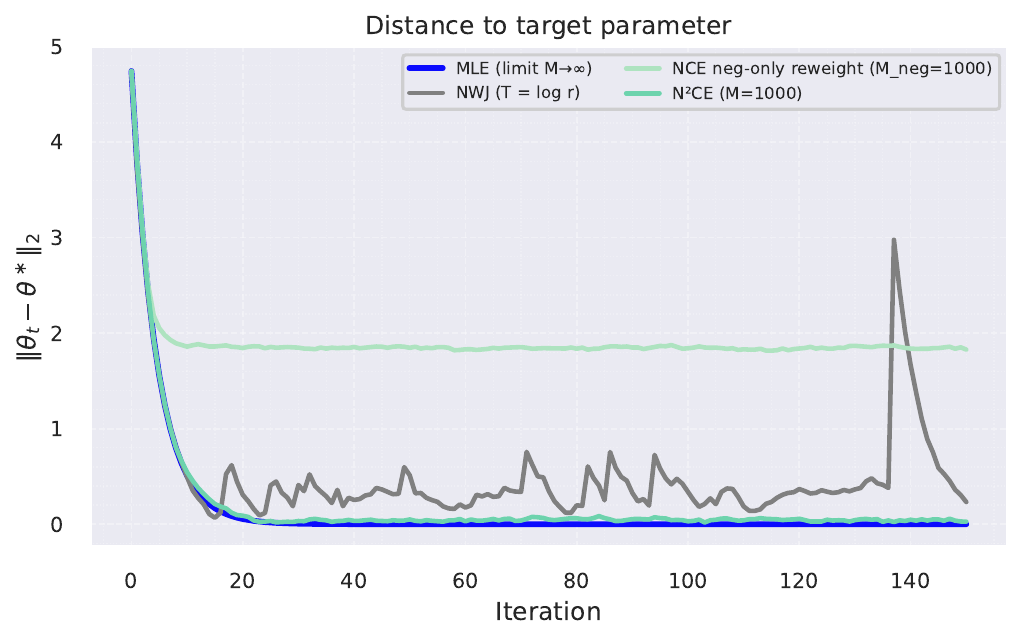}
        \caption{}
        \label{subfig:toy_low}
    \end{subfigure}
    \hfill
    \begin{subfigure}[t]{0.48\linewidth}
        \centering
        \includegraphics[width=\linewidth]{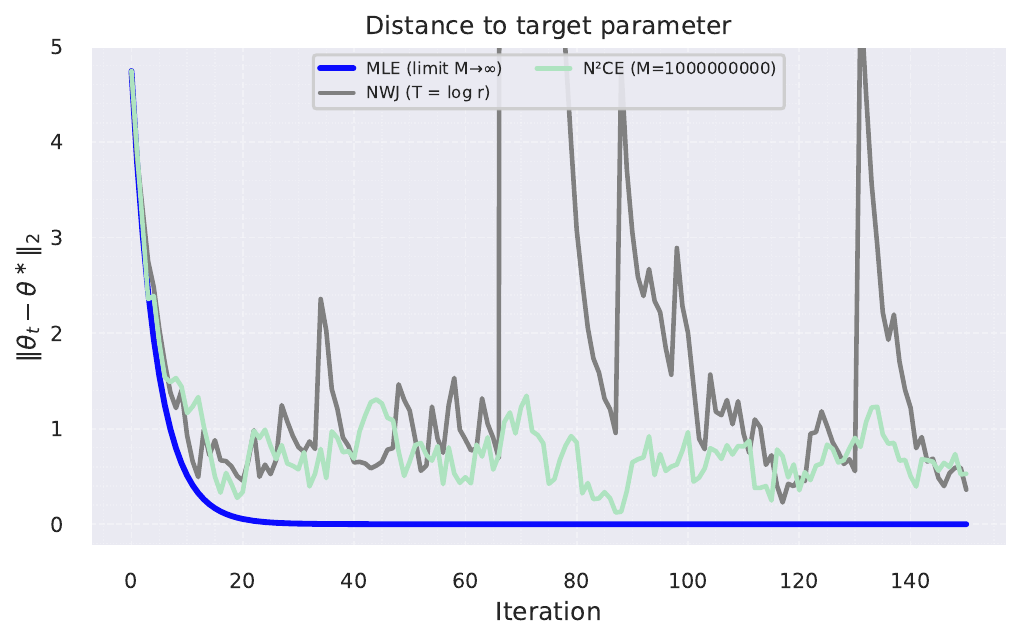}
        \caption{}
        \label{subfig:toy_high}
    \end{subfigure}   
    \caption{\textbf{\ac{n2ce} gradients can approach \acs{mle} gradients with appropriate $M$s, while NWJ and simple reweighting cannot.} We plot trajectories represented by $L_2$ norms between model and target parameters.}
    \label{fig:gauss_simu_more}
\end{figure}

\begin{table}[!hbtp]
\centering
\footnotesize
\setlength{\tabcolsep}{3pt}
\renewcommand{\arraystretch}{1.05}
\caption{\textbf{MSE sweep for $n=2$.} Optimal $M$ predicted by Prop.~3.3: $C\sqrt{2} \approx 1.414C$.}
\label{tab:appx_gaussian_simu_100r_n2}
\begin{tabular}{@{}lccccc@{}}
\toprule
$M$ & NWJ ($T=\log r$) & 1 & 1.5 & 2 & 5 \\
\midrule
avg.\ runs MSE 
& {\color{tgray} $54.548 \pm 237.528$}
& $0.904 \pm 0.084$
& $\mathbf{0.884 \pm 0.083}$
& $0.888 \pm 0.084$
& $0.983 \pm 0.107$ \\
\midrule
$M$ & 10 & 100 & 1000 & $10^9$ & \\
\midrule
avg.\ runs MSE 
& $1.139 \pm 0.168$
& $3.158 \pm 2.683$
& $17.564 \pm 42.751$
& {\color{tgray} $61.291 \pm 277.336$}
& \\
\bottomrule
\end{tabular}
\end{table}

\begin{table}[!htbp]
\centering
\footnotesize
\setlength{\tabcolsep}{3pt}
\renewcommand{\arraystretch}{1.05}
\caption{\textbf{MSE sweep for $n=500$}. Optimal $M$ predicted by Prop.~3.3: $C\sqrt{500} \approx 22.36C$.}
\label{tab:appx_gaussian_simu_100r_n500}
\begin{tabular}{@{}lccccc@{}}
\toprule
$M$ & NWJ ($T=\log r$) & 1 & 10 & 50 & 100 \\
\midrule
avg.\ runs MSE 
& {\color{tgray} $1.359 \pm 5.454$}
& $0.678 \pm 0.004$
& $0.489 \pm 0.004$
& $0.456 \pm 0.006$
& $\mathbf{0.453 \pm 0.007}$ \\
\midrule
$M$ & 1000 & $10^4$ & $2\!\times\!10^4$ & $10^9$ & \\
\midrule
avg.\ runs MSE 
& $0.489 \pm 0.020$
& $0.641 \pm 0.324$
& $0.750 \pm 0.889$
& {\color{tgray} $1.909 \pm 11.201$}
& \\
\bottomrule
\end{tabular}
\end{table}

\subsection{Image Modeling}
\label{appx:lebm_img_model} 
For image modeling in \cref{sec:exp_learn_lebm}, we include the following datasets to study our method: SVHN (32 × 32 × 3), CIFAR-10 (32 × 32 × 3), CelebA (64 × 64 × 3) and CeleAMask-HQ (256 x 256 x 3). Following \citet{pang2020learning}, we use the full training set of SVHN (73,257) and CIFAR-10 (50,000), and take 40,000 samples of CelebA as the training data. We take 29,500 samples from the CelebAMask-HQ dataset as the training data. The images are scaled to $[-1, 1]$ and are randomly horizontally flipped with a prob. of .5 for training.

\begin{table}[!h]
\caption{\textbf{AUPRC($\uparrow$) scores for unsupervised anomaly detection on MNIST}. Baseline numbers are taken from \citet{yoon2023energy,yu2024learning}. Results of our model are averaged over 10 trials.}
\vspace{0.1in}
\label{table:auprc_var}
\centering
\resizebox{0.90\linewidth}{!}{
\begin{tabular}{lccccc}
\toprule
Heldout Digit & 1 & 4 & 5 & 7 & 9 \\
\midrule
AE
&$0.062 \pm 0.00$ 
&$0.204 \pm 0.00$ 
&$0.259 \pm 0.01$ 
&$0.125 \pm 0.00$ 
&$0.113 \pm 0.00$\\
VAE
&0.063 
&0.337 
&0.325 
&0.148 
&0.104\\
ABP
&$0.095 \pm 0.03$ 
&$0.138 \pm 0.04$
&$0.147 \pm 0.03$ 
&$0.138 \pm 0.02$ 
&$0.102 \pm 0.03$ \\
IGEBM
&$0.101 \pm 0.02$ 
&$0.106 \pm 0.02$
&$0.205 \pm 0.11$ 
&$0.100 \pm 0.04$ 
&$0.079 \pm 0.02$ \\
MEG
&$0.281 \pm 0.04$ 
&$0.401 \pm 0.06$ 
&$0.402 \pm 0.06$ 
&$0.290 \pm 0.04$ 
&$0.342 \pm 0.03$ \\
BiGAN-$\sigma$
&$0.287 \pm 0.02$ 
&$0.443 \pm 0.03$ 
&$0.514 \pm 0.03$ 
&$0.347 \pm 0.02$ 
&$0.307 \pm 0.03$ \\
ABP-LEBM 
&$0.336 \pm 0.01$ 
&$0.630 \pm 0.02$
&$0.619 \pm 0.01$ 
&$0.463 \pm 0.01$ 
&$0.413 \pm 0.01$ \\
JVAEBM 
&$0.297 \pm 0.03$ 
&$0.723 \pm 0.04$
&$0.676 \pm 0.04$ 
&$0.490 \pm 0.04$ 
&$0.383 \pm 0.03$ \\
Adaptive CE
&$0.531 \pm 0.02$ 
&$0.729 \pm 0.02$
&$0.742 \pm 0.01$ 
&$0.620 \pm 0.02$ 
&$0.499 \pm 0.01$\\
NAE
&$0.802 \pm 0.08$ 
&$0.648 \pm 0.05$
&$0.716 \pm 0.03$ 
&$0.789 \pm 0.04$ 
&$0.441 \pm 0.07$\\
MPDR-S
&$0.764 \pm 0.05$ 
&$0.823 \pm 0.02$
&$0.741 \pm 0.04$ 
&$\textbf{0.857} \pm 0.02$ 
&$0.478 \pm 0.05$\\
MPDR-R
&$0.844 \pm 0.03$ 
&$0.711 \pm 0.03$
&$0.757 \pm 0.02$ 
&$\underline{0.850} \pm 0.01$ 
&$0.569 \pm 0.04$\\
\midrule
DAMC
&$0.684 \pm 0.02$ 
&$0.911 \pm 0.01$ 
&$0.939 \pm 0.02$ 
&$0.801 \pm 0.01$ 
&$\underline{0.705} \pm 0.01$ \\
DAMC-\ac{nce}
&$0.702 \pm 0.01$ 
&$0.829 \pm 0.02$ 
&$0.764 \pm 0.01$ 
&$0.605 \pm 0.01$ 
&$0.502 \pm 0.02$ \\
\cmidrule(l){1-1}
DAMC-\ac{n2ce} \\
\rowcolor{gray}
\texttt{M=100,K=1}
&$\underline{0.910} \pm 0.02$ 
&$\underline{0.911} \pm 0.01$ 
&$\underline{0.935} \pm 0.01$ 
&$0.779 \pm 0.01$ 
&$0.699 \pm 0.02$ \\
\rowcolor{gray}
\texttt{M=100,K=3}
&$\textbf{0.959} \pm 0.01$
&$\textbf{0.935} \pm 0.01$
&$\textbf{0.959} \pm 0.02$
&$0.845 \pm 0.01$  
&$\textbf{0.854} \pm 0.01$  \\
\bottomrule
\end{tabular}
}
\end{table}

\subsection{Anomaly Detection}
\label{appx:anomaly_det}
\paragraph{Experiment settings}
For anomaly detection in \cref{sec:exp_learn_lebm}, we consider the MNIST (28 x 28 x 1) dataset; we follow the experimental settings in \cite{zenati2018efficient,yu2024learning} and use 80\% of the in-domain data to train the model. We base our method upon the DAMC \citep{yu2024learning} model for posterior inference, where we use the \ac{lebm} learned by \ac{n2ce} as a plug-in replacement of the
original prior model. With properly learned models, the posterior $p_{{\bT},{\bP}}(\z|\x)$ could form a discriminative latent space that has separated probability densities for normal and anomalous data. Given the testing sample $\x$, we use log of unnormalized joint density $p_{{\bT},{\bP}}(\z | \x) \propto p_{{\bT},{\bP}}(\x, \z) \approx p_{\bB}(\x | \z) p_{\bA}(\z)|_{\z \sim q_{\bP}(\z|\x)}$ as our decision function; we draw samples from $q_{\bP}(\z|\x)$ and compare the corresponding reconstruction errors and energy scores. 
A higher value of log joint density indicates a higher probability of the test sample being a normal sample.

\paragraph{Baselines}
We benchmark our method against three groups of strong counterparts: i) data-space energy-based models, including IGEBM \citep{du2019implicit}, MEG \citep{kumar2019maximum}, JVAEBM \citep{han2020joint} and MPDR-S/R \citep{yoon2023energy}; ii) latent variable models, including AE, VAE \citep{kingma2013auto}, ABP \citep{han2017alternating}, ABP-LEBM \citep{pang2020learning}, Adaptive CE \citep{xiao2022adaptive}, NAE \citep{yoon2021autoencoding} and DAMC \citep{yu2024learning}, and iii) GAN-Based method like BiGAN-$\sigma$ \citep{zenati2018efficient}.

\subsection{Reward and Critic Learning for Diffusion Distillation}
\label{appx:reward_n_critic_abl}
\begin{table}[!htbp]
\caption{\textbf{Ablative FID scores of $M$ on CIFAR-10 under \cite{yoon2024maximum} setting.}}
\label{tab:abl_M_dxmi}
\begin{center}
\begin{tabular}{ccCc}
\toprule
\rowcolor{white}
$M=1$ & $M=50$ & {\color{tgray}$M=100$} & $M=200$\\
\midrule
3.93
&3.05
&\textbf{2.99}
&3.04 \\
\bottomrule
\end{tabular}
\end{center}
\end{table}

\begin{table}[!htbp]
\caption{\textbf{Ablative uncond. FID scores of $M$ on CIFAR-10 under \cite{zhou2024adversarial} setting.}}
\label{tab:abl_M_sida}
\begin{center}
\begin{tabular}{ccCc}
\toprule
\rowcolor{white}
$M=1$ & $M=10$ & {\color{tgray}$M=50$} & $M=100$\\
\midrule
1.53
&1.51
&\textbf{1.45}
&1.50 \\
\bottomrule
\end{tabular}
\end{center}
\end{table}

We further provide ablation of $M$ on CIFAR-10 in \cref{tab:abl_M_dxmi,tab:abl_M_sida}. We can see that greater $M$, \eg, $M=50$ or $M=100$ is significantly better than smaller $M$, especially $M=1$. However, larger $M$ may also incur larger approximation variance based on our analysis in \cref{prop:grad_finite_ver}. We therefore choose $M=100$ and $M=50$ respectively after the sweep for these experiments. These results further confirm our assumption about the noise magnitude.

\begin{figure}[t!]
    \centering
    \begin{subfigure}[t]{0.48\linewidth}
        \centering
        \includegraphics[width=\linewidth]{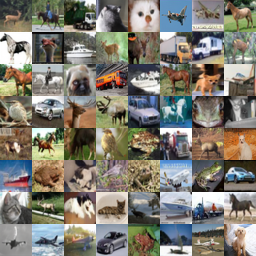}
        \caption{}
        \label{subfig:cifar}
    \end{subfigure}
    \hfill
    \begin{subfigure}[t]{0.48\linewidth}
        \centering
        \includegraphics[width=\linewidth]{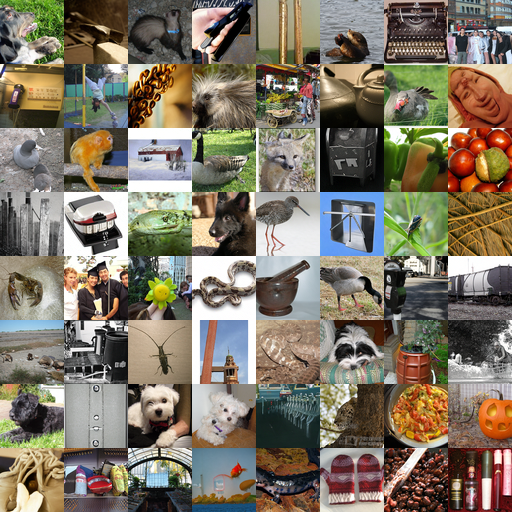}
        \caption{}
        \label{subfig:imgnet}
    \end{subfigure}
    
    \caption{\textbf{Uncurated 1-step generation results from our distilled EDM models.} 
    We present 1-step generation results from our distilled models on CIFAR-10 (left) and ImageNet64x64 (right) datasets.}
    \label{fig:1_step_gen}
\end{figure}

\newpage
\subsection{Offline Black-Box Optimization}
\label{appx:bbo_general}
\subsubsection{Problem Statement}
Let $h: \cX \rightarrow \R$ be a scalar function; the domain $\cX$ is an arbitrary subset of $\R^d$. In \acf{bbo}, $h$ is an unknown black-box function; we are interested in finding $\x^*$ that maximizes $h$:
\begin{equation}
\label{equ:bbo_def}
    \x^* \in \argmax_{\x \in \cX} h(\x).
\end{equation}
Typically, $h$ is expensive to evaluate. In offline \ac{bbo}, we assume no direct access to $h$ during training, and are thus \textbf{not allowed} to actively query the black-box function during optimization, unlike in online \ac{bbo} where most approaches would employ iterative online solving schemes \citep{kong2023molecule,kong2023moleculeLPT,shahriari2015taking,snoek2012practical}. Specifically, in the \textit{offline} \ac{bbo} setting \citep{trabucco2022design}, one has only access to a pre-collected dataset of observations, $\cD  = \{(\x_i, y_i)\}_{i=1}^n$, where $h(\x_i) = y_i$. During evaluation, one may query the function $h$ for a small budget of $Q$ queries to output candidates with the best function value obtained.

\subsubsection{Learning Latent Variable Models for Offline Black-Box Optimization}

\begin{figure}[h!]
    \centering
    \begin{tikzpicture}
        \node [nbase, fill=gray] (x) at (-3.2,1.0) {$\x$};
        \node [nbase, fill=gray] (y) at (-3.2,-1.0) {$y$};
        \node [nbase] (zK1) at (-0.75,0) {$\z_{K+1}$};
        \node [nbase] (zK) at (0.5,0) 
        {$\z_{K}$};
        \node [nbase] (z1) at (2,0) 
        {$\z_{1}$};
        \node [nbase] (z0) at (3.25,0) 
        {$\z_{0}$};
    
        \node (encode) at (-2.5, 0.0)
            {$q_{\bP}(\z|\x, y)$};
        \node (decode_x) at (-2.2,  1.5)
            {\color{red}$p_{{\bB}, \x}(\x|\z)$};
        \node (decode_y) at (-2.2, -1.5)
            {\color{red}$p_{{\bB}, y}(y|\z)$};

        \node[font=\fontsize{12}{6}\selectfont] (dots_l) at (1.25, 0)
        {\color{blue}$\mathbf{\cdots}$};
        \node[font=\fontsize{12}{6}\selectfont] (dots_u) at (1.25, 0.75)
        {\color{blue}$\mathbf{\cdots}$};
        
        \node (qK1qK) at (-0.1, 0.75)
        {\color{blue}$r_{{\bA}_K} \approx \frac{q_{K+1}}{q_K}$};
        \node (q1q0) at (2.55, 0.75)
        {\color{blue}$r_{{\bA}_0} \approx \frac{q_{1}}{q_0}$};
        
        \path [draw, ->, dashed] (x) edge (zK1);
        \path [draw, ->, color=red] 
            (zK1) edge [bend right] node [right] {} (x);
        \path [draw, ->, dashed] (y) edge (zK1);
        \path [draw, ->, color=red] 
            (zK1) edge [bend left] node [right] {} (y);
        \path [draw, <->, color=blue] (zK1) edge (zK);
        \path [draw, <->, color=blue] (z1) edge (z0);

        \tikzset{plate caption/.append style={below right=0pt and -30pt of #1.south}}
        \plate [color=pink] {part1} {(zK1)(zK)(z1)(z0)(qK1qK)(q1q0)}
        {{\color{pink}$\LL_{M\to\infty}({\color{blue}{\bA}_k}),~k=0,...,K$}};
        \tikzset{plate caption/.append style={below right=2pt and -30pt of #1.south}}
        \plate [color=gray] {part2} {(part1)}
        {${\color{red}p_{\bA}(\z)} = q_0(\z)\prod_{k=0}^K 
        {\color{blue}r_{{\bA}_{k}}(\z)}$};
    \end{tikzpicture}
    \caption{\textbf{Graphical illustration of our \ac{bbo} framework.} We construct an energy-based latent space model {\color{red}$p_{\bA}$} for offline \ac{bbo} via learning a series of ratio estimators {\color{blue}$\{r_{{\bA}_k}\}_{k=0}^K$} with the \acs{n2ce} objective {\color{pink}$\LL_{M\to\infty}$} to optimize the ELBO \textit{without} \ac{mcmc}. After training, we employ stochastic samplers like \ac{ld} or \acs{svgd} to perform \ac{bbo} by sampling from the implicit inverse model {\color{red} $p_{\bT}(\x | y) \propto \E_{p_{\bT}(\z | y)} [p_{{\bB}, \x}(\x | \z)]$}, where {\color{red} $p_{\bT}(\z | y) \propto p_{{\bB}, y}(y | \z) p_{\bA}(\z)$} given $y$. Best viewed in color.}
    \label{fig:pgm_opt}
\end{figure}

We consider again variational learning of the latent variable model. Specifically, the ELBO in this setting would be
\begin{equation}
    \begin{aligned}
    \ELBO_{{\bT}, \bP} 
         &= \log p_{\bT}(\x, y) - \KL( q_{\bP}(\z|\x, y) \| p_{\bT}(\z|\x, y) ) \\
         &= \E_{q_{\bP}(\z|\x, y)} [
            \log p_{\bB}(\x, y|\z)] 
          - \KL( q_{\bP}(\z|\x, y) \| p_{\bA}(\z) ),
    \end{aligned}
    \label{equ:elbo_opt}
\end{equation}
where in $p_{\bT}(\x, y)$, $p_{{\bB}, \x}(\x|\z) = \N(g_{{\bB}, \x}(\z), \sigma_x^2 \I)$ for continuous $\x$ and multinomial for discrete $\x$, and $p_{{\bB}, y}(y|\z) = \N(g_{{\bB}, y}(\z), \sigma_y^2 \I)$. $g_{{\bB}, \x}$ and $g_{{\bB}, y}$ is the generator network for $\x$ and prediction network for $y$, respectively. $\sigma_x^2, \sigma_y^2$ take pre-specified values. $q_{\bP}(\z|\x, y) = \N({\bM}_{\bP}, {\bS}_{\bP}\I)$ is the amortized posterior. See \cref{fig:pgm_opt} for the graphical illustration.

\paragraph{Optimization as gradient-based sampling}
Once the latent variable model with $p_{\bA}$ is properly trained with \ac{n2ce}, we can sample from the parameterized inverse model $p_{\bT}(\x | y) \propto \E_{p_{{\bT}}(\z | y)} [p_{{\bB}, \x}(\x | \z)]$ to solve the \ac{bbo} problem. Specifically, we \textbf{i)} sample $\z$ from $p_{\bT}(\z | y) \propto p_{{\bB}, y}(y | \z) p_{\bA}(\z)$ with \ac{ld} or \ac{svgd}, where $y$ is set to the offline dataset maximum $y_{\rm max}$ as in \citep{krishnamoorthy2023diffusion}; and \textbf{ii)} feed them into the generator network $g_{{\bB}, \x}$ to obtain the proposed input designs $\x$.  
Intuitively, $p_{\bT}(\z | y=y_{\rm max})$ exploits the offline high value input design modes, while sampling from $p_{\bT}$ and mapping from $z$ to $x$ with $p_{{\bB}, \x}(\x | \z)$ achieves expanded latent exploration guided by the latent space model $p_{\bA}$. We provide an interpretation of latent exploration in \cref{appx:interp_opt}. 

\paragraph{\texorpdfstring{\acf{svgd}}{}}
A well-established counterpart of \ac{mcmc} for approximating the target distribution is \ac{svgd}, first proposed in the seminal work of \citet{liu2016stein}. Specifically, it tackles the sampling problem with a set of particles for approximation, on which a form of (functional) gradient descent is performed to minimize the \ac{kld} and drive the particles to fit the true target distribution. Given an unnormalized log-density function $l(\z)$, and an initial set of samples $\{\z_i^0 \}_{i=1}^n$, we can iteratively update these samples as follows to approximate the distribution specified by $l$:
\begin{equation}
\label{equ:svgd}
\begin{aligned}
\z_i^{t+1} &= \z_i^t + \epsilon_t \hat{\phi}^*(\z_i^t),~t=0,1,...,T-1~{\rm where} \\
\hat{\phi}^*(\z) &= \frac{1}{n}\sum_{j=1}^n 
\left[
k(\z_j^t, \z) \nabla_{\z_j^t} l(\z_j^t) + 
\nabla_{\z_j^t} k(\z_j^t, \z)
\right],
\end{aligned}
\end{equation}
where $k(\cdot, \cdot)$ is a positive definite kernel (\eg, RBF kernel) and $\epsilon_t$ the step size at the $t$-th iteration. We formulate the optimization process as sampling from an unnormalized log-density and utilize the mode-exploration ability of \ac{svgd} to propose high-quality candidates for the \ac{bbo} problem. 

\subsubsection{Interpretation of Optimization as Gradient-Based Sampling}
\label{appx:interp_opt}
In order to extrapolate as far as possible, while still staying on the actual manifold of high-value observations, we need to measure the validity of the generated samples $\x$ as in \citet{trabucco2021conservative}. We identify the value of $y$ from $p_{{\bB},y}(y | \z)$ (forward model) serves as one good indicator as in \cite{kumar2020model}. 
The generated samples $\x$, associated with function values similar to existing $y$ values in the offline dataset, are likely in-distribution; those where the $p_{{\bB},y}(y | \z)$ predicts a very different score (often erroneously large, see \cref{fig:branin_y} when $y > 1$) can be too far outside the training distribution \citep{kumar2020model,mashkaria2023generative,krishnamoorthy2023diffusion}. 
In addition, ideally, after training we have a well-learned and informative prior model $p_{\bA}$ that i) captures the multiple possible modes of the one-to-many inverse mapping $p_{\bT}(\x | y)$, and ii) faithfully assigns the function values to observations through the joint latent space (\cref{fig:M_viz,fig:branin_y}). Taking together, we can optimize over $\z$ in the latent space to find the best, most trustworthy $\x$ subject to the following constraints: i) $\z$ has a high likelihood under the prior and ii) the predicted function value associated with $\z$ is not too different from the value $y_{\rm max}$. This can be formulated as the following optimization problem:
\begin{equation}
\label{equ:copt_form}
\argmax_{y(\z)} y = g_{\bB}(\z)~s.t.~p_{\bB}(y_{\rm max} | \z) > \epsilon_1,~p_{\bA}(\z) > \epsilon_2.
\end{equation}
The above constrained optimization problem is conceptually similar to the formulation mentioned in \citet{kumar2020model} Sec. 3.2. 
The $p_{\bB}(y_{\rm max} | \z) > \epsilon_1$ constraint enforces that $\z$ stays on the manifold of high-value observations as we discussed; the $p_{\bA}(\z) > \epsilon_2$ constraint encourages extrapolating beyond the best score in the offline dataset $\cD$ by providing meaningful gradient during the optimization. 

In our set-up, the unnormalized log-density of $p_{{\bB}, y}(y | \z) p_{\bA}(\z)$ given the best offline function value $y_{\rm max}$ can be written as
$
- \lambda_1 \|y_{\rm max} - g_{{\bB}, y}(\z)\|_2^2 + 
\lambda_2 \left(
\sum_{k=0}^K \Tilde{f}_{{\bA}_k}(\z) - \frac{1}{2}\|\z\|_2^2
\right),
$
where $\lambda_1$ is the variance of $p_{{\bB}, y}(y|\z) = \N(g_{{\bB}, y}(\z), \sigma_y^2 \I)$ and $\lambda_2$ re-weights the prior term.
This is equivalent to optimizing the Lagrangian of \cref{equ:copt_form}, with the density constraints modified to be log-densities. In practice, we employ 
\acs{ld} or \acs{svgd} to fully utilize the informative latent space during optimization. Ideally, the dual variables, $\lambda_1$ and $\lambda_2$ are supposed to be optimized together with $y$ and $\z$, however, we find it convenient to choose them to be fixed as a constant throughout, at $\lambda_1 = 20.0$ and $\lambda_2 = 1.0$, analogous to penalty methods in constrained optimization. The same values of $\lambda_1$ and $\lambda_2$ are used for all domains in our experiments.

\subsection{2D Branin Function}
\label{appx:branin}
Branin is a well-known function for benchmarking optimization methods. We consider the negative of the standard 2D Branin function in the range $x_1 \in [-5, 10], x_2 \in [0, 15]$: 
\begin{equation}
    f_{br}(x_1, x_2) = -a(x_2 - bx_1^2 + cx_1 - r)^2 - s(1 - t)\cos{x_1} - s,
\end{equation}
where $a = 1$, $b = \frac{5.1}{4\pi^2}$, $c = \frac{5}{\pi}$, $r = 6$, $s = 10$, and $t = \frac{1}{8\pi}$. 
In this square region, $f_{br}$ has three global maximas, $(-\pi, 12.275)$, $(\pi, 2.275)$, and $(9.425, 2.475)$; with the maximum value of $-0.398$ (\cref{fig:branin}). 
The \ac{ga} baseline uses the offline dataset to train a forward prediction model parameterized by a 2-layer MLP mapping $\x$ to $y$ and then performs gradient ascent on $\x$ to infer its optima. DDOM and BONET follow their default implementation. For the typical forward method, \ac{ga} in the data space, the problem of assigning an input $\x$ with erroneously large value is clear even in this 2D toy example; the proposed points often violates the square domain constraints. This can be partly seen in the largest std. in \cref{tab:10p_removed_branin}.

\subsection{Design-Bench}
\label{appx:design_bench}
\subsubsection{Task Overview}
\label{appx:db_overview}
The goal of \textbf{TF-Bind-8} and \textbf{TF-Bind-10} is to optimize for a DNA sequence that has a maximum affinity to bind with a particular transcription factor. The sequences are of length 8 for \textbf{TF-Bind-8} and 10 for \textbf{TF-Bind-10}, where each element in the sequence is one of 4 bases.  \textbf{ChEMBL} optimizes drugs for specific chemical properties. In \textbf{D’Kitty} and \textbf{Ant Morphology}, one need to optimize for the morphology of robots. In \textbf{Superconductor}, the goal is to find a material with a high critical temperature. These tasks contain neither personally identifiable nor offensive information. We present detailed information on the tasks we evaluate on in Design-Bench \citep{trabucco2022design} as below in \cref{tab:appx_db_stats}. \textbf{SIZE} denotes the training dataset size, and {\bf NUM. TOKS} denotes the number of tokens for discrete tasks.
\begin{table}[!ht]
\centering
\caption{\textbf{Design-Bench dataset statistics.}}
\label{tab:appx_db_stats}
\vskip 0.1in
\begin{tabular}{ccccc}
\toprule
{\bf TASK} & {\bf SIZE} & {\bf INPUT DIM.} & {\bf NUM. TOKS} & {\bf TASK MAX} \\
\hline
TF-Bind-8 & 32898 & 8 & 4 & 1.0 \\ 
TF-Bind-10 & 10000 & 10 & 4 & 2.128 \\ 
ChEMBL & 441 & 31 & 185 & 443000.0 \\
\hline
D'Kitty & 10004 & 56 & {-} & 340.0 \\ 
Ant & 10004 & 60 & {-} & 590.0 \\ 
Superconductor & 17014 & 86 & {-} & 185.0  \\ 
\bottomrule
\end{tabular}
\end{table}

\subsubsection{HopperController \& NAS}
\label{appx:no_hopper}
As in \citet{krishnamoorthy2023diffusion} and \citet{mashkaria2023generative}, we exclude the HopperController task in our results due to significant inconsistencies between the offline dataset values and the values obtained when running the oracle (see \cref{fig:appx_hopper1,fig:appx_hopper2}). This is a known bug in Design-bench (see details in the \href{https://github.com/brandontrabucco/design-bench/issues/8}{link}). Therefore, we decided not to include Hopper in our evaluation and analysis. Following \citet{krishnamoorthy2023diffusion}, we exclude NAS due to excessive computation cost required beyond our budget for evaluating across multiple seeds.

\begin{figure}[!ht]
\centering
\begin{minipage}[t]{.49\textwidth}
    \centering
    \includegraphics[scale=0.45]{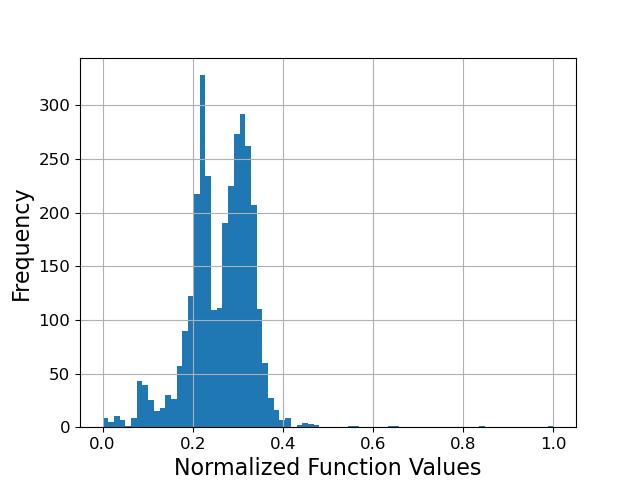}
    \captionof{figure}{Histogram of normalized function values in the Hopper dataset. The distribution is highly skewed towards low function values (plot from \citet{mashkaria2023generative}).}
    \label{fig:appx_hopper1}
\end{minipage}%
\hfill
\begin{minipage}[t]{.49\textwidth}
    \centering
    \includegraphics[scale=0.42]{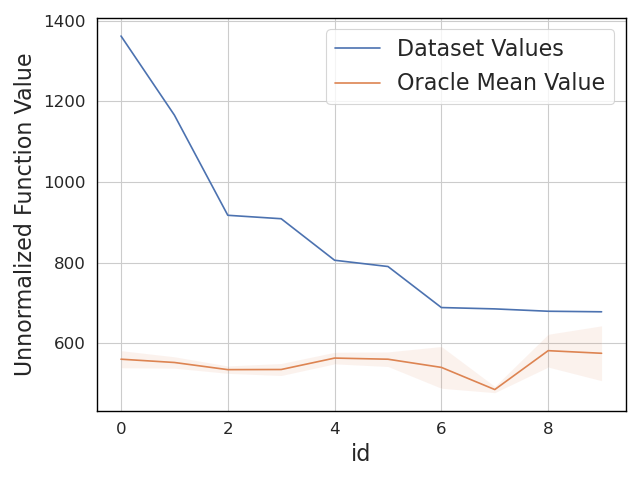}
    \captionof{figure}{Dataset values vs Oracle values for top $10$ points. We can see from mean and standard deviation over $20$ runs that oracle is noisy (plot from \citet{mashkaria2023generative}).}
    \label{fig:appx_hopper2}
\end{minipage}
\end{figure}

\subsubsection{Baselines}
\label{appx:db_baselines}
As mentioned in \cref{sec:exp_design_bench}, we compare our method with three groups of baselines: \textbf{i)} sampling via generative inverse models with different parameterizations, including CbAS \citep{brookes2019conditioning}, Auto.CbAS \citep{fannjiang2020autofocused}, MIN \citep{kumar2020model} and DDOM to validate our model design; \textbf{ii)} gradient(-like) updating from existing designs, \eg, \ac{ga}-based \citep{fu2020offline,trabucco2021conservative,chen2022bidirectional,qi2022data,yuan2024importance,chen2024parallel} and BONET and \textbf{iii)} baselines mentioned in \citep{trabucco2022design}. 
The generative inverse models learn $p_{\bT}(\x | y)$ with different parameterizations and then perform conditional sampling of the high-value designs for \ac{bbo}. CbAS \citep{brookes2019conditioning} directly models $p_{\bT}(\x | y)$, while introducing the joint distribution of $\x$ and $\z$ to facilitate importance sampling. It trains a VAE model to parameterize the distribution using identified high-quality inputs $\x$ (by thresholding); it gradually adapts the distribution to the high-value part by refining the threshold.
MIN \citep{kumar2020model} learns $p_{\bT}(\x | y) \propto \E_{p(\z | y)}[p(\x |\z, y)]$ with a conditional GAN-like model to instantiate $p(\x |\z, y)$ \citep{goodfellow2020generative,mirza2014conditional}. The method optimizes over $\z$ given the offline dataset maximum $y_{\rm max}$ and map $\z$ back to $\x$ for \ac{bbo} solutions. The optimization process over $\z$ draw approximate samples from $p(\z | y)$.
DDOM \citep{krishnamoorthy2023diffusion} consider directly parameterizing the inverse mapping $p_{\bT}(\x | y)$ with a conditional diffusion model in the input design space utilizing the expressiveness of DDPMs \citep{ho2020denoising}. 
The second group includes the gradient(-like) updating methods, and has the flavor of forward methods in general. 
The baseline methods include: (i) Gradient Ascent: operates directly on the learned surrogates to propose new input designs via simple gradient ascent; (ii) COMs \citep{trabucco2021conservative}:
regularizes the NN-parameterized surrogate by assigning lower function values to designs obtained during the gradient ascent process; (iii) ROMA \citep{yu2021roma}: incorporates prior knowledge about function smoothness into the surrogate and optimizes the design against the proxy; (iv) NEMO \citep{fu2020offline}: leverages normalized maximum likelihood to constrain the distance between the surrogate and the ground-truth;
(v) BDI \citep{chen2022bidirectional}: proposes to distill the information from the static dataset into the high-scoring design;
(vi) IOM \citep{qi2022data}: enforces the invariance between the representations of the static dataset and generated
designs to achieve a natural trade-off; (vii) ICT \citep{yuan2024importance} explores using a pseudo-labeler to generate valuable data for fine-tuning the surrogate. BONET \citep{mashkaria2023generative} trains a transformer-based model to mimic the optimization trajectories of black-box optimizers, and rolls out evaluation trajectories from the trained model for optimization.

In addition to these recent works, we also compare with several baseline methods described in \citet{trabucco2022design}:
1) GP-qEI \citep{wilson2017reparameterization}: instantiates \ac{bo} using a Gaussian Process as the uncertainty quantifier and the quasi-Expected Improvement (q-EI) acquisition function, 2)
CMA-ES \citep{hansen2006cma}: an evolutionary algorithm that maintains a belief distribution over the optimal candidates; it gradually refines the distribution by adapting the covariance
matrix towards optimal candidates using feedback from the learned surrogate, and 3) REINFORCE \citep{williams1992simple}: first learns a proxy function and then optimizes the input space distribution by leveraging the proxy and the policy-gradient estimator. Since we are in the offline setting, for active methods like \ac{bo}, we follow the procedure of \cite{trabucco2022design} and optimize on a surrogate model trained on the offline dataset.

For $Q=128$ results in \cref{tab:main_table_128}, we additionally include ExPT \citep{nguyen2024expt} and BOOTGEN \citep{kim2024bootstrapped} for rough reference. Of note, these two methods have slightly different set-ups, and are therefore not directly comparable to other baselines and our method. ExPT \citep{nguyen2024expt} focuses extensively on few-shot learning scenarios with models pre-trained on larger datasets. BOOTGEN \citep{kim2024bootstrapped} focuses specifcially on optimizing biological sequences. 

\subsubsection{Additional Results \& Analysis}
\label{appx:db_results}
\paragraph{Proof-of-concept results for data efficiency in \ac{bbo}} First, we uniformly sample $N = 5000$ and $N=50$ points from the branin function domain for training. We observe that our method i) performs consistently better than BONET \citep{mashkaria2023generative} and gradient ascent, and ii) demonstrates less performance drop compared with BONET when dataset size is significantly reduced. Further, we use the D'Kitty dataset and withhold an $x\%$ size subsection of data, \ie, $0\%$ represents the full dataset. We have similar observation on these datasets summarized in \cref{tab:appx_reduced_branin,tab:appx_reduced_dkitty}, where our method shows much less performance drop compared with the transformer-based method BONET.

\begin{table}[!htbp]
\begin{minipage}{0.47\textwidth}
\centering
    \caption{\textbf{Results on (reduced size) Branin function dataset.}}
    \vskip 0.1in
    \resizebox{0.99\textwidth}{!}{
    \begin{tabular}{clllL}
    \toprule
    \rowcolor{white}
     \multicolumn{1}{c}{\textbf{N}}  
    &\multicolumn{1}{c}{$\cD$ (best)} 
    &\multicolumn{1}{c}{Grad. Ascent}
    &\multicolumn{1}{c}{BONET}
    &\multicolumn{1}{c}{Ours} \\
    \midrule
5000&$-6.199$&$-3.95 \pm 4.26$&$-1.79 \pm 0.84$&$\mathbf{-0.41 \pm 0.13}$\\
50&$-6.231$&$-4.64 \pm 3.17$&$-2.13 \pm 0.15$&$\mathbf{-0.43 \pm 0.12}$\\
    \bottomrule
    \end{tabular}}
\label{tab:appx_reduced_branin}
\end{minipage}\quad\quad
\begin{minipage}{0.47\textwidth}
    \centering
\caption{\textbf{Results on (reduced size) D'Kitty dataset.} $x\%$ indicates the proportion of training data withheld, \ie, $0\%$ represents the full dataset.}
    \vskip 0.1in
    \resizebox{0.9\textwidth}{!}{
    \begin{tabular}{cllL}
    \toprule
    \rowcolor{white}
     \multicolumn{1}{c}{}  
    &\multicolumn{1}{c}{$0\%$} 
    &\multicolumn{1}{c}{$90\%$}
    &\multicolumn{1}{c}{$99\%$}\\
    \midrule
BONET&$285.11 \pm 15.13$&$274.11 \pm 7.57$&$\mathbf{241.17 \pm 18.07}$\\
Ours&$293.33 \pm 7.45$&$289.67 \pm 7.86$&$\mathbf{283.65 \pm 9.23}$\\
    \bottomrule
\end{tabular}}
\label{tab:appx_reduced_dkitty}
\end{minipage}
\end{table}

\subsubsection{Additional results for $Q=256$}

\paragraph{Unnormalized results} For reference, we present the unnormalized results of \cref{tab:main_table_256} in \cref{tab:appx_un_table}.
\begin{table*}[!htbp]
\centering
\caption{\textbf{Results on design-bench.} \textbf{Unnormalized} results with a budget $Q = 256$.}
\vskip 0.1in
\small
\resizebox{0.99\textwidth}{!}{
\begin{tabular}{lllllll}
\toprule
\multicolumn{1}{c}{\textbf{BASELINE}}  &\multicolumn{1}{c}{\textbf{TFBIND8}} 
&\multicolumn{1}{c}{\textbf{TFBIND10}}
&\multicolumn{1}{c}{\textbf{CHEMBL}} 
&\multicolumn{1}{c}{\textbf{SUPERCON.}}
&\multicolumn{1}{c}{\textbf{ANT}}  
&\multicolumn{1}{c}{\textbf{D'KITTY}}\\
\midrule
$\cD$ (best) & $0.439$ & $0.00532$ & $383700.000$ & $74.0$ & $165.326$ & $199.363$\\
\midrule
GP-qEI & $0.824 \pm 0.086$ & $0.675 \pm 0.043$ & $ 387950.000 \pm 0.000$ & $92.686 \pm 3.944$ & $480.049 \pm 0.000$ & $213.816 \pm 0.000$ \\ 
CMA-ES & $0.933 \pm 0.035$ & $0.848 \pm 0.136$ & $ 388400.000 \pm 400.000$ & $90.821 \pm 0.661$ & $\mathbf{1016.409 \pm 906.407}$ & $4.700 \pm 2.230$\\ 
REINFORCE & $0.959 \pm 0.013$  & $0.692 \pm 0.113$ & $388400.000 \pm 2100.000$ & $89.027 \pm 3.093$ & $-131.907 \pm 41.003$ & $-301.866 \pm 246.284$\\ 
\midrule
Gradient Ascent & $\underline{0.981 \pm 0.015}$ & $0.770 \pm 0.154$ & $390050.000 \pm 2000.000$ & $93.252 \pm 0.886$ & $-54.955 \pm 33.482$ & $226.491 \pm 21.120$\\ 
COMs & $0.964 \pm 0.020$ & $0.750 \pm 0.078$ & $390200.000 \pm 500.000$ & $78.178 \pm 6.179$ & $540.603 \pm 20.205$ & $277.888 \pm 7.799$\\
BONET & $0.975 \pm 0.004$ & $0.855 \pm 0.139$ & $\underline{391000.000 \pm 1900.000}$ & $80.845 \pm 4.087$ & $567.042 \pm 11.653$ & $\underline{285.110 \pm 15.130}$\\
\midrule
CbAS & $0.958 \pm 0.018$ & $0.761 \pm 0.067$ & $ 389000.000 \pm 500.000$ & $83.178 \pm 15.372$ & $ 468.711 \pm 14.593$ & $213.917 \pm 19.863$\\ 
MINs & $0.938 \pm 0.047$ & $0.770 \pm 0.177$ & $390950.000 \pm 200.000$ & $89.469 \pm 3.227$ & $ 533.636 \pm 17.938$ & $272.675 \pm 11.069$\\
DDOM & $0.971 \pm 0.005$ & $\underline{0.885 \pm 0.367}$ & $387950.000 \pm 1050.000$ & $\underline{ 103.600 \pm 8.139}$ & $548.227 \pm 11.725$ & $250.529 \pm 10.992$\\
\rowcolor{gray}
Ours & $\mathbf{0.990 \pm 0.003}$ & $\mathbf{1.344 \pm 0.340}$ & $\mathbf{392149.225
\pm 3821.853959
}$ & $\mathbf{104.848 \pm 3.113}$ & $\underline{572.490 \pm 7.277}$ & $\mathbf{293.327 \pm 7.452}$\\
\bottomrule
\end{tabular}}
\label{tab:appx_un_table}
\end{table*}

\paragraph{$50$-th percentile results} We present the normalized results with $Q=256$ at $50$-th percentile to further demonstrate the effectiveness of our approach in \cref{tab:appx_q256_50_table}.
\begin{table*}[!htbp]
\centering
\caption{\textbf{Normalized results on design-bench.} $50$-th percentile results with $Q = 256$. Baseline numbers from \cite{mashkaria2023generative}. DDOM \citep{krishnamoorthy2023diffusion} does not report $50$-th percentile results; we are unable to find public model weights corresponding with the results in \cref{tab:main_table_256} for DDOM and therefore omit DDOM in this table.}
\vskip 0.1in
\small
\resizebox{\textwidth}{!}{
\begin{tabular}{llllllllc}
\toprule
\multicolumn{1}{c}{\textbf{BASELINE}}  &\multicolumn{1}{c}{\textbf{TFBIND8}} 
&\multicolumn{1}{c}{\textbf{TFBIND10}}
&\multicolumn{1}{c}{\textbf{CHEMBL}} 
&\multicolumn{1}{c}{\textbf{SUPERCON.}}
&\multicolumn{1}{c}{\textbf{ANT}}  
&\multicolumn{1}{c}{\textbf{D'KITTY}}
&\multicolumn{1}{c}{\textbf{MEAN SCORE}$^\uparrow$} 
&\multicolumn{1}{c}{\textbf{MNR}$^\downarrow$}\\
\midrule
GP-qEI &$ 0.443 \pm 0.004 $&$ 0.494 \pm 0.002 $&$ 0.299 \pm 0.002 $&$ 0.272 \pm 0.006 $&$ 0.754 \pm 0.004 $&$ 0.633 \pm 0.000 $&$ 0.491 \pm 0.016 $&$ 4.8$ \\
CMA-ES &$ 0.543 \pm 0.007$&$ 0.483 \pm 0.011 $&$ 0.376 \pm 0.004 $&$ -0.051 \pm 0.004 $&$ 0.685 \pm 0.018 $&$ 0.633 \pm 0.000 $&$ 0.466 \pm 0.020 $&$5.2$ \\
REINFORCE &$0.450 \pm 0.003 $&$0.472 \pm 0.000 $&$0.470 \pm 0.017 $&$0.146 \pm 0.009 $&$0.307 \pm 0.002 $&$0.633 \pm 0.000 $&$ 0.083 \pm 0.004$&$ 5.7$ \\
\midrule
Gradient Ascent &$0.572 \pm 0.024 $&$0.470 \pm 0.004 $&$0.463 \pm 0.022 $&$0.141 \pm 0.010 $&$ 0.637 \pm 0.148 $&$ 0.633 \pm 0.000 $&$ 0.478 \pm 0.029$&$ 5.1$ \\ 
COMs &$ 0.492 \pm 0.009 $&$0.472 \pm 0.012 $&$0.365 \pm 0.026 $&$0.525 \pm 0.018 $&$0.885 \pm 0.002 $&$0.633 \pm 0.000 $&$ 0.522 \pm 0.034 $&$ 4.6$ \\ 
BONET &$0.505 \pm 0.055 $&$0.496 \pm 0.037 $&$0.369 \pm 0.015 $&$\mathbf{0.819 \pm 0.032}$&$\mathbf{0.907 \pm 0.020}$&$0.630 \pm 0.000 $&$ 0.614 \pm 0.035 $&$3.4$ \\
\midrule
CbAS &$ 0.422 \pm 0.007 $&$ 0.458 \pm 0.001 $&$ 0.111 \pm 0.009 $&$ 0.384 \pm 0.010 $&$ 0.752 \pm 0.003 $&$ 0.633 \pm 0.000 $&$ 0.436 \pm 0.008 $&$ 6.5$ \\
MINs &$0.425 \pm 0.011 $&$0.471 \pm 0.004 $&$0.330 \pm 0.011 $&$0.651 \pm 0.010 $&$0.890 \pm 0.003 $&$0.633 \pm 0.000 $&$ 0.547 \pm 0.005 $&$ 4.8$ \\ 
\rowcolor{gray}
Ours &$\mathbf{0.896 \pm 0.033} $&$ \mathbf{0.507 \pm 0.012} $&$ \mathbf{0.373 \pm 0.002}$&$0.715 \pm 0.004 $&$ \mathbf{0.907 \pm 0.008}$&$\mathbf{0.630 \pm 0.000}$&$ \mathbf{0.672 \pm 0.009}$&$\mathbf{1.2}$\\
\bottomrule
\end{tabular}}
\label{tab:appx_q256_50_table}
\end{table*}

\subsubsection{Additional results for $Q=128$}
\paragraph{$25$-th, $50$-th, $75$-th and $100$-th percentile results} We present the normalized results with $Q=128$ at $100$-th and $50$-th percentile to further demonstrate the effectiveness of our approach in \cref{tab:main_table_128,tab:appx_q128_50_table}. Additionally, we provide results at $25$-th and $75$-th percentiles for reference in \cref{tab:appx_q128_25_75_table}.

\begin{table*}[!htb]
\centering
\caption{\textbf{Normalized design-bench results with $Q = 128$ at 100-th percentile.} Baseline numbers from \cite{kim2024bootstrapped,yuan2024importance,nguyen2024expt,chen2024parallel}. \textbf{ChEMBL} dataset is excluded following baseline set-ups. * indicates that the baseline has slightly different set-ups; the numbers are for rough reference (see \cref{appx:db_baselines} for more details).}
\vskip 0.1in
\resizebox{0.9\textwidth}{!}{
\begin{tabular}{llllllllc}
\toprule
\multicolumn{1}{c}{\textbf{BASELINE}}  &\multicolumn{1}{c}{\textbf{TFBIND8}} 
&\multicolumn{1}{c}{\textbf{TFBIND10}}
&\multicolumn{1}{c}{\textbf{SUPERCON.}}
&\multicolumn{1}{c}{\textbf{ANT}}  
&\multicolumn{1}{c}{\textbf{D'KITTY}}  
&\multicolumn{1}{c}{\textbf{MEAN SCORE}$^\uparrow$} 
&\multicolumn{1}{c}{\textbf{MNR}$^\downarrow$}\\
\midrule

$\cD$ (best) & $0.439$ & $0.467$ & $0.399$ & $0.565$ & $0.884$ & {-} & {-}\\
\midrule
GP-qEI & $0.798 \pm 0.083$ & $0.652 \pm 0.038$ 
       & $0.402 \pm 0.034$ & $0.819 \pm 0.000$
       & $0.896 \pm 0.000$ & $0.713$ & $14.2$ \\
CMA-ES & $0.953 \pm 0.022$ & $0.670 \pm 0.023$
       & $0.465 \pm 0.024$ & $\mathbf{1.214 \pm 0.732}$ & $0.724 \pm 0.001$ & $0.805$ & $8.6$ \\
REINFORCE & $0.948 \pm 0.028$ & $0.663 \pm 0.034$ & $0.481 \pm 0.013$ & $0.266 \pm 0.032$ & $0.562 \pm 0.196$ & $0.584$ & $12.4$ \\
\midrule
*BOOTGEN &$\underline{0.979 \pm 0.001}$& {-}&{-}&{-}&{-}&{-}&{-} \\
*ExPT &$0.933 \pm 0.036$&$0.677 \pm 0.048$&{-}&$\underline{0.970 \pm 0.004}$&$\mathbf{0.973 \pm 0.005}$&{-}&{-} \\
\midrule
Grad. Ascent & $0.886 \pm 0.035$ & $0.647 \pm 0.021$ & $0.495 \pm 0.011$ & $0.934 \pm 0.011$ & $ 0.944 \pm 0.017$ & $0.781$ & $10.2$ \\
COMS & $0.496 \pm 0.065$&$0.622 \pm 0.003$&$0.491 \pm 0.028$&$0.856 \pm 0.040$&$0.938 \pm 0.015$&$0.681$&$13.4$ \\ 
BONET & $0.911 \pm 0.005$&$\underline{0.756 \pm 0.006}$&$0.500 \pm 0.002$& $0.927 \pm 0.002$&$0.954 \pm 0.000$& $0.810$ & $6.8$ \\
ROMA &$0.924 \pm 0.040$&$0.666 \pm 0.035$&$0.510 \pm 0.015$&$0.917 \pm 0.030$&$0.927 \pm 0.013$&$0.789$&$8.2$ \\
NEMO &$0.943 \pm 0.005$&$0.711 \pm 0.021$&$0.502 \pm 0.002$&$0.958 \pm 0.011$&$0.954 \pm 0.007$&$0.814$&$5.2$ \\
BDI &$0.870 \pm 0.000$&$ 0.605 \pm 0.000$&$0.513 \pm 0.000$&$0.906 \pm 0.000$&$0.919 \pm 0.000$&$0.763$&$11.6$ \\
IOM &$0.878 \pm 0.069$&$0.648 \pm 0.023$&$\underline{0.520 \pm 0.018}$&$0.918 \pm 0.031$&$0.945 \pm 0.012$&$0.782$&$8.6$ \\
ICT &$0.958 \pm 0.008$&$0.691 \pm 0.023$&$0.503 \pm 0.017$&$0.961 \pm 0.007$&$\underline{0.968 \pm 0.020}$&$0.816$&$3.4$ \\
Tri-mtring &$\underline{0.970 \pm 0.001}$&$0.722 \pm 0.017$&$0.514 \pm 0.018$&$0.948 \pm 0.014$&$0.966 \pm 0.010$&$\underline{0.824}$&$\underline{3}$ \\
\midrule
CbAS & $0.927 \pm 0.051$&$0.651 \pm 0.060$&$0.503 \pm 0.069$&$0.876 \pm 0.031$&$0.892 \pm 0.008$&$0.770$ & $10.8$ \\
Auto.CbAS &$0.910 \pm 0.044$&$0.630 \pm 0.045$&$0.421 \pm 0.045$&$0.882 \pm 0.045$&$0.906 \pm 0.006$&$0.745$&$13$ \\
MINs & $0.905 \pm 0.052$&$0.616 \pm 0.021$ & $0.499 \pm 0.017$&$0.445 \pm 0.080$&$0.892 \pm 0.011$&$0.671$ & $13.6$ \\
DDOM & $0.957 \pm 0.006$&$0.657 \pm 0.006$&$0.495 \pm 0.012$&$0.940 \pm 0.004$&$0.935 \pm 0.001$&$0.797$ & $8$ \\
\rowcolor{gray}
Ours &$\mathbf{0.983 \pm 0.011}$&$\mathbf{0.771 \pm 0.096}$&$\mathbf{0.559 \pm 0.024}$&$0.954 \pm 0.008$&$0.954 \pm 0.004$&$\mathbf{0.844}$&$\mathbf{2}$ \\
\bottomrule
\end{tabular}}    
\vskip 0.1in
\label{tab:main_table_128}
\end{table*}

\begin{table*}[htb!]
    \centering
    \caption{\textbf{Normalized design-bench $25$-th and $75$-th results with $Q = 128$}.}
    \vskip 0.1in
    \resizebox{0.7\textwidth}{!}{
    \begin{tabular}{clllll}
    \toprule
     \multicolumn{1}{c}{\textbf{Percentile}}  
    &\multicolumn{1}{c}{\textbf{TFBIND8}} 
    &\multicolumn{1}{c}{\textbf{TFBIND10}}
    &\multicolumn{1}{c}{\textbf{SUPERCON.}}
    &\multicolumn{1}{c}{\textbf{ANT}}  
    &\multicolumn{1}{c}{\textbf{D'KITTY}}\\
    \midrule
$25$-th&$0.635 \pm 0.020$&$0.453 \pm 0.007$&$0.351 \pm 0.019$&$0.527 \pm 0.033$&$0.878 \pm 0.002$\\
$75$-th&$0.873 \pm 0.006$&$0.945 \pm 0.012$&$0.409 \pm 0.013$&$0.832 \pm 0.012$&$0.910 \pm 0.002$\\
    \bottomrule
    \end{tabular}}
    \label{tab:appx_q128_25_75_table}
\end{table*}

\begin{table*}[ht!]
\centering
\caption{\textbf{Normalized design-bench $50$-th percentile results with $Q = 128$ at 100-th percentile.} Baseline numbers from \cite{kim2024bootstrapped,yuan2024importance,nguyen2024expt,chen2024parallel}. \textbf{ChEMBL} dataset is excluded following baseline set-ups. * indicates that the baseline has slightly different set-ups; the numbers are for rough reference (see \cref{appx:db_baselines} for more details).}
\vskip 0.1in
\resizebox{0.9\textwidth}{!}{
\begin{tabular}{llllllllc}
\toprule
\multicolumn{1}{c}{\textbf{BASELINE}}  &\multicolumn{1}{c}{\textbf{TFBIND8}} 
&\multicolumn{1}{c}{\textbf{TFBIND10}}
&\multicolumn{1}{c}{\textbf{SUPERCON.}}
&\multicolumn{1}{c}{\textbf{ANT}}  
&\multicolumn{1}{c}{\textbf{D'KITTY}}  
&\multicolumn{1}{c}{\textbf{MEAN SCORE}$^\uparrow$} 
&\multicolumn{1}{c}{\textbf{MNR}$^\downarrow$}\\
\midrule
$\cD$ (best)&$0.439$&$0.467$&$0.399$&$0.565$&$0.884$& {-} & {-}\\
\midrule
GP-qEI&$0.439 \pm 0.000$&$0.467 \pm 0.000$&$0.300 \pm 0.015$&$0.567 \pm 0.000$&$0.883 \pm 0.000$&$0.531$&$11.2$ \\
CMA-ES&$0.537 \pm 0.014$&$0.484 \pm 0.014$&$0.379 \pm 0.003$&$-0.045 \pm 0.004$&$0.684 \pm 0.016$&$0.408$&$10.2$ \\
REINFORCE&$0.462 \pm 0.021$&$0.475 \pm 0.008$&$0.463 \pm 0.016$&$0.138 \pm 0.032$&$0.356 \pm 0.131$&$0.379$&$10.8$ \\
\midrule
*BOOTGEN&$0.833 \pm 0.007$&{-}&{-}&{-}&{-}&{-}&{-}\\
*ExPT&$0.473 \pm 0.014$&$0.477 \pm 0.014$&{-}&$\mathbf{0.705 \pm 0.018}$&$\mathbf{0.902 \pm 0.006}$&{-}&{-}\\
\midrule
Grad. Ascent&$0.532 \pm 0.017$&$0.529 \pm 0.027$&$0.339 \pm 0.015$&$0.564 \pm 0.014$&$0.877 \pm 0.005$&$0.568$&$7.4$ \\
COMS&$0.439 \pm 0.000$&$0.466 \pm 0.002$&$0.316 \pm 0.022$&$0.568 \pm 0.002$&$0.883 \pm 0.002$&$0.534$&$10.6$\\
BONET&$0.505 \pm 0.004$&$0.465 \pm 0.002$&$\mathbf{0.470 \pm 0.004}$&$0.620 \pm 0.003$&$0.897 \pm 0.000$&$0.591$&$5.4$\\
ROMA&$0.555 \pm 0.020$&$0.512 \pm 0.020$&$0.372 \pm 0.019$&$0.479 \pm 0.041$&$0.853 \pm 0.007$&$0.554$&$7.6$\\
NEMO&$0.548 \pm 0.017$&$0.516 \pm 0.020$&$0.322 \pm 0.008$&$0.593 \pm 0.000$&$0.885 \pm 0.000$&$0.539$&$10$\\
BDI&$0.439 \pm 0.000$&$0.476 \pm 0.000$&$0.412 \pm 0.000$&$0.474 \pm 0.000$&$0.855 \pm 0.000$&$0.546$&$9.4$\\
IOM&$0.439 \pm 0.000$&$0.477 \pm 0.010$&$0.352 \pm 0.021$&$0.509 \pm 0.033$&$ 0.876 \pm 0.006$&$0.531$&$9.8$\\
ICT&$0.551 \pm 0.013$&$\mathbf{0.541 \pm 0.004}$&$0.399 \pm 0.012$&$0.592 \pm 0.025$&$0.874 \pm 0.005$&$0.591$&$5.2$\\
Tri-mtring&$0.609 \pm 0.021$&$0.527 \pm 0.008$&$0.355 \pm 0.003$&$0.606 \pm 0.007$&$0.886 \pm 0.001$&$0.597$&$4.2$\\
\midrule
CbAS&$0.428 \pm 0.010$&$0.463 \pm 0.007$&$0.111 \pm 0.017$&$0.384 \pm 0.016$&$0.753 \pm 0.008$&$0.428$&$15$\\
Auto.CbAS&$0.419 \pm 0.007$&$0.461 \pm 0.007$&$0.131 \pm 0.010$&$0.364 \pm 0.014$&$0.736 \pm 0.025$&$0.422$&$15.8$\\
MINs&$0.421 \pm 0.015$&$0.468 \pm 0.006$&$0.336 \pm 0.016$&$0.618 \pm 0.040$&$0.887 \pm 0.004$&$0.546$&$8.8$\\
DDOM&$0.553 \pm 0.002$&$0.488 \pm 0.001$&$0.295 \pm 0.001$&$0.590 \pm 0.003$&$0.870 \pm 0.001$&$0.559$&$8.6$\\
\rowcolor{gray}
Ours&$\mathbf{0.873 \pm 0.006}$&$0.499 \pm 0.002$&$0.370 \pm 0.004$&$\mathbf{0.704 \pm 0.013}$&$\mathbf{0.897 \pm 0.003}$&$\mathbf{0.669}$&$\mathbf{3}$ \\
\bottomrule
\end{tabular}}    
\label{tab:appx_q128_50_table}
\end{table*}

\subsubsection{Extended discussion for ablation study} 
As mentioned in \cref{sec:exp_bbo}, we provide ablative results on noise magnitude $M$, number of intermediate stages $K$ and query budget $Q$ in \cref{tab:abl_table,tab:abl_K_M,tab:abl_Q}. Of note, the $K=1, M=100$ row in \cref{tab:abl_table} means learning a \ac{lebm} w/ \cref{equ:n2ce_obj} alone, w/o ratio decomposition. We can see that the variant delivers significantly better results than \ac{mle}-\ac{lebm} w/ \ac{svgd} sampler. This indicates the effectiveness of the noise-intensified objective. $K=6, M=1$ row means learning with the original \ac{nce} objective but with ratio decomposition, w/o intensifying the noise distributions. We can see that simply combining the \ac{nce} objective with variational learning of \ac{lebm} only achieves inferior performance compared with learning \ac{lebm} with the \ac{n2ce} objective. 

We further provide detailed ablation of $M$ in \cref{tab:abl_K_M}. We can see that greater $M$, \eg, $M=100$ or $M=1000$ is significantly better than smaller $M$, \eg, $M=1$ or $M=10$. However, larger $M$ incurs larger memory consumption based on our implementation in \cref{appx:tr_details_n_arch}. It also has a pitfall of leading to larger approximation variance based on our analysis in \cref{prop:grad_finite_ver}. We therefore choose $M=100$ for all experiments. These results further confirm our assumption about the noise magnitude.
We can see from \cref{tab:abl_K_M} that larger number of intermediate stages $K$ typically brings better performance. These results demonstrate the need for ratio decomposition, aligned with our analysis in \cref{sec:error_n_reg}. In addition, since we are using the same network architecture for ablating different $K$s, the network capacity is fixed. Larger number of intermediate stages may require a larger network to perform well (circumventing competition between stages), as we are learning the ensemble of ratio estimators using the shared network. \ac{svgd} in the data space typically requires a relatively large population of initial samples to perform well. In our set-up, we set the query budget $Q$ to the population size and run \ac{svgd} in the latent space. We can see from \cref{tab:abl_Q} that our method is robust to $Q$ by pulling \ac{svgd} sampling back to the latent space. 

\begin{table}[!htb]
\begin{minipage}{0.47\textwidth}
\caption{\textbf{Ablation of $K, M$ on SPRCON. (avg. over 5 runs).}}
\label{tab:abl_K_M}
\begin{center}
\begin{small}
\begin{sc}
\resizebox{\linewidth}{!}{
\begin{tabular}{cccCc}
\toprule
\rowcolor{white}
\multirow{2.5}{*}{$M=100$} & $K=2$ & $K=4$ & {\color{tgray}$K=6$} & $K=8$ \\
\cmidrule(l){2-5}
& $0.427_{\pm 0.013}$
& $0.557_{\pm 0.029}$
& $\mathbf{0.567}_{\pm 0.017}$
& $0.527_{\pm 0.049}$ \\
\midrule
\rowcolor{white}
\multirow{2.5}{*}{$K=6$} & $M=1$ & $M=10$ & {\color{tgray}$M=100$} & $M=1000$ \\
\cmidrule(l){2-5}
& $0.497_{\pm 0.044}$
& $0.486_{\pm 0.023}$
& $\mathbf{0.567}_{\pm 0.017}$
& $0.548_{\pm 0.046}$ \\
\bottomrule
\end{tabular}}
\end{sc}
\end{small}
\end{center}
\vskip -0.1in
\end{minipage} \quad\quad
\begin{minipage}{0.47\textwidth}
\caption{\textbf{Ablation of $Q$ on D'Kitty (avg. over 5 runs).}}
\label{tab:abl_Q}
\begin{center}
\begin{small}
\begin{sc}
\resizebox{\linewidth}{!}{
\begin{tabular}{ccccC}
\toprule
\rowcolor{white}
$Q=2$ & $Q=8$ & $Q=32$ & $Q=128$ & {\color{tgray}$Q=256$} \\
\midrule
$0.906_{\pm 0.009}$
& $0.925_{\pm 0.013}$
& $0.937_{\pm 0.006}$
& $0.954_{\pm 0.004}$ 
& $\mathbf{0.961}_{\pm 0.006}$ \\
\bottomrule
\end{tabular}}
\end{sc}
\end{small}
\end{center}
\vskip -0.1in
\end{minipage}
\end{table}

\paragraph{Random baselines} 
One valid concern is that our framework solves the offline \ac{bbo} problem by simply memorizing the best points in the offline dataset and proposing new points close to those best points during evaluation utilizing the randomness in the sampling process. To see whether this is the case, we follow \citet{mashkaria2023generative} to perform a simple experiment on the \textbf{D’Kitty} task. We include random baselines by similarly choosing a small hypercube domain around the optimal point in the offline dataset. These baselines then uniformly sample 256 random points in the cube as the proposed candidates; this can be seen as a offline dataset oracle w/ noise model. In \cref{tab:appx_randb}, we show the results for different widths of this hypercube ranging from $0$ to $0.1$. $0$ width means returning the best point in the offline dataset.
We can see that the best result (226.10) from these noisy oracle models is significantly lower than the result from our model (304.36). We can also see from the trend in max, mean and std. values that simply extending the width of cube for random search is hopeless for solving the offline \ac{bbo} task. This is very likely due to the high-dimensionality and highly-multimodal nature of the black-box function input space. The comparison between these random baselines and our method suggests that our method does find an informative latent space for effectively extrapolating beyond the offline dataset best designs. The prior model in our formulation provides meaningful gradient for exploring the underlying high-function-value manifold.

\begin{table}[!htb]
\caption{\textbf{Random baseline results on the D'Kitty dataset.} Results from 5 runs.}
\label{tab:appx_randb}
\vskip 0.1in
\begin{center}
\begin{small}
\begin{sc}
\begin{tabular}{ccccccC}
\toprule
\rowcolor{white}
Cube Width & $0.00$ & $0.005$ & $0.01$ 
           & $0.05$ & $0.1$ & {\color{tgray} Ours} \\
\midrule
Max$^\uparrow$  & $199.23$ & $212.66$ & $222.44$
     & $226.10$ & $209.00$ & $\mathbf{304.36}$ \\ 
Mean$^\uparrow$ & $199.23$ & $190.68$ & $182.13$ 
     & $-169.62$ & $-368.71$ & $\mathbf{206.76}$\\ 
Std.$^\downarrow$ & $0.00$ & $9.28$ & $12.21$ 
     & $331.00$ & $261.37$ & $\mathbf{63.26}$ \\ 
\bottomrule
\end{tabular}
\end{sc}
\end{small}
\end{center}
\vskip -0.1in
\end{table}

\section*{Use of Large Language Models}
During the preparation of this manuscript, we used Large Language Models (LLMs) to assist with language polishing, organization of sections, and improving clarity of exposition. All technical contributions, including theoretical results, experimental design, and analysis, were conceived and carried out by the authors. The LLMs were not used to generate novel research content, proofs, or experimental results.


\end{document}